\def\eqref#1{eq.~\ref{#1}}
\def\1{\bm{1}}
\def\eps{{\epsilon}}
\DeclareMathAlphabet{\mathsfit}{\encodingdefault}{\sfdefault}{m}{sl}
\SetMathAlphabet{\mathsfit}{bold}{\encodingdefault}{\sfdefault}{bx}{n}
\def\gA{{\mathcal{A}}}
\def\gE{{\mathcal{E}}}
\def\gN{{\mathcal{N}}}
\def\gO{{\mathcal{O}}}
\def\gS{{\mathcal{S}}}
\def\gU{{\mathcal{U}}}
\def\gW{{\mathcal{W}}}
\newcommand{\E}{\mathbb{E}}
\newcommand{\R}{\mathbb{R}}
\newcommand{\KL}{D_{\mathrm{KL}}}
\newcommand{\Var}{\mathrm{Var}}
\pgfplotsset{compat=1.17}
\definecolor{mygray}{gray}{0.5}
\newcolumntype{g}{>{\color{mygray}}c}
\newcommand{\N}{\mathbb{N}}
\newcommand{\X}{X}
\renewcommand{\P}{\mathbbm{P}}
\newcommand{\Q}{\mathbbm{Q}}
\newcommand{\W}{\mathbbm{W}}
\newcommand{\loss}{D}
\newcommand{\dd}{\mathrm{d}}
\newcommand{\eg}{\emph{e.g.}}
\newcommand{\ie}{\emph{i.e.}}
\DeclareRobustCommand{\cev}[1]{%
  {\mathpalette\do@cev{#1}}%
}
\newcommand{\do@cev}[2]{%
  \vbox{\offinterlineskip
    \sbox\z@{$\m@th#1 x$}%
    \ialign{##\cr
      \hidewidth\reflectbox{$\m@th#1\vec{}\mkern4mu$}\hidewidth\cr
      \noalign{\kern-\ht\z@}
      $\m@th#1#2$\cr
    }%
  }%
}
\theoremstyle{plain}
\newtheorem{theorem}{Theorem}[section]
\newtheorem{proposition}[theorem]{Proposition}
\newtheorem{lemma}[theorem]{Lemma}
\theoremstyle{definition}
\newtheorem{definition}[theorem]{Definition}
\crefname{section}{\S\@gobble}{\S\@gobble}
\crefname{subsection}{\S\@gobble}{\S\@gobble}
\crefname{proposition}{Prop.}{Props.}
\crefname{figure}{Fig.}{Figs.}
\def\paragraph{\@startsection{paragraph}{4}{\z@}{0ex}{-1em}{\normalsize\bf}}
\renewcommand{\eqref}[1]{(\ref{#1})}
\title{From discrete-time policies \\to continuous-time diffusion samplers: \\Asymptotic equivalences and faster training}
\author{Julius Berner\textsuperscript{*}
\email\tt jberner@nvidia.com \\
\addr California Institute of Technology\\NVIDIA
\AND
Lorenz Richter\textsuperscript{*}
\email\tt richter@zib.de \\
\addr Zuse Institute Berlin\\dida Datenschmiede GmbH
\AND
Marcin Sendera\textsuperscript{*}
\email\tt marcin.sendera@uj.edu.pl \\
\addr Jagiellonian University
\\Mila, Universit\'e de Montr\'eal
\AND
Jarrid Rector-Brooks
\email\tt jarrid.rector-brooks@mila.quebec \\
\addr Mila, Universit\'e de Montr\'eal\\Dreamfold\\California Institute of Technology
\AND
Nikolay Malkin
\email\tt nmalkin@ed.ac.uk \\
\addr University of Edinburgh\\CIFAR Fellow, Learning in Machines and Brains
}
\begin{document}

\maketitle

\begin{abstract}
We study the problem of training neural stochastic differential equations, or diffusion models, to sample from a Boltzmann distribution without access to target samples. Existing methods for training such models enforce time-reversal of the generative and noising processes, using either differentiable simulation or off-policy reinforcement learning (RL). We prove equivalences between families of objectives in the limit of infinitesimal discretization steps, linking entropic RL methods (GFlowNets) with continuous-time objects (partial differential equations and path space measures). We further show that an appropriate choice of coarse time discretization during training allows greatly improved sample efficiency and the use of time-local objectives, achieving competitive performance on standard sampling benchmarks with reduced computational cost.
\end{abstract}

\section{Introduction}
\looseness=-1
We consider the problem of sampling from a distribution on $\R^d$ with density $p_{\rm target}$, which is described by an unnormalized energy model $p_{\rm target}(x)=\exp(-\gE (x))/Z$ with $Z=\int_{\R^d}\exp(-\gE (x))\,\dd x$. We have access to $\gE$ but not to the normalizing constant $Z$ or to samples from $p_{\rm target}$. 
This problem is ubiquitous in Bayesian statistics and machine learning and has been an object of study for decades, with Monte Carlo methods \citep{duane1987hybrid,roberts1996exponential,hoffman2014no,leimkuhler2014longtime,lemos2023ggns} recently being complemented by deep generative models \citep{albergo2019flow,noe2019boltzmann,gabrie2021efficient,midgley2022flow,akhoundsadegh2024iterated}.

\begin{wrapfigure}[17]{r}{0.45\linewidth}
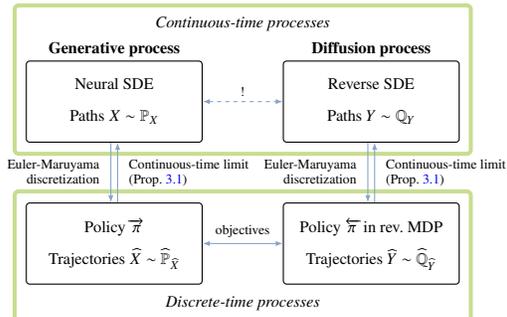

% \vspace*{-1em}
\includestandalone[width=1.1\linewidth]{figures/figure1b}\\[-1.75em]
\caption{The problem of making continuous-time forward and reverse processes determine the same path space measure is approximated by matching distributions over discrete-time trajectories.}
\label{fig:figure1b}
\end{wrapfigure}

\looseness=-1
Building upon the success of diffusion models in data-driven generative modeling \citep[][\textit{inter alia}]{sohl2015diffusion,ho2020ddpm,dhariwal2021diffusion,rombach2021high}, recent work \citep[\eg,][]{zhang2021path,berner2022optimal,vargas2023denoising,richter2024improved,vargas2024transport,sendera2024improved} has proposed solutions to this problem that model generation as the reverse of a diffusion (noising) process in discrete or continuous time (\Cref{fig:figure1b}). Thus $p_{\rm target}$ is modeled by gradually transporting samples, by a sequence of stochastic transitions, from a simple prior distribution $p_{\rm prior}$ (\eg, a Gaussian) to the target distribution. 
When a dataset of samples from $p_{\rm target}$ is given, diffusion models are trained using a score matching objective equivalent to a variational bound on data log-likelihood \citep{song2021maximum}. The problem is more challenging when we have no samples but can only query the energy function, as training methods necessarily involve simulation of the generative process. (We survey additional related work in \Cref{sec:rw}.)

In continuous time, we assume the generative process takes the form of a stochastic differential equation (SDE) {(with initial condition $p_{\rm prior}$ and diffusion coefficient $\sigma$)}:
\begin{equation}\label{eq:sde}
    \mathrm dX_t={\overrightarrow{\mu}}(X_t,t)\,\mathrm dt+\sigma(t)\,\mathrm dW_t,\quad X_0\sim p_{\rm prior}.
\end{equation}
When the drift $\mu$ is given by a parametric model, such as a neural network, \eqref{eq:sde} is called a \emph{neural SDE} \citep{tzen2019neural,kidger2021neural,song2021score}. The goal is to fit the parameters so as to make the distribution of $X_1$ induced by the initial conditions and the SDE \eqref{eq:sde} close to $p_{\rm target}$. 

In discrete time, we assume the generative process is described by a Markov chain with transition kernels $\overrightarrow\pi_{n}(\widehat{X}_{n+1}\mid \widehat{X}_n)$, $n=0,\ldots,N-1$, and initial distribution $\widehat{X}_0\sim p_{\rm prior}$. The goal is to learn the transition probabilities $\overrightarrow\pi_{n}$ so as to make the distribution of $\widehat{X}_N$ close to $p_{\rm target}$. This is the setting of stochastic normalizing flows \citep{hagemann2023generalized}, which are, in turn, a special case of (continuous) generative flow networks \cite[GFlowNets;][]{bengio2021flow,lahlou2023theory}.

Training objectives for both the continuous-time and discrete-time processes are typically based on minimization of a bound on the divergence between the distributions over trajectories induced by the generative process and by the target distribution together with the noising process. These objectives may rely on differentiable simulation of the generative process \citep{li2020scalable,kidger2021efficient,zhang2021path} or on off-policy reinforcement learning (RL), which optimizes objectives depending on trajectories obtained through exploration \citep{nusken2021solving,malkin2023gflownets}. Objectives may further be classified as global (involving the entire trajectory) or local (involving a single transition). Common objectives and the relationships among them are summarized in \Cref{fig:figure1a}.

\begin{figure}[t]
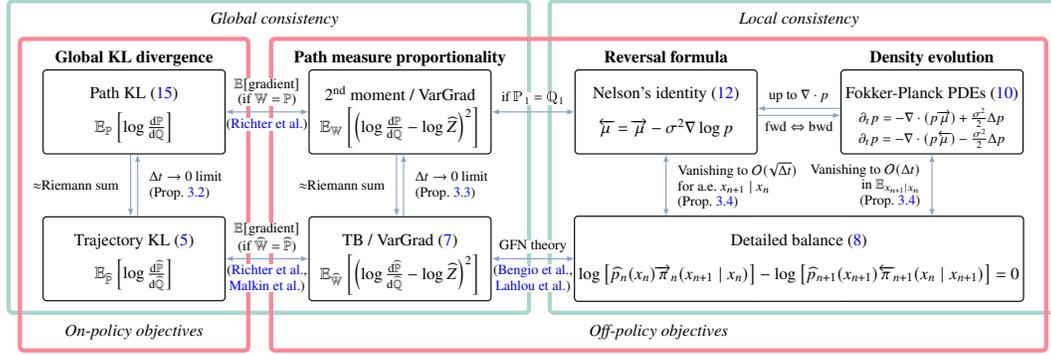

\includestandalone[width=\linewidth]{figures/figure1a}\\[-1.5em]
\caption{Training objectives for neural SDEs (top row) and their approximations by objectives for discrete-time policies (bottom row). On-policy objectives minimize a divergence by differentiating through SDE integration, while off-policy objectives enforce local or global consistency constraints. Our results explain the behavior of discrete-time objectives as the time discretization becomes finer.} 
\label{fig:figure1a}
\end{figure}

Any SDE determines a discrete-time policy when using a time discretization, such as the Euler-Maruyama integration scheme; conversely, in the limit of infinitesimal time steps, the discrete-time policy obtained in this way approaches the continuous-time process \citep{kloeden1992stochastic}. The question we study in this paper is how the training objectives for continuous-time and discrete-time processes are related in the limit of infinitesimal time steps. We formally connect RL methods to stochastic control and dynamic measure transport with the following theoretical contributions:
\begin{enumerate}[left=0pt,label=(\arabic*),series=cont,nosep,itemsep=2pt]
\item We show that global objectives in discrete time converge to objectives that minimize divergences between
path space measures induced by the forward and reverse processes in continuous time (\Cref{prop:tb_asymptotics}).
\item We show that {local constraints enforced by GFlowNet training objectives asymptotically approach partial differential equations} that govern the time evolution of the marginal densities of the SDE under the generative and noising processes (\Cref{prop:db_implies_compatibility}). 
\end{enumerate}
These results motivate the hypothesis that an appropriate choice of time discretization during training can allow for greatly improved sample efficiency over the standard practice of training diffusion samplers in a fine, uniform time discretization. Training with shorter trajectories obtained by coarse time discretizations would further allow the use of time-local objectives without the computationally expensive bootstrapping techniques that are necessary when training with long trajectories. Confirming this hypothesis, we make the following empirical contribution:
\begin{enumerate}[cont]
\item In experiments on standard sampling benchmarks, we show that {training with \emph{nonuniform} time discretizations much coarser than those used for inference achieves similar performance to state-of-the-art methods}, at a fraction of the computational cost (\Cref{fig:timing_main}). 
\end{enumerate}

\section{Dynamic measure transport in discrete and continuous time}
\label{sec:measure_transport}

Recall that our goal is to sample from a target distribution $p_\mathrm{target}=\frac1Z \exp(-\gE(x))$ given by a continuous energy function $\gE\colon\R^d\to\R$. 
To achieve this goal, we present approaches using discrete-time policies in the framework of Markov decision processes (MDPs) in \Cref{sec:discrete_time_policies} and continuous-time processes in the context of neural SDEs in \Cref{sec:neural_sdes}. In particular, we will draw similarities between the two approaches and show how time discretizations of neural SDEs give rise to specific policies in MDPs in \Cref{sec:from_sde_to_mdp}. This allows us to rigorously analyze the asymptotic behavior of corresponding distributions and divergences in \Cref{sec:asymptotic_convergence}. Note that our general assumptions can be found in \Cref{sec:ass}.

Our exposition synthesizes the definitions for MDP policies \citep{bengio2021foundations,lahlou2023theory}, results on neural SDEs for sampling~\citep{richter2024improved,vargas2024transport}, and PDE perspectives~\citep{mate2023learning,sun2024dynamical}. The results in \Cref{sec:asymptotic_convergence} extend classical results on SDE approximations (see, \eg, \citet{kloeden1992stochastic}) to objectives for diffusion-based samplers.

\subsection{Discrete-time setting: Stochastic control policies}
\label{sec:discrete_time_policies}

\begin{wrapfigure}[8]{r}{0.45\linewidth}
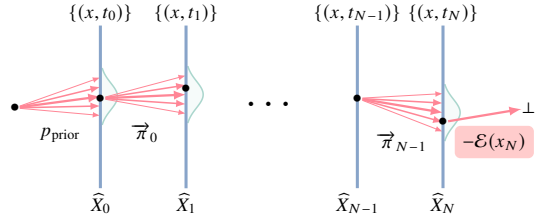

\vspace*{-2.5em}
\includestandalone[width=1.0\linewidth]{figures/mdp}
\caption{The MDP and policy representing the process $\widehat{\P}$, a distribution over $\widehat{X}=(\widehat{X}_0,\dots,\widehat{X}_N)$.}
\label{fig:mdp}
\end{wrapfigure}

A discrete-time Markovian process $\widehat{X}$ with density $\widehat{\P}(\widehat{X})$ -- a distribution over $\R^d$-valued variables $\widehat{X}_0,\dots,\widehat{X}_N$ -- can be identified with a policy
$\overrightarrow{\pi}$ in the deterministic Markov decision process (MDP) $(\gS,\gA,T,R)$ depicted in \Cref{fig:mdp}, given by
\begin{equation}
    \begin{split}
    &\overrightarrow{\pi}(a\mid\bullet)=\widehat{\P}(\widehat{X}_{0}=a) = p_{\rm prior}(a), \\
    &\overrightarrow{\pi}_n(a\mid (x,t_n))=\widehat{\P}(\widehat{X}_{n+1}=a\mid\widehat{X}_{n}=x).
    \end{split}
    \label{eq:policy}
\end{equation}
We sometimes write $\overrightarrow{\pi}_{n}(\cdot\mid x)$ for $\overrightarrow{\pi}_n(\cdot\mid(x,t_n))$ for convenience. We relegate formal definitions to \Cref{sec:mdp}; in short, the states are pairs of space and time coordinates $(x,t_n)$ (together with abstract initial and terminal states), actions represent steps from $\widehat{X}_n$ to $\widehat{X}_{n+1}$ (taking action $a$ leads to state $(a,t_{n+1})$), and the reward for terminating from a state $(x,t_N)$ is set to $-\gE(x)$. The learning problem is to find $\overrightarrow{\pi}$ whose induced distribution over $\widehat{X}_N$ is the Boltzmann distribution of the reward.

This learning problem differs from that of standard reinforcement learning, where one wishes to \emph{maximize} the expected reward with respect to a policy in the MDP. However, it is close to the setting of maximum-entropy reinforcement learning \citep[MaxEnt RL;][]{ziebart2010maxent}, in which the objective includes the policy entropy as a regularization term. With such regularization, the optimal policy samples the Boltzmann distribution of the negative reward, which is the target distribution in our setting. This observation is the basis for the connection between control (policy optimization) and inference (sampling) \citep{levine2018reinforcement}. (See \citet{eysenbach2022maximum} for the connection to KL-constrained or adversarial policy optimization, \citet{zhang2021path,domingoenrich2025adjoint} for the related derivation of sampling via stochastic control with a diffusion policy, and \citet{tiapkin2023generative,deleu2024discrete} for the connections between MaxEnt RL and off-policy objectives for training samplers of discrete sequentially constructed objects. In short, the time-local detailed balance objective \eqref{eq:db_loss} minimizes the error in a soft Bellman equation, while trajectory-level objectives \eqref{eq:squared_divergences} minimize a path consistency objective \citep{nachum2017bridging}.)

\paragraph{Distributions over trajectories.} The possible trajectories in the MDP starting at $\bullet$ and ending in $\bot$ have the form $\bullet\rightarrow (x_{t_0},t_0)\rightarrow\dots\rightarrow(x_{t_N},t_N)\rightarrow\bot$, which we abbreviate to $x_{t_0}\rightarrow x_{t_1}\rightarrow\dots\rightarrow x_{t_N}$. Following the policy $\overrightarrow{\pi}$ for $N+1$ steps starting at $\bullet$ yields a distribution over trajectories $x_{t_0}\rightarrow x_{t_1}\rightarrow\dots\rightarrow x_{t_N}$, \ie,
\begin{equation}\label{eq:discrete_traj}
    \widehat{\P}(\widehat{X})
    =
    \widehat{\P}(\widehat{X}_0)\prod_{n=0}^{N-1}\widehat{\P}(\widehat{X}_{n+1}\mid\widehat{X}_{n})
    =
    p_{\rm prior}(\widehat{X}_0)\prod_{n=0}^{N-1}\overrightarrow{\pi}_{n}(\widehat{X}_{n+1}\mid\widehat{X}_{n}).
\end{equation}
The same construction is possible in reverse time: a density $p_{\rm target}$ over $\widehat{X}_N$ and a policy $\overleftarrow{\pi}$ {(analogously to~\eqref{eq:policy} defining transition probabilities from $\widehat{X}_{n+1}$ to $\widehat{X}_{n}$)} on the reverse MDP yields a Markovian distribution over trajectories $\widehat{\Q}$, given analogously to \eqref{eq:discrete_traj} in reverse time. Given a (forward) policy, the reverse policy generating the same distribution over trajectories can be recovered using the marginal state visitation distributions via the detailed balance formula \eqref{eq:db_loss}.

\paragraph{Radon-Nikodym derivative and divergences.}
\looseness=-1
The distributions $\widehat{\P}$, $\widehat{\Q}$ determined by a pair of policies $\overrightarrow{\pi}$, $\overleftarrow{\pi}$ and densities $p_{\rm prior}$, $p_{\rm target}$ allow us to develop divergences (losses) for learning the parameters of suitable parametric families of policies. Our goal is to make the forward and reverse processes approximately equal by minimizing a divergence between the distributions over their trajectories. 
The density ratio of these distributions, also known as \emph{Radon-Nikodym derivative}, is given by
\begin{equation}\label{eq:rnd_discrete}
    \frac{\dd\widehat{\P}}{\dd\widehat{\Q}}(\widehat{X})
    =
    \frac{\widehat{\P}(\widehat X)}{\widehat{\Q}(\widehat X)}
    =
    \frac{\widehat{\P}(\widehat{X}_0)\prod_{n=0}^{N-1}\widehat{\P}(\widehat{X}_{n+1}\mid\widehat{X}_{n})}{\widehat{\Q}(\widehat{X}_N)\prod_{n=0}^{N-1}\widehat{\Q}(\widehat{X}_{n}\mid\widehat{X}_{n+1})}
    =
    \frac{p_{\rm prior}(\widehat{X}_0)\prod_{n=0}^{N-1}\overrightarrow{\pi}_{n}(\widehat{X}_{n+1}\mid\widehat{X}_{n})}{p_{\rm target}(\widehat{X}_N)\prod_{n=0}^{N-1}\overleftarrow{\pi}_{n+1}(\widehat{X}_{n}\mid\widehat{X}_{n+1})}.
\end{equation}
Using \eqref{eq:rnd_discrete}, we can write the \emph{Kullback-Leibler} (KL) divergence as
\begin{equation}
\begin{split}
    \KL(\widehat{\P} ,\widehat{\Q}) 
    &\coloneqq \E_{\widehat{X}\sim\widehat{\P}}\left[\log\frac{\dd\widehat{\P}}{\dd\widehat{\Q}}(\widehat{X})\right] \\
    &=\E_{\widehat{X}\sim\widehat{\P}} \left[ \log p_{\rm prior}(\widehat{X}_0) +\gE(\widehat{X}_N) + \sum_{n=0}^{N-1}\log\frac{ \overrightarrow{\pi}_{n}(\widehat{X}_{n+1}\mid\widehat{X}_{n})}{ \overleftarrow{\pi}_{n+1}(\widehat{X}_{n}\mid\widehat{X}_{n+1})}\right] + \log Z.\label{eq:kl_discrete}
\end{split}
\end{equation}
Since $\log Z$ is constant, this expression can be minimized via gradient descent on the parameters of the policies, for instance by zeroth-order gradient estimation (REINFORCE; \citet{reinforce}). If the policies allow for a differentiable reparametrization as a function of noise (\eg, if they are conditionally Gaussian) we can use a deep reparametrization trick, amounting to writing the KL as a function of the noises introduced at each step. In particular, by fitting the parameters of $\overrightarrow{\pi}$ and $\overleftarrow{\pi}$ so that the two processes are approximate time-reversals of one another, we also get an approximate solution to the sampling problem, \ie, $\widehat{X}_N$ is approximately distributed as the target distribution $p_{\rm target}$. This can be motivated by the \emph{data processing inequality}, which yields that
\begin{equation}
\label{eq:data_proc_ineq}
\KL(\widehat{\P}(\widehat{X}_N) ,p_{\mathrm{target}}(\widehat{X}_N))  \le \KL(\widehat{\P} ,\widehat{\Q}). 
\end{equation}
We can also consider other divergences between two measures $\widehat{\P}$ and $\widehat{\Q}$. For instance, the \emph{trajectory balance} (TB, also known as \emph{second-moment}, \cite{malkin2022trajectory,nusken2021solving}) and related \emph{log-variance} (LV, also known as \emph{VarGrad}, \cite{richter2020vargrad}) divergences are given by
\begin{equation}
\label{eq:squared_divergences}
 \loss^{\widehat{\W}}_{\mathrm{TB}}(\widehat{\P}, \widehat{\Q}) = \E_{\widehat{X}\sim \widehat{\W}}\left[\left(\log \frac{ \mathrm d \widehat{ \P}}{\mathrm d \widehat{\Q}}(\widehat{X})-c\right)^2\right] \quad \text{and} \quad  \loss^{\widehat{\W}}_{\mathrm{LV}}(\widehat{\P}, \widehat{\Q})  = \Var_{\widehat{X}\sim \widehat{\W}}\left[\log \frac{\mathrm d \widehat{ \P}}{\mathrm d \widehat{\Q}}(\widehat{X})\right],
\end{equation}
where the density ratio inside the square is given by \eqref{eq:rnd_discrete} and $\widehat{\W}$ is a reference measure. We are free in the choice of reference measure, which allows for exploration in the optimization task (in RL, this is called \textit{off-policy} training). We note that the second-moment divergence in~\eqref{eq:squared_divergences} requires either the choice of a constant $c$ about which the moment is computed, which can be taken to be the normalizing constant $\log Z$ of $p_{\rm target}$, if it is known, or a learned approximation. The LV divergence coincides with TB when $c$ is taken to be a batch-level estimate of $\log Z$ (see, \eg, \citet[\S2.3]{malkin2023gflownets}). While estimators of the two divergences in~\eqref{eq:squared_divergences} have different variance (which is related to \emph{baselines} in RL), the expectations of their gradients with respect to the policy of $\widehat{\P}$ coincide when $\widehat{\W}=\widehat{\P}$ and are then, in turn, equal to the gradient of the KL divergence \eqref{eq:kl_discrete}~\citep{richter2020vargrad,malkin2023gflownets}.
In \Cref{sec:neural_sdes}, we will see that one can define analogous concepts in continuous time.

\paragraph{Local divergences.}
Instead of looking at entire trajectories, we can as well define divergences locally, \ie, on small parts of the trajectories. To this end, one can define the so-called \textit{detailed balance} (DB) divergence as
\begin{equation}
\label{eq:db_loss}
    \loss_{\mathrm{DB},n}^{\widehat{\W}}(\widehat{\P}, \widehat{\Q}, \widehat{p}) = \E_{\widehat{X}\sim \widehat{\W}}\left[\log \left(\frac{\widehat{p}_n(\widehat{X}_n)\overrightarrow{\pi}(\widehat{X}_{n+1}\mid\widehat{X}_n)}{\widehat{p}_{n+1}(\widehat{X}_{n+1})\overleftarrow{\pi}(\widehat{X}_{n}\mid\widehat{X}_{n+1})}\right)^2\right],
\end{equation}
for the time step $n$, where $\widehat{p}_n$ is a learned estimate of the density of $\widehat{X}_n$ for $0<n<N$, while $\widehat{p}_{0}=p_\mathrm{prior}$ and $\widehat{p}_{N}=p_\mathrm{target}$ are fixed. Minimizing the DB divergence enforces that the transition kernels $\overrightarrow{\pi}$ and $\overleftarrow{\pi}$ of $\widehat{\P}$ and $\widehat{\Q}$, respectively, are stochastic transport maps between distributions with densities $\widehat p_n$ and $\widehat p_{n+1}$, for each $n$. If the policies and density estimates jointly minimize \eqref{eq:db_loss} to 0 for some full-support reference distribution $\widehat{\W}$ and all $n$, it can be shown that they also minimize the trajectory-level divergences \eqref{eq:squared_divergences}; see \citet{bengio2021flow} for the discrete case, \citet{lahlou2023theory} for the continuous case, \citet[][\S C.1]{malkin2023gflownets} for the connection to nested variational inference \citep{buchner2021nested}, and \cite{deleu2023generative} for the connection to detailed balance for Markov chains.
The divergence used for training may be a (possibly weighted\footnote{Our result \Cref{prop:db_implies_compatibility} suggests a weighting of $\frac{1}{N\Delta t_n}$, in the notation of \Cref{sec:from_sde_to_mdp}, but our experiments showed no significant difference between such a weighting and a uniform one.}) sum of the DB divergences \eqref{eq:db_loss} for $n=0,\dots,N-1$.
`Subtrajectory' interpolations between the global TB objective \eqref{eq:squared_divergences} and the local DB objective \eqref{eq:db_loss} exist; see \Cref{sec:subtb} and \cite{nusken2023interpolating}.

\paragraph{Uniqueness of solutions.}

Learning both the generative policy $\overrightarrow\pi$ and the time-reversed policy $\overleftarrow\pi$ in the general setting as above leads to non-unique solutions. We can achieve uniqueness of the objectives by prescribing $\overleftarrow{\pi}$ (as in diffusion models), adding additional regularizers (as in Schr\"odinger (half-)bridges),
or prescribing the densities $(\widehat{\P}({\widehat{X}_n}))_{n=1}^{N-1}$ and imposing constraints on the policies (as in annealing schemes); see~\citet[Tables 6 \& 7]{blessing2024beyond} and \citet[Table 1]{sun2024dynamical}.

\subsection{Continuous-time setting: Neural SDEs}
\label{sec:neural_sdes}

We consider neural stochastic differential equations (neural SDEs) with isotropic additive noise, \ie, families of stochastic processes $X=(X_t)_{t\in [0,1]}$ given as solutions of SDEs of the form
\begin{equation}\label{eq:neural_sde}
    \dd X_t = \overrightarrow{\mu}(X_t, t)\, \dd t + \sigma(t) \, \dd W_t, \qquad X_0 \sim p_\mathrm{prior},
\end{equation}
where $\overrightarrow{\mu}\colon\R^d\times[0,1]\to\R^d$ is the \emph{drift} (or \emph{control function}), parametrized by a neural network\footnote{For notational convenience, we do not make the dependence of $X$ on the neural network parameters explicit.};  $\sigma\colon [0,1]\to \R_{>0}$ is the \emph{diffusion rate}, which in this paper is assumed to be fixed (more generally, it could be a $d\times d$ matrix that depends also on $X_t$); and $W_t$ is a standard $d$-dimensional Brownian motion.
Using a time discretization, the drift $\overrightarrow{\mu}$, together with the noise given by the diffusion rate and the Brownian motion, can be connected to a policy $\overrightarrow{\pi}$ of a MDP, which can be sampled to approximately simulate the process $X$ (see~\Cref{sec:from_sde_to_mdp}).

\paragraph{Distributions over trajectories.} Similar to the previous section, 
we can define a measure on the trajectories of the process $X$.
Since the trajectories $t \mapsto X_t$ are almost surely continuous, the distribution (also known as \emph{law} or \emph{push-forward}) of the process $X$ defines a \emph{path space measure} $\P$, which is a measure on the space $C([0,1],\R^d)$ of continuous functions, 
representing the distribution of trajectories of $X$. We will show in \Cref{sec:from_sde_to_mdp} that such a path measure can be interpreted as the limit of distributions over discrete-time 
trajectories as in~\eqref{eq:discrete_traj} when the step-sizes $t_{n+1} - t_{n}$ tend to zero.

We can also define the time marginals $p \colon \R^d \times [0,1] \to\R$, where for each time $t\in[0,1]$, $p(\cdot,t)$
gives the density of $X_t$. In measure-theoretic notation, the time marginals are the densities of the pushforwards of the path measure $\P$ by the evaluation maps $X\mapsto X_t$ sending a continuous function (trajectory) to its value at time $t$. Thus, we will also denote the distribution of the time marginals by $\P_t$. The evolution of $p$ is governed by the \emph{Fokker-Planck equation} (FPE), which is the partial differential equation (PDE)
\begin{equation}\label{eq:fpe}
    \partial_t p = -\nabla\cdot(p\overrightarrow{\mu}) + \frac{\sigma^2}{2}\Delta p,
    \quad p(\cdot, 0) = p_{\mathrm{prior}},
\end{equation}
where $\Delta p $ denotes the Laplacian of $p$.
The Fokker-Planck equation generalizes the \emph{continuity equation} for ordinary differential equations, which corresponds to the case $\sigma=0$. It expresses the conservation of probability mass when particles distributed with density $p(\cdot, t)$ are stochastically transported by the drift $\overrightarrow{\mu}$ and diffused with scale $\sigma$. While such a PDE perspective is only possible in continuous time, in \Cref{sec:asymptotic_convergence} we derive that certain MDPs satisfy FPEs in the limit of finer time discretizations.

\paragraph{Reverse process.}

As for reverse-time MDPs, we can also define reverse-time SDEs
\begin{equation}\label{eq:reverse_sde}
    \dd X_t = \overleftarrow{\mu}(X_t, t)\, \dd t + \sigma(t) \, \dd \overleftarrow{W}_t, \qquad X_1 \sim p_{\mathrm{target}},
\end{equation}
where $\overleftarrow{W}_t$ is a reverse-time\footnote{We refer to~\cite{kunita2019stochastic,vargas2024transport} for details on reverse-time SDEs and backward Itô integration.} Brownian motion and $\overleftarrow{\mu}$ is a suitable drift, potentially also parametrized by a neural network.
This SDE gives rise to another path space measure $\Q$. While in discrete time (\Cref{sec:discrete_time_policies}) local reversibility is given by detailed balance \eqref{eq:db_loss}, in continuous time one can characterize when the path space measure $\Q$ of the reverse-time SDE in~\eqref{eq:reverse_sde} coincides with the path space measure $\P$ of the forward SDE in~\eqref{eq:neural_sde} by a local condition known as Nelson's identity (\citet{nelson1967dynamical}, also attributed to \citet{anderson1982reverse}), which states that $\Q=\P$ if and only if
\begin{equation}\label{eq:nelson}
    \overleftarrow{\mu}=\overrightarrow{\mu}-\sigma^2\nabla\log p \quad \text{and} \quad \Q_1 = \P_1,
\end{equation}  
where $p$ denotes the densities of $\P$'s time marginals. 
It can be shown that substituting this expression into the FPE for the backward process recovers the FPE~\eqref{eq:fpe} for the forward process, and similarly that the KL divergence, given by~\eqref{eq:kl} below, between the forward and backward processes is zero.

\paragraph{Radon-Nikodym derivative and divergences.}

Since we typically cannot compute the time marginals, we cannot directly use Nelson's identity to solve the sampling problem. However, similar to \Cref{sec:discrete_time_policies}, we can establish learning problems to infer the parameters of the neural networks $\overrightarrow{\mu}$, $\overleftarrow{\mu}$, so that the induced terminal distribution of the forward SDE~\eqref{eq:neural_sde} is close to the target, $\P_1 \approx p_\mathrm{target}$, in some suitable measure of divergence.

The tool to define such learning problems is Girsanov's theorem (see, \eg,~\citet[][Theorem 7.4]{sarkka2019applied} and~\citet[][Theorem A.2]{sottinen2008application}), which states the following. Let $\P^{(1)}$ and $\P^{(2)}$ be the path space measures defined by SDEs of the form \eqref{eq:neural_sde} 
with drifts $\overrightarrow{\mu}^{(1)}$, $\overrightarrow{\mu}^{(2)}$.
Then, for $\P^{(2)}$-almost every $X\in C([0,1],\R^d)$, the Radon-Nikodym derivative is given by
\begin{equation}
    \log \frac{\dd\P^{(1)}}{\dd\P^{(2)}}(X)
    =
    \int_0^1\frac{\|\overrightarrow{\mu}^{(2)}(X_t, t)\|^2 - \|\overrightarrow{\mu}^{(1)}(X_t, t)\|^2}{2\sigma(t)^2}\,\dd t 
    +\int_0^1\frac{\overrightarrow{\mu}^{(1)}(X_t, t)-\overrightarrow{\mu}^{(2)}(X_t, t)}{\sigma(t)^2}\cdot\dd X_t. 
    \label{eq:rnd_continuous}
\end{equation}
An intuitive explanation of \eqref{eq:rnd_continuous} using a discrete-time approximation can be found in~\citet[\S7.4]{sarkka2019applied} or in the proof of \Cref{lma:rnd_convergence}. The same result holds for reverse-time processes as in~\eqref{eq:reverse_sde} with $\dd X_t$ replaced by integration against the reverse-time process $\dd \overleftarrow{X}_t$. Using a reversible Brownian motion as a reference path measure (see \citet{leonard2014some,leonard2013survey}), we can thus express the Radon-Nikodym derivative between the path measures $\P$ and $\Q$ of the forward and reverse-time SDEs in~\eqref{eq:neural_sde} and~\eqref{eq:reverse_sde} as It\^o integrals:
\begin{align} 
\label{eq:rnd_reversal}
\begin{split}
    \log \frac{\dd\P}{\dd\Q}(X)
    =
    \log \frac{p_{\rm prior}(X_0)}{p_{\rm target}(X_1)} &+ 
    \int_0^1\frac{\|\overleftarrow{\mu}(X_t, t)\|^2 - \|\overrightarrow{\mu}(X_t, t)\|^2}{2\sigma(t)^2}\,\dd t 
    \\&  \quad 
    + \int_0^1\frac{\overrightarrow{\mu}(X_t, t)}{\sigma(t)^2}\cdot\dd X_t - \int_0^1\frac{\overleftarrow{\mu}(X_t, t)}{\sigma(t)^2}\cdot\dd \overleftarrow{X}_t, 
\end{split}
\end{align}
\looseness=-1
see~\cite[][Prop.\ E.1]{vargas2024transport}. A related result was derived by~\citet[][Prop.\ 2.3, Remark A.1]{richter2024improved} using the conversion formula
\(
    \int_{0}^1 f (X_t,t) \cdot \dd X_t = \int_{0}^1 f (X_t,t) \cdot \dd \overleftarrow{X}_t - \int_{0}^1 \sigma(t)^2 \nabla\cdot  f (X_t,t) \, \dd t.
\)
By integrating~\eqref{eq:rnd_reversal} over $X\sim\P$, one can derive that the KL divergence is given by an expression analogous to~\eqref{eq:kl_discrete}:
\begin{equation}\label{eq:kl}
    \KL(\P,\Q)
    =
    \E_{X\sim\P}\Bigg[\log p_{\rm prior}(X_0) + \gE(X_T) 
    % \\
    % &\qquad \qquad 
    +\int_0^1\bigg(\frac{\|\overrightarrow{\mu}(X_t, t)-\overleftarrow{\mu}(X_t, t)\|^2}{2\sigma(t)^2} -  \nabla \cdot  \overleftarrow{\mu} (X_t,t)\bigg) \, \dd t\Bigg] + \log Z.
\end{equation}
Informally, the derivation uses that in expectation over $X\sim\P$, the integral with respect to $\dd X_t$ in \eqref{eq:rnd_reversal} is the sum of an integral with respect to $\overrightarrow{\mu}(X_t)\,\dd t$ and a stochastic integral with zero expectation.

The KL divergence can also be interpreted as the cost of a continuous-time stochastic optimal control problem~\citep{dai1991stochastic,berner2022optimal}. Some objectives, such as those in \citet{zhang2021path}, optimize the parameters of the drift defining $\P$ by minimizing variants of the KL divergence~\eqref{eq:kl} approximately: by passing to a time discretization of the SDE (\Cref{sec:from_sde_to_mdp}) and expressing the objective as a function of the Gaussian noises introduced at each step of the SDE integration, amounting to a deep reparametrization trick. For suitable integration schemes~\citep{vargas2023denoising,vargas2024transport}, the discretized Radon-Nikodym derivative can be written as a density ratio, so that this approach corresponds to optimizing a discrete-time KL as in~\eqref{eq:kl_discrete}.

Analogously to the discrete-time setting \eqref{eq:squared_divergences}, we can also consider the second-moment or log-variance divergences $\loss^{\W}_{\mathrm{TB}}(\P, \Q) = \E_{X\sim \W}\left[\left(\log \frac{ \mathrm d \P}{\mathrm d \Q}(X)\right)^2\right]$ and $\loss^{\W}_{\mathrm{LV}}(\P, \Q)  = \Var_{X\sim \W}\left[\log \frac{\mathrm d  \P}{\mathrm d \Q}(\X)\right]$, where $\W$ is a reference path space measure. These divergences were explored by~\citet{nusken2021solving}.

\paragraph{Local time reversal: PDE viewpoint.}

The continuous-time perspective also offers to employ the PDE framework for learning the dynamical measure transport. Recall that the density $p$ of the process $X$ defined in \eqref{eq:neural_sde} fulfills the Fokker-Planck equation \eqref{eq:fpe}. One can thus aim to learn $\overrightarrow{\mu}$ so as to make it satisfy the FPE, with the boundary values $p(\cdot, 0) = p_\mathrm{prior}$ and $p(\cdot, 1) = p_\mathrm{target}$, where $p$ is either prescribed or also learned (as done in \citet{mate2023learning}).
In \citet[][\S3.3 and \S A.3]{sun2024dynamical} it is shown that when using suitable losses on this problem one recovers a loss equivalent to $\loss_\mathrm{TB}$. When choosing the diffusion loss from \cite{nusken2023interpolating}, one recovers a continuous-time variant of $\loss_\mathrm{SubTB}$ (see \Cref{sec:subtb}) and thus $\loss_\mathrm{DB}$.
In~\Cref{sec:asymptotic_convergence}, we show that it also works the other way around: we can start with the discrete-time detailed balance divergence and derive PDE constraints in the limit.

\subsection{From SDEs to discrete-time Euler-Maruyama policies}
\label{sec:from_sde_to_mdp}

Simulation of the process $X$ can be achieved by discretizing time and applying a numerical integration scheme, such as the Euler-Maruyama scheme \citep{maruyama1955continuous}. Specifically, one fixes a sequence of time points $0=t_0<t_1<\cdots<t_N=1$ and defines the discrete-time process $\widehat{X}=(\widehat{X}_n)_{n=0}^N$ by
\begin{equation}
    \widehat{X}_0\sim p_\mathrm{prior},\quad
    \widehat{X}_{n+1} = \widehat{X}_n + \overrightarrow{\mu}(\widehat{X}_n,t_n)\Delta t_n + \sigma(t_n)\sqrt{\Delta t_n}\,\xi_n,
    \quad
    \xi_n\sim\mathcal{N}(0,I_d),
    \label{eq:em}
\end{equation}
where $\Delta t_n\coloneqq t_{n+1}-t_n$. This defines the policy $\overrightarrow{\pi}(a\mid (x,t_n))=\gN(a;x+\overrightarrow{\mu}(x,t_n)\Delta t_n,\sigma(t_n)^2\Delta t_n)$ on an MDP as in~\eqref{eq:policy}. It is clear by comparing \eqref{eq:policy} and \eqref{eq:em} that this distribution exactly coincides with the distribution $\widehat{\P}$ in~\eqref{eq:discrete_traj} over sequences $(\widehat{X}_0,\widehat{X}_1,\dots,\widehat{X}_N)$ of the Euler-Maruyama-discretized process $\widehat{X}$. As we will discuss below, with decreasing mesh size, the marginals $\widehat{\P}(X_n)$ of the $n$-th step of the discretized process converge to the marginals $p(\cdot,{t_n})$ of the continuous-time process at time $t_n$. Based on the Central Limit Theorem, such convergence can also be shown for non-Gaussian policies that satisfy suitable consistency conditions~\cite[\S6.2]{kloeden1992stochastic}. 

Finally, the same discretization is possible for reverse time: a reverse-time process of the form \eqref{eq:reverse_sde} with drift function $\overrightarrow{\mu}$ together with a target density $p_{\rm target}$ determine a policy $\overleftarrow{\pi}$ on the reverse MDP, corresponding to reverse Euler-Maruyama integration:
\begin{equation}
    \widehat{X}_N\sim p_{\rm target},\quad
    \widehat{X}_{n} = \widehat{X}_{n+1} - \overleftarrow{\mu}(\widehat{X}_{n+1},t_{n+1})\Delta t_{n} - \sigma(t_{n+1})\sqrt{\Delta t_{n}}\,\xi_n,
    \quad
    \xi_n\sim\mathcal{N}(0,I_d).
    \label{eq:reverse_em}
\end{equation}
However, note that the Euler-Maruyama discretizations of a process and of its reverse-time process defined by \eqref{eq:nelson} do not, in general, coincide. That is, a policy on the reverse MDP can be constructed either by discretizing an SDE to yield a policy on the forward MDP, then reversing it, or by discretizing the reverse SDE to directly obtain a policy on the reverse MDP, possibly with different results. In particular, the Gaussianity of transitions is not preserved under time reversal: the reverse of a discrete-time process with Gaussian increments does not, in general, have Gaussian increments. However, Nelson's identity \eqref{eq:nelson} shows that the two are equivalent in the continuous-time limit.

The discretization allows us to compare the two Radon-Nikodym derivatives: those of the discretizations in~\eqref{eq:rnd_discrete} and of the continuous-time processes in~\eqref{eq:rnd_reversal}. In particular, in \Cref{lma:rnd_convergence} we will show that these expressions are equal in the limit.

\section{Asymptotic convergence}
\label{sec:asymptotic_convergence}

\subsection{Distributions over trajectories}

A standard result shows that the discretized process $\widehat{X}$ converges to the continuous counterpart $X$ as the time discretization becomes finer, \ie, as the maximal step size $\max_{n=0}^{N-1} \Delta t_n$ goes to zero~\citep{maruyama1955continuous}. The precise statement of convergence requires the processes to be embedded in a common probability space. Let $\iota$ be the mapping from the observation space of $\widehat{X}$ (discrete-time trajectories) to that of $X$ (continuous-time paths) that takes a sequence $\widehat{X}_0,\dots,\widehat{X}_n$ to the function $f\in C([0,1],\R^d)$ defined by $f(t_n)=\widehat{X}_n$ and linearly interpolating between the $t_n$ (note that $\iota$ implicitly depends on the discretization). We then have convergence of $\iota(\widehat{X})$ to $X$:
\begin{proposition}[Convergence of Euler-Maruyama scheme]
\label{prop:em_convergence}
     As $\max_{n=0}^{N-1} \Delta t_n\to0$, $\iota(\widehat{X})$ converges weakly and strongly to $X$ with order $\gamma=1$ and the path measures $\iota_*\widehat{\P}$ converge weakly to $\P$. 
\end{proposition}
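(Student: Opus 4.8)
The plan is to realize the continuous process $X$ from~\eqref{eq:neural_sde} and its discretization $\widehat X$ from~\eqref{eq:em} on a single probability space by driving the Euler-Maruyama recursion with the increments of the \emph{same} Brownian motion $W$, i.e.\ setting $\sigma(t_n)\sqrt{\Delta t_n}\,\xi_n=\sigma(t_n)(W_{t_{n+1}}-W_{t_n})$. Under this coupling the strong statement reduces to a pathwise $L^2$ estimate, and the three assertions become (i) a strong order-$1$ bound, (ii) the classical weak order-$1$ bound, and (iii) weak convergence of the induced laws on $C([0,1],\R^d)$.

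For the strong rate, the decisive structural fact is that the noise is \emph{additive} ($\sigma$ depends on $t$ only), so the Milstein correction term, which involves $\partial_x\sigma$, vanishes and the scheme~\eqref{eq:em} coincides with the Milstein scheme; the generic Euler-Maruyama strong order $\tfrac12$ is thereby upgraded to $\gamma=1$ \citep[Ch.~10]{kloeden1992stochastic}. To see this directly, I would set $e_n\coloneqq X_{t_n}-\widehat X_n$ and subtract the recursion from the integral form of the SDE on $[t_n,t_{n+1}]$,
\begin{equation*}
 e_{n+1}=e_n+\int_{t_n}^{t_{n+1}}\big[\overrightarrow{\mu}(X_s,s)-\overrightarrow{\mu}(\widehat X_n,t_n)\big]\,\dd s+\int_{t_n}^{t_{n+1}}\big[\sigma(s)-\sigma(t_n)\big]\,\dd W_s,
\end{equation*}
then split the drift difference into a Lipschitz part $\overrightarrow{\mu}(X_{t_n},t_n)-\overrightarrow{\mu}(\widehat X_n,t_n)$, controlled by $L\|e_n\|$, and a \emph{freezing} part $\overrightarrow{\mu}(X_s,s)-\overrightarrow{\mu}(X_{t_n},t_n)$. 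Expanding the freezing part by Itô's formula isolates a drift double integral of size $O(\Delta t_n^2)$ and a martingale double integral of $L^2$-size $O(\Delta t_n^{3/2})$, while the additive diffusion error contributes only $O(\Delta t_n^{3/2})$ in $L^2$. Squaring, taking expectations with Itô's isometry and the moment bounds on $X$ from \Cref{sec:ass}, summing the $N=O(1/\Delta t)$ per-step squared errors of size $O(\Delta t_n^3)$, and closing with a discrete Grönwall inequality yields $\E[\max_n\|e_n\|^2]\le C(\max_n\Delta t_n)^2$. Since the piecewise-linear interpolant deviates from $X$ between grid points by only $O(\Delta t_n)$ in $L^2$, this upgrades to $\E[\sup_{t\in[0,1]}\|X_t-\iota(\widehat X)_t\|^2]\le C(\max_n\Delta t_n)^2$, i.e.\ strong convergence of order $\gamma=1$.

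Weak order $1$ is the standard weak-error result: for smooth test functions of polynomial growth, a Talay-Tubaro expansion of the backward Kolmogorov equation associated with~\eqref{eq:neural_sde}, using the smoothness of $\overrightarrow{\mu}$ and $\sigma$ assumed in \Cref{sec:ass}, gives $|\E[g(X_1)]-\E[g(\widehat X_N)]|\le C\max_n\Delta t_n$ \citep[Ch.~14]{kloeden1992stochastic}. Finally, weak convergence of the path measures follows for free from the strong estimate: $\iota(\widehat X)\to X$ in $L^2(\Omega;C([0,1],\R^d))$, hence in probability in the supremum norm, so by the continuous mapping theorem $\E[F(\iota(\widehat X))]\to\E[F(X)]$ for every bounded continuous $F$ on $C([0,1],\R^d)$, which is exactly weak convergence $\iota_*\widehat\P\to\P$.

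The hard part will be the freezing-error analysis that separates order $1$ from order $\tfrac12$: one must expand $\overrightarrow{\mu}(X_s,s)-\overrightarrow{\mu}(X_{t_n},t_n)$ via Itô's formula and check that its leading martingale term is $O(\Delta t_n^{3/2})$ rather than $O(\Delta t_n)$ in $L^2$. This is precisely where additivity is essential, since for state-dependent $\sigma$ the diffusion freezing error alone would be $O(\Delta t_n)$ and force order $\tfrac12$. A secondary technical point is that the mesh is \emph{nonuniform}, so every estimate must be expressed through $\max_n\Delta t_n$ and the Grönwall constant shown to be uniform over the partition.
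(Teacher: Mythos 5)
Your overall architecture is sound and is essentially a written-out version of what the paper handles by citation: the coupling through shared Brownian increments, the splitting of the drift error into a Lipschitz part and a freezing part, the observation that additive noise eliminates the $O(\Delta t_n)$ diffusion freezing error (equivalently, makes the Milstein correction vanish) so that the per-step martingale errors have $L^2$-size $O(\Delta t_n^{3/2})$, and the discrete Gr\"onwall closure giving $\max_n\E[\|X_{t_n}-\widehat X_n\|^2]\le C(\max_n\Delta t_n)^2$ — this correctly proves strong order $\gamma=1$ \emph{at the grid points}, which is exactly how the paper (following \citet{kloeden1992stochastic}) defines strong convergence in \Cref{app: numerical analysis}. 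The Talay--Tubaro argument for weak order $1$ is likewise the standard route.

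There is, however, one genuinely false step: the claim that ``the piecewise-linear interpolant deviates from $X$ between grid points by only $O(\Delta t_n)$ in $L^2$,'' upgrading the grid-point bound to $\E[\sup_{t\in[0,1]}\|X_t-\iota(\widehat X)_t\|^2]\le C(\max_n\Delta t_n)^2$. For $t\in(t_n,t_{n+1})$, even with exact grid values, $X_t-\iota(\widehat X)_t$ contains the Brownian-bridge-type term $\int_{t_n}^{t}\sigma(s)\,\dd W_s-\tfrac{t-t_n}{\Delta t_n}\int_{t_n}^{t_{n+1}}\sigma(s)\,\dd W_s$, whose supremum over a single interval has $L^2$-norm of order $\sqrt{\Delta t_n}$, and the supremum over all $N$ intervals acquires an additional $\sqrt{\log N}$ factor (L\'evy's modulus of continuity); no interpolation of a Brownian path on a mesh of width $h$ can converge in sup norm faster than order $\sqrt{h\log(1/h)}$. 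So sup-norm strong convergence of order $1$ is unattainable, which is precisely why the literature the paper cites (\eg, \citealp{kloeden2007pathwise}) obtains only pathwise rates near $1/2$ for uniform-in-time error. The damage is contained, though: the paper's strong- and weak-order definitions are grid-point statements, which your estimates already deliver, and for weak convergence of the path measures $\iota_*\widehat\P\to\P$ you only need $\sup_t\|X_t-\iota(\widehat X)_t\|\to0$ in probability — no rate — which follows from your grid-point bound combined with the modulus-of-continuity estimate above; the continuous-mapping step then goes through unchanged. Replace the claimed sup-norm rate-$1$ bound by this rate-free uniform convergence statement and the proof is correct.
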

We refer the reader to \Cref{app: numerical analysis} for definitions of strong and weak convergence. The result can be found in \citet[][\S10.2, \S10.3]{kloeden1992stochastic} and we refer to~\citet[Corollary 11.1]{baldi2017stochastic} and~\citet[][\S2.1]{kloeden2007pathwise} for the convergence of path measures. Generally, the Euler-Maruyama scheme has order of strong convergence $\gamma=1/2$. However, since we consider \emph{additive} noise, \ie, $\sigma$ not depending on the spatial variable $x$, the \emph{Milstein scheme} reduces to the Euler-Maruyama scheme and we inherit order $\gamma=1$ as stated in~\Cref{prop:em_convergence}~\citep{kloeden1992stochastic}.

\subsection{Radon-Nikodym derivative and divergences}

Beyond the convergence of path measures, this section shows -- more relevant for practical applications -- that commonly used local and global objectives converge their continuous-time counterparts as the time discretization is refined. To this end, we leverage \Cref{lma:rnd_convergence}, which analyzes the convergence of time discretizations of Radon-Nikodym derivatives $\frac{\dd \P}{\dd \Q}$ appearing in \eqref{eq:rnd_reversal} to their discrete-time analogs $\frac{{\dd \P}}{{ \dd \Q}}$. We note that \citet[Proposition E.1]{vargas2024transport} shows that, for constant $\sigma$, an Euler-Maruyama discretization of $\frac{\dd \P}{\dd \Q}$ can be written as a density ratio as in~\eqref{eq:rnd_discrete}. This also implies that the ratio in the detailed balance divergence in~\eqref{eq:db_loss} arises from a single-step Euler-Maruyama approximation of the Radon-Nikodym derivative $\frac{{\dd \P}}{{ \dd \Q}}$ on the subinterval $[t_n, t_{n+1}]$. We present proofs of all results in this Section in \Cref{sec:proofs}.

\clearpage

\paragraph{Global objectives: Second-moment divergences approach the continuous-time equivalents.}

The following key result uses convergence of the Radon-Nikodym derivatives (\Cref{lma:rnd_convergence}):

\begin{propositionE}[Convergence of functionals][end,restate]\label{prop:functional_convergence}
    If $\P,\Q,\W$ are path measures of three forward-time SDEs, the drifts of the SDEs and their derivatives are bounded, and $f\in C^\infty(\R,\R)$ has at most polynomially growing derivatives, then
    \[ \E_{\widehat{X}\sim\widehat{\W}}\left[f\left(\log\frac{\dd\widehat{\P}}{\dd\widehat{\Q}}(\widehat{X})\right)\right]\xrightarrow{\max_n\Delta t_n\to0}\E_{X\sim\W}\left[f\left(\log\frac{\dd\P}{\dd\Q}(X)\right)\right].\]
\end{propositionE}
\begin{proofE}
    As shown in the proof of \Cref{lma:rnd_convergence}, $\log\frac{\dd\widehat{\P}}{\dd\widehat{\Q}}(\widehat{X})$ can be viewed as a component of the Euler-Maruyama integration of an It\^{o} process (with space-dependent diffusion) evaluated at time $1$. Given our assumptions on the coefficient functions (see also~\Cref{sec:ass}), the result follows by weak convergence; see, e.g.,~\cite{kloeden1992stochastic}. 
\end{proofE}
We now show that the second-moment losses in \eqref{eq:squared_divergences} converge to their continuous-time counterparts.

\begin{propositionE}[Asymptotic consistency of TB and VarGrad][end,restate]\label{prop:tb_asymptotics}
    Under the assumptions of \Cref{prop:functional_convergence}, the divergences $\loss^{\widehat{\W}}_{\mathrm{TB}}(\widehat{\P}, \widehat{\Q})$ and $\loss^{\widehat{\W}}_{\mathrm{LV}}(\widehat{\P}, \widehat{\Q})$ converge to $\loss^{\W}_{\mathrm{TB}}(\P, \Q)$ and $\loss^{\W}_{\mathrm{LV}}(\P, \Q)$, respectively.
\end{propositionE}
\begin{proofE}
    Immediate from \Cref{prop:functional_convergence}, taking $f(x)=x^2$ and $f(x)=x$.
\end{proofE}
The convergence holds for the TB divergence in \eqref{eq:squared_divergences} with respect to any $c$, \ie, $\E_{\widehat{\W}}\Big[\big(\log\frac{\dd\widehat{\P}}{\dd\widehat{\Q}}-c\big)^2\Big]$, showing that \Cref{prop:tb_asymptotics} continues to hold if one uses a learned estimate of the log-partition function $\log Z$ as the scalar $c$ in the TB divergence, as typically done in practice.

\paragraph{Local objectives: Detailed balance approaches the Fokker-Planck PDE.}

Consider a pair of forward and reverse SDEs with drifts $\overrightarrow\mu$ and $\overleftarrow\mu$, respectively, defining processes $\P$ and $\Q$, and suppose that $\widehat{p}\colon\R^d\times[0,1]\to\R$ is a density estimate with $\widehat{p}(\cdot,0)=p_\mathrm{prior}$ and $\widehat{p}(\cdot,1)=p_\mathrm{target}$.

For $0\leq t<t'\leq1$, consider any time discretization in which $t$ and $t'$ are adjacent time steps ($t_n=t$ and $t_{n+1}=t'$). The discretization defines a pair of policies $\overrightarrow\pi,\overleftarrow\pi$ corresponding to Euler-Maruyama discretizations of the two SDEs. Let us define the \textit{detailed balance discrepancy}: 
\begin{equation}\label{eq:db_discrepancy}
    \Delta_{t\to t'}(x,x') \coloneqq  \log \frac{\widehat{p}_n(x)\overrightarrow\pi_n(x'\mid x)}{\widehat{p}_{n+1}(x')\overleftarrow\pi_{n+1}(x\mid x')},
\end{equation}
where we set $\widehat{p}_n(x)=\widehat{p}(x,t_n)$. Recalling the definition \eqref{eq:db_loss}, we have that
\begin{equation}
    \label{eq:db_loss2}
    \loss_{\mathrm{DB},n}^{\widehat{\W}}(\widehat\P,\widehat\Q,\widehat p)=\E_{\widehat Z\sim\widehat{\W}}\left[\Delta_{t_n\rightarrow t_{n+1}}(\widehat Z_n,\widehat Z_{n+1})^2\right].
\end{equation}
The following proposition will show that the two SDEs are time reversals of one another if and only if certain asymptotics of the DB discrepancy vanish. It is proved using a technical lemma (\Cref{lma:asymptotics}), which shows that the asymptotics of the discrepancy in $h$ are precisely the errors in the satisfaction of Nelson's identity and the Fokker-Planck equation.

\begin{propositionE}[Asymptotic equality of DB and FPE][end,restate] 
\looseness=-1
Under the smoothness conditions in \cref{lma:asymptotics}, $\overrightarrow{\mu},\overleftarrow{\mu},\widehat{p}$ jointly satisfy Nelson's identity ($\overleftarrow{\mu}=\overrightarrow{\mu}-\sigma^2\nabla\log\widehat{p}$) at $(x_t,t)$ if and only if
\begin{align*}
    \lim_{h\to0}\left[\frac{1}{\sqrt h}\Delta_{t\rightarrow t+h}(x_t,x_{t+h})\right]=0\quad\text{for almost every $z$},
\end{align*}
where $x_{t+h}\coloneqq x_t+\overrightarrow{\mu}(x_t,t)h+\sigma(t)\sqrt hz$.
If in addition
\begin{align*}
    \lim_{h\to0}\E_{z\sim\gN(0,I_d)}\left[\frac1h\Delta_{t\rightarrow t+h}(x_t,x_{t+h})\right]=0,
\end{align*}
then the Fokker-Plank equation is satisfied at $(x_t,t)$. If both conditions hold at all $(x_t,t)\in\R^d\times(0,1)$, then $\overrightarrow{\mu},\overleftarrow{\mu}$ define a pair of time-reversed processes with marginal density $\widehat{p}$.
\label{prop:db_implies_compatibility}
\end{propositionE}
\begin{proofE}
    We write $\widehat{p}_t(x)$, $\overrightarrow\mu_t(x)$, $\sigma_t$ for $\widehat{p}(x,t)$, $\overrightarrow\mu(x,t)$, $\sigma(t)$ for convenience. By \Cref{lma:asymptotics}, the first condition implies that for almost all $z$,
    \begin{equation}
        \langle z,\sigma_t^2\nabla\log \widehat{p}_t(x_t)-(\overrightarrow{\mu}_t(x_t)-\overleftarrow{\mu}_t(x_t))\rangle=0,
        \label{eq:muf}
    \end{equation}
    which implies Nelson's identity at $(x_t,t)$, while the second condition implies that
    \begin{equation}
        \partial_t\log \widehat{p}_t(x_t)+\left\langle\overrightarrow{\mu}_t(x_t),\nabla\log \widehat{p}_t(x_t)\right\rangle+\langle\nabla,\overleftarrow{\mu}_t(x_t)\rangle+\frac{\sigma_t^2}{2}\left(\Delta\log \widehat{p}_t(x_t)-\left\|\frac{\overrightarrow{\mu}_t(x_t)-\overleftarrow{\mu}_t(x_t)}{\sigma_t^2}\right\|^2\right)=0.
        \label{eq:db_firstorder}
    \end{equation}
    Substituting the expression \eqref{eq:muf} into \eqref{eq:db_firstorder} and simplifying, we get 
    \[
        \partial_t\log \widehat{p}_t(x_t)=-\langle\nabla,\overrightarrow{\mu}_t(x_t)\rangle-\langle\overrightarrow{\mu}_t(x_t),\nabla\log \widehat{p}_t(x_t)\rangle+\frac{\sigma_t^2}{2}\left(\Delta\log \widehat{p}_t(x_t)+\|\nabla\log \widehat{p}_t(x_t)\|^2\right),
    \]
    which gives exactly the logarithmic form of the Fokker-Planck equation.
\end{proofE}
In particular, this result shows that if we \emph{impose} a parametrization of $\overrightarrow{\mu}$ and $\overleftarrow{\mu}$ as two vector fields that differ by $\sigma^2\nabla\log\widehat{p}$, where $\widehat{p}$ is a fixed or learned marginal density estimate, then asymptotic satisfaction of DB implies that the continuous-time forward and backward processes coincide.

\looseness=-1
\paragraph{Processes defined by discrete-time reversal.} The generative and diffusion processes play a symmetric role in \Cref{prop:db_implies_compatibility}. However, some past work -- starting from \citet{zhang2021path}, whose experiment settings we adopt in \Cref{sec:experiments} -- has defined $\overleftarrow{\pi}$ as the reversal of the Euler-Maruyama discretization of a forward SDE, rather than as the Euler-Maruyama discretization of a backward SDE, in a case where the former has Gaussian increments. To ensure the applicability of the results to the experiment setting, we need a slight generalization:
\begin{propositionE}[DB and FPE for Brownian bridges][end,restate] 
    The results of \Cref{prop:db_implies_compatibility} hold if $\sigma(t)$ is constant and $\overleftarrow\pi$ is the discrete-time reversal of the Euler-Maruyama discretization of the process $\dd X_t=\sigma(t)\,\dd W_t$, $p_\mathrm{prior}(x)=\gN(x;0,\sigma_0 I_d)$.
\end{propositionE}
\begin{proofE}
    Using the changes of variables $x\mapsto\sigma x$ followed $t\mapsto t-\sigma_0$, it suffices to show this for $\sigma_0=0,\sigma=1$, making the process a standard Brownian motion (the change of bounds for $t$ is insubstantial as the conditions are local in time).

    Let $\overleftarrow{\pi}$ be the backward policy as originally defined. The reverse drift is $\overleftarrow{\mu}(x,t)=\frac xt$, so we have
    \[
        \overleftarrow{\pi}(x_t\mid x_{t+h})=\gN\left(x_t;\frac{t}{t+h}x_{t+h},h\right).
    \]
    Let $\overleftarrow{\pi}'$ be the discrete-time reversal of the forward-discretized Brownian motion. By elementary properties of Gaussians, we have
    \[
        \overleftarrow{\pi}'(x_t\mid x_{t+h})=\gN\left(x_t;\frac{t}{t+h}x_{t+h},\frac{t}{t+h}h\right).
    \]
    Let $\Delta_{t\rightarrow t+h}(x_t,x_{t+h})$ and $\Delta_{t\rightarrow t+h}'(x_t,x_{t+h})$ be the discrepancies \eqref{eq:db_discrepancy}) defined using $\overleftarrow{\pi}$ and $\overleftarrow{\pi}'$, respectively. We will show that replacing $\Delta$ by $\Delta'$ does not affect the asymptotics in \Cref{prop:db_implies_compatibility}.

    We have
    \begin{align*}
        \Delta_{t\rightarrow t+h}(x_t,x_{t+h})-\Delta'_{t\rightarrow t+h}(x_t,x_{t+h})
        &=
        \log\overleftarrow{\pi}'(x_t\mid x_{t+h})-\log\overleftarrow{\pi}(x_t\mid x_{t+h})\\
        &=\frac{-1}{2}\left[d\log\frac{t}{t+h}+\left\|x_t-\frac{t}{t+h}x_{t+h}\right\|^2\left(\frac{1}{\frac{t}{t+h}h}-\frac1h\right)\right]\\
        &=\frac{-1}{2}\left[d\log\left(1-\frac{h}{t+h}\right)+\frac1t\left\|x_t-x_{t+h}+\frac{h}{t+h}x_{t+h}\right\|^2\right].
    \end{align*}
    Setting $x_{t+h}=x_t+\overrightarrow{\mu}_t(x_t)h+\sqrt hz$, the above becomes
    \[
        \frac{-1}{2}\left[-\frac{h}{t}d+O(h^2)+\frac1t\left(h\|z\|^2+O(h^{3/2})\right)\right].
    \]
    For fixed $z$, the $\sqrt h$-order asymptotics of this expression vanish. In expectation over $z\sim\gN(0,I_d)$, the $h$-order asymptotics vanish because $\E_{z\sim\gN(0,I_d)}\left[\|z\|^2\right]=d$.
\end{proofE}
The results in this section guarantee that global and local objectives with different discretizations are approximating unique continuous-time objects when $\max_{n=0}^{N-1} \Delta t_n \to 0$. This justifies training and inference of samplers with different discretizations, allowing us to greatly reduce the computational cost of training (see~\Cref{sec:experiments}). 
These observations are particularly relevant for diffusion-based samplers which rely on discretization of (partial) trajectories during training. In contrast, for generative modeling, one can use denoising score-matching objectives which can be minimized without any discretization in continuous time.

\section{Experiments}
\label{sec:experiments}
We evaluate the effect of time discretization on the training of diffusion samplers. We follow the training setting from \citet{sendera2024improved} throughout, extending their published code with an implementation of variable time discretization (see \Cref{sec:hyperparams} for details). The following objectives from \Cref{sec:measure_transport} are considered:
\begin{itemize}[left=0pt,nosep,itemsep=2pt]
    \item \textbf{Path integral sampler (PIS)} \citep{zhang2021path}: The trajectory-level KL divergence \eqref{eq:kl_discrete}, which approximates the path space measure KL \eqref{eq:kl} is minimized via the deep reparametrization trick (\ie, through differentiable simulation of the generative SDE, hence necessarily on-policy).
    \item \textbf{Trajectory balance (TB)} and \textbf{VarGrad}: The trajectory-level divergences of the second-moment type \eqref{eq:squared_divergences}, optimized either on-policy or using the off-policy local search technique introduced in \citet{sendera2024improved}. As TB and VarGrad are found to be nearly equivalent in unconditional sampling settings, we consider VarGrad only for \emph{conditional} sampling (see \Cref{fig:elbo_gaps_vae}).
    \item \textbf{Detailed balance (DB)}: The time-local detailed balance divergence \eqref{eq:db_loss}, and its variant \textbf{FL-DB}, which places an inductive bias on the log-density estimates -- first used by \citet{wu2020stochastic,mate2023learning} and evaluated in the off-policy RL setting by \citet{zhang2023diffusion,sendera2024improved} -- that assumes access to the target energy at intermediate time points (see \Cref{sec:fl}).
\end{itemize}

\newcommand{\xmark}{\ding{55}}
\newcommand{\cmark}{\ding{51}}
\begin{wraptable}[7]{r}{0.45\textwidth}
    \vspace*{-1.4em}
    \caption{\label{tab:objectives}Properties of training objectives. Variants with LP also use the intermediate energy gradient.}
    \vspace*{-1em}
    \resizebox{\linewidth}{!}{
        \begin{tabular}{@{}lcccc}
            \toprule
            Property $\downarrow$ Objective $\rightarrow$ & PIS & TB/VarGrad & DB & FL-DB \\
            \midrule
            Time-local & \xmark & \xmark & \cmark & \cmark \\
            Off-policy & \xmark & \cmark & \cmark & \cmark \\
            Use intermediate energy & \xmark & \xmark & \xmark & \cmark \\
            Use energy gradient & \cmark & \xmark & \xmark & \xmark \\
            \bottomrule
        \end{tabular}
    }
\end{wraptable}
Each objective is additionally studied with and without the \textbf{Langevin parametrization (LP)}, a technique introduced by \citet{zhang2021path} that parametrizes the generative SDE's drift function via the gradient of the target energy. The assumptions made by each objective are summarized in \Cref{tab:objectives}.

The noising process is always fixed to the reverse of a Brownian motion, following \citet{zhang2021path} and subsequent work. The following densities are targeted:
\begin{itemize}[left=0pt,nosep,itemsep=2pt]
    \item Standard targets \textbf{25GMM} (2-dimensional mixture of Gaussians), \textbf{Funnel} (10-dimensional funnel-shaped distribution), \textbf{Manywell} (32-dimensional synthetic energy), and \textbf{LGCP} (1600-dimensional log-Gaussian Cox process) as defined in the benchmarking library of \citet{sendera2024improved}.
    \item \textbf{VAE}: the conditional task of sampling the 20-dimensional latent $z$ of a variational autoencoder trained on MNIST given an input image $x$, with target density $p(z\mid x)\propto p(x\mid z)p(z)$.
    \item Bayesian logistic regression problems for the \textbf{German Credit} and \textbf{Breast Cancer} datasets (25- and 31-dimensional, respectively), from the benchmark by~\citet{blessing2024beyond}.
\end{itemize}
We use a well-established primary metric: the ELBO of the target distribution computed using the learned sampler and the true log-partition function, estimated using $N$-step Euler-Maruyama integration. In our notation, the ELBO is
$\log\widehat Z=\E_{\widehat X\sim\widehat\P}\big[-\gE(\widehat X_N)+\log \widehat{\Q}(\widehat X\mid \widehat X_N)-\log \widehat{\P}(\widehat X)\big]$ (see \eqref{eq:metrics} for details).

\begin{wrapfigure}[43]{r}{0.5\textwidth} %
    \vspace*{-1em}
    \includegraphics[width=1\linewidth]{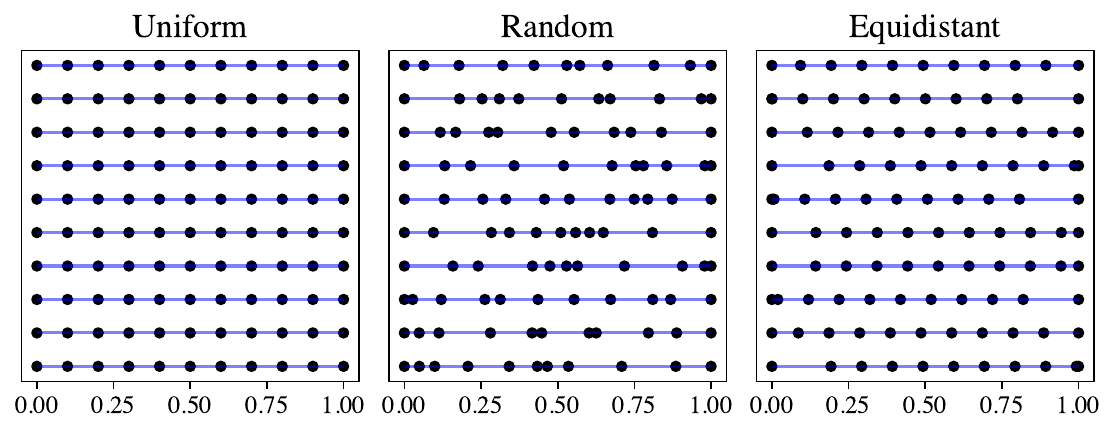}
    \vspace*{-2em}
    \caption{\label{fig:discretizers}Sampled 10-step discretizations of the unit interval using the three schemes considered.}
    \vspace*{0.25em}
    \includegraphics[width=1\linewidth]{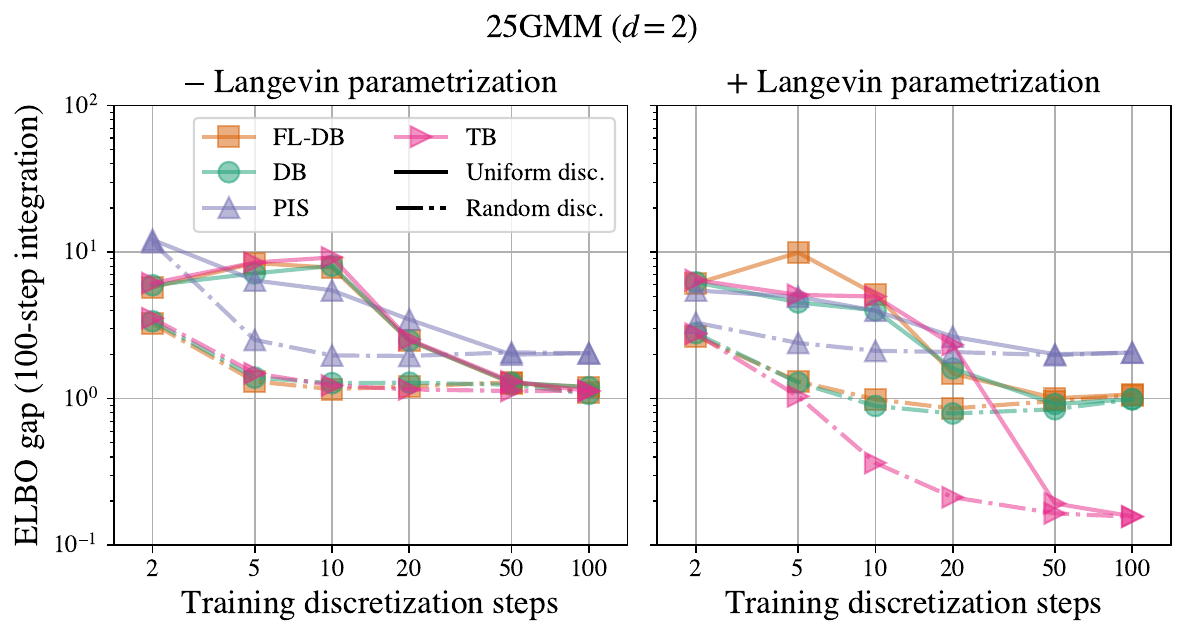}\\[-0.2em]
    \includegraphics[width=1\linewidth]{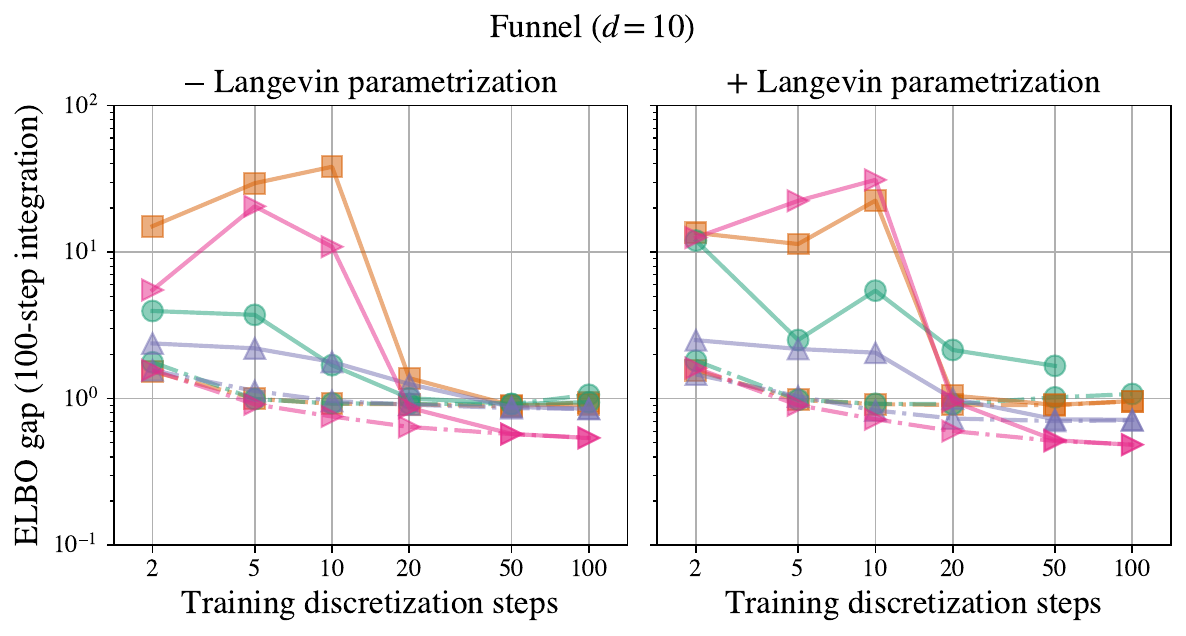}\\[-0.2em]
    \includegraphics[width=1\linewidth]{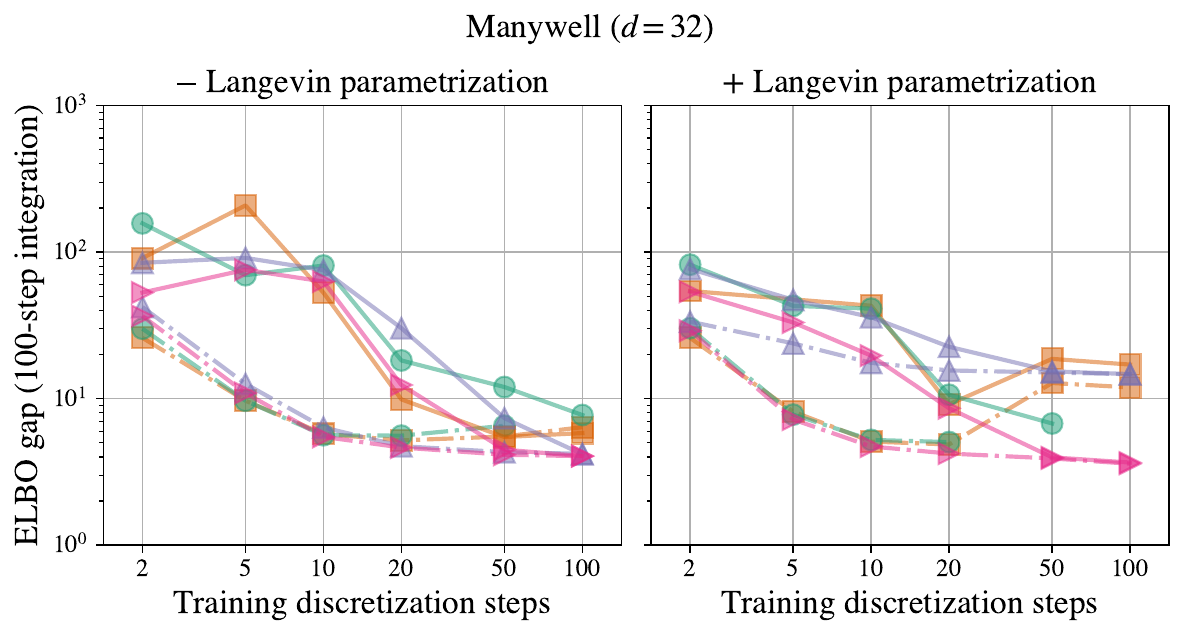}\\[-0.2em]
    
    \vspace*{-1.75em}
    \caption{\label{fig:elbo_gaps}Difference between true $\log Z$ and ELBO as a function of $N_{\rm train}$, always evaluating with 100-step uniform integration. Additional targets in \Cref{fig:elbo_gaps_app}
    and \Cref{fig:elbo_gaps_vae}, \textbf{Equidistant} results in \Cref{fig:elbo_gaps_equidistant}.}
\end{wrapfigure}

\looseness=-1
While recent work on diffusion samplers has used a discretization with uniform-length time intervals for both training and sampling, we vary the time discretization. Unless stated otherwise, we evaluate ELBO using $N_{\rm eval}=100$ uniform discretization steps. However, during training, we vary the number of time steps $N_{\rm train}$ and their placement, considering three schemes (see \Cref{fig:discretizers}) for partitioning the interval $[0,1]$:

\begin{itemize}[left=0pt,nosep,itemsep=2pt]
    \item \textbf{Uniform}: Time steps uniformly spaced: $t_i=\frac i{N_{\rm train}}$ for $i=0,\dots,N_{\rm train}$.
    \item \textbf{Random:} We sample i.i.d.\ numbers $z_0,\dots,z_{N_{\rm train}-1}\sim\gU([1,c])$, where $c$ is a constant (we take $c=10$), and define \[\Delta t_i=\frac{z_i}{\sum_{j=0}^{N_{\rm train}-1}z_j},\quad t_i=\sum_{j=0}^{i-1}\Delta t_j.\] The interval lengths thus sum to 1, and no two have a ratio of lengths greater than $c$. (We also tested setting the $t_i$ ($0<i<N_{\rm train}$) to be i.i.d.\ samples from $\gU([0,1])$ sorted in increasing order, but this scheme was numerically unstable.)
    \item \textbf{Equidistant:} We sample $t_1\sim\gU([\eps,2/N_{\rm train}-\eps])$, where for us $\eps=10^{-4}$, then set
    \[
        t_i=t_1+(i-1)/N_{\rm train}
    \]
    for $i=1,\dots,N_{\rm train}-1$. Thus $\Delta t_i=\frac{1}{N_{\rm train}}$ for all $1<i<N_{\rm train}-1$, \ie, all intervals are of equal length except possibly the first and last.
\end{itemize}

\paragraph{Results: Training-time discretization.} In \Cref{fig:elbo_gaps}, we show the ELBO gaps on three of the datasets for different training-time discretizations as a function of $N_{\rm train}$. We observe that, for all objectives, training with \textbf{Random} discretization consistently outperforms \textbf{Uniform} discretization with a small number of steps, with the two converging as $N_{\rm train}$ increases to approach $N_{\rm eval}=100$. The \textbf{Equidistant} discretization performs similarly to \textbf{Random} in most cases (see \Cref{fig:elbo_gaps_equidistant}).

\looseness=-1
Notably, the time-local objectives (DB, FL-DB) perform similarly to the trajectory-level objectives (TB, PIS) when trained with few steps.
However, as $N_{\rm train}$ increases, the time-local models' performance typically plateaus (on some targets, they diverge with 100 steps). These results suggest that time-local objectives trained with nonuniform discretization and few steps can be a viable alternative to trajectory-level objectives in high-dimensional problems where the memory requirements of training on long trajectories are prohibitive.

\paragraph{Results: Time efficiency.} The training time per iteration is expected to scale approximately linearly with the trajectory length $N_{\rm train}$. \Cref{fig:timing_main} (left) confirms this scaling and illustrates the relative cost of different objectives: FL-DB and methods using the Langevin parametrization are the most expensive, as they require stepwise evaluations of the target energy and its gradient, respectively. \Cref{fig:timing_main} (right) shows the ELBO gap plotted against training time, demonstrating that methods with nonuniform discretization achieve a superior trade-off between training time and sampling performance.

\begin{figure}[t]
    \includegraphics[width=0.49\linewidth]{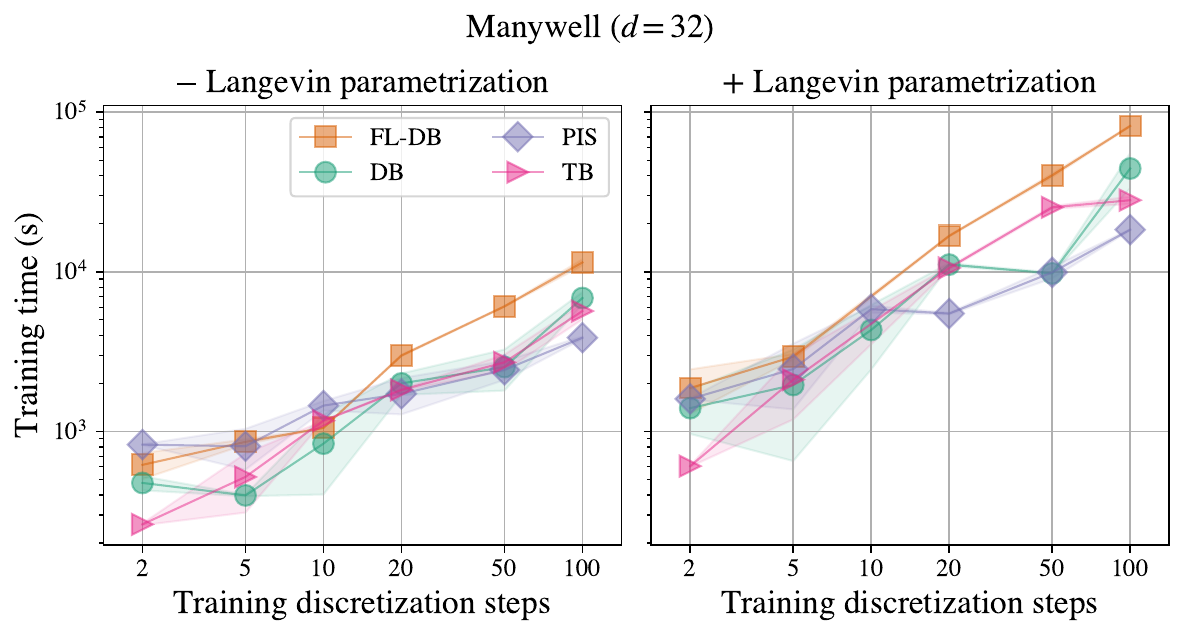}
    \hfill
    \includegraphics[width=0.49\linewidth]{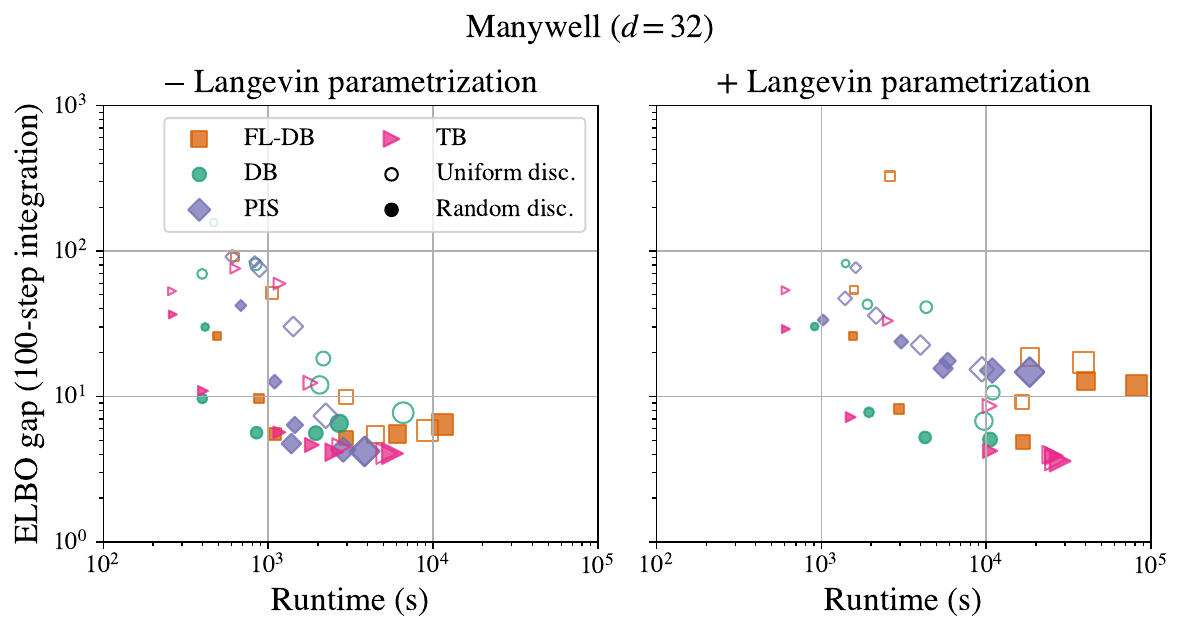}\\[-2em]
    \caption{\looseness=-1 \label{fig:timing_main}\textbf{Left:} Time to train for 25k iterations on \textbf{Manywell} as a function of $N_{\rm train}$, mean and std over 3 runs (note the log-log scale). \textbf{Right:} Runtime and ELBO gap, showing that \textbf{Random} discretization gives a superior balance of speed and performance. The marker area is proportional to $N_{\rm train}$. Results for \textbf{25GMM} and \textbf{Funnel} densities in \Cref{fig:timing_app}.}
\end{figure}

\begin{wrapfigure}[14]{R}{0.4\textwidth}
    \vspace*{-1.5em}
    \includegraphics[width=1.\linewidth]{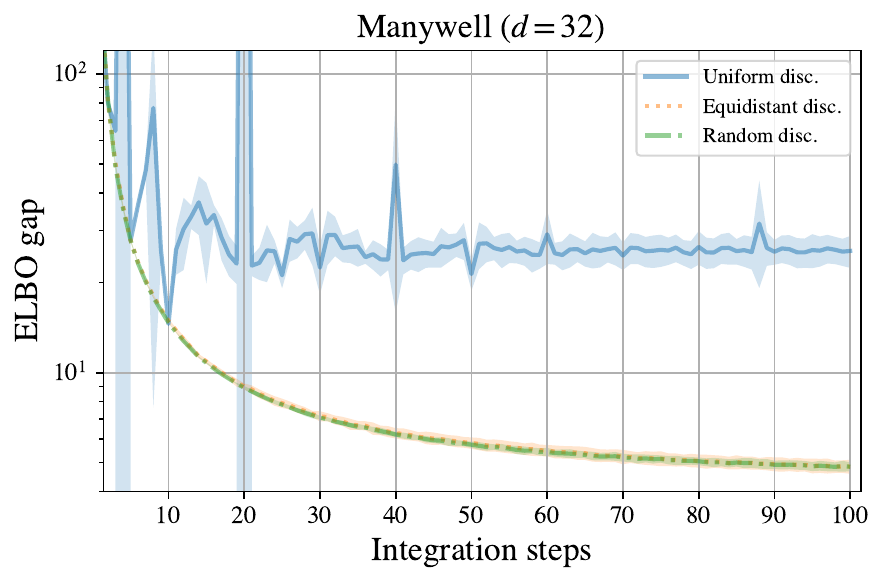}
    \vspace*{-2em}
    \caption{\label{fig:integration_main}ELBO gaps for models trained with various discretization schemes and $N_{\rm train}=10$, then evaluated with various numbers of integration steps $N_{\rm eval}$. Results on Manywell energy; others shown in  \Cref{fig:integration_app}.}
\end{wrapfigure}
% \newpage
\paragraph{Results: Inference-time discretization.} 
To study the effect of \emph{sampling-time} discretization, we train models with $N_{\rm train}=10$ steps (using TB with Langevin parametrization) and different placement of time steps, then evaluate with different $N_{\rm eval} \in \{1, 2, ..., 100\}$. From \Cref{fig:integration_main}, we observe that randomized discretization (\textbf{Random} or \textbf{Equidistant}) during training leads to smooth ELBO curves as a function of $N_{\rm eval}$, whereas training with \textbf{Uniform} discretization gives unstable behavior with periodic features at multiples of $N_{\rm train}$, which may be due both to the restricted set of inputs $t$ to the model $\overrightarrow{\mu}(x,t)$ during training and to the harmonic timestep embedding in the model architecture. This result is further evidence that nonuniform discretization during training yields more robust samplers that are less sensitive to the choice of $N_{\rm eval}$.

\paragraph{Additional results.} 
\looseness=-1
Figures complementing those in the main text appear in \Cref{sec:more_figures,sec:vae_reconstructions}, while \Cref{sec:more_metrics} contains more metrics and comparisons in tabular form. In particular, we combine the above objectives with the off-policy local search of \citet{sendera2024improved} to achieve near-state-of-the-art results with much coarser (nonuniform) time discretizations during training, whereas local search does not help the performance of methods using coarse \textbf{Uniform} schemes (\Cref{tab:big_table}).

\section{Conclusion}

\looseness=-1
We have shown the convergence of off-policy RL objectives used for the training of diffusion samplers to their continuous-time counterparts. Those are Nelson's identity and the Fokker-Planck equation for stepwise objectives and path space measure divergences for trajectory-level objectives. Our experimental results give a first understanding of good practices for training diffusion samplers in coarse time discretizations. We expect that the increased training efficiency and the ability to use local objectives without expensive energy evaluations are especially beneficial in very high-dimensional problems where trajectory length is a bottleneck, noting that trajectory balance was recently used in fine-tuning of diffusion foundation models for text and images \citep{venkatraman2024amortizing} and sampling in latent spaces of other large-scale models \citep{venkatraman2025outsourced}. The success of training with coarse, non-uniform steps suggests a path toward such high-dimensional settings.

Future theoretical work could generalize our results to diffusions on general Riemannian manifolds and to non-Markovian continuous-time processes, such as those studied in \citet{daems2024variational,nobis2023generative}, as well as to discrete-space sampling problems -- for which diffusion samplers have recently been developed \citep{zhu2025mdns,guo2025proximal} -- exploiting the unifying connections among diffusion generative processes on general state spaces \citep{pauline2025foundations}.

\section*{Acknowledgments}

We thank Alex Tong, Alexandre Adam, and Pablo Lemos for helpful discussions at early stages of this project. 

J.B.\@ acknowledges support from the Wally Baer and Jeri Weiss Postdoctoral Fellowship. The research of L.R.\@ was partially funded by Deutsche Forschungsgemeinschaft (DFG) through the grant CRC 1114 ``Scaling Cascades in Complex Systems'' (project A05, project number $235221301$). The research of M.S.\@ was funded by National Science Centre, Poland, 2022/45/N/ST6/03374.

The research was enabled in part by computational resources provided by the Digital Research Alliance of Canada (\url{https://alliancecan.ca}), Mila (\url{https://mila.quebec}), and NVIDIA.

\section*{Code}

Code is available at \url{https://github.com/GFNOrg/gfn-diffusion/tree/stagger}.

\bibliography{references}
\bibliographystyle{tmlr}

\newpage

\appendix

\section{Additional related work}\label{sec:rw}

\textbf{Classical sampling methods.} The gold standard for sampling is often considered \emph{Annealed Importance Sampling} (AIS)~\citep{neal2001annealed} and its \emph{Sequential Monte Carlo} (SMC) extensions~\citep{chopin2002sequential,del2006sequential}. The former can be viewed as a special case of our discrete-time setting, where, however, the transition kernels are fixed and not learned, thus requiring careful tuning. For the kernels, often a form of \emph{Markov Chain Monte Carlo} (MCMC), such Langevin dynamics and extensions (\eg, ULA, MALA, and HMC) are considered. While they enjoy asymptotic convergence guarantees, they can suffer from slow mixing times, in particular for multimodal targets~\citep{doucet2009tutorial,kass1998markov,dai2022invitation}. Alternatives are provided by variational methods that reformulate the sampling problem as an optimization problem, where a parametric family of tractable distributions is fitted to the target. This includes mean-field approximations~\citep{wainwright2008graphical} as well as normalizing flows~\citep{papamakarios2021normalizing}. We note that MCMC can also be interpreted as a variational approximation in an extended state space~\citep{salimans2015markov}.

\paragraph{Normalizing flows.} There exist various versions of combining (continuous-time or discrete-time) normalizing flows with classical sampling methods, such as MCMC, AIS, and SMC~\citep{wu2020stochastic,arbel2021annealed,matthews2022continual}.
Most of these methods rely on the reverse KL divergence that suffers from mode collapse. To combat this issue, the underlying continuity equation (and Hamilton-Jacobi-Bellman equations in case of optimal transport) have been leveraged for the learning problem~\citep{ruthotto2020machine,mate2023learning,sun2024dynamical}. However, in all the above cases, one needs to either restrict model expressivity or rely on costly computations of divergences (in continuous time) or Jacobian determinants (in discrete time). Our \Cref{prop:db_implies_compatibility} shows that, in the stochastic case, the discrepancy in the corresponding Fokker-Planck equation -- an expression involving divergences and Laplacians -- can be approximated by detailed balance divergences, which require no differentiation.

\paragraph{Diffusion-based samplers.} Motivated by (annealed) Langevin dynamics and diffusion models, there is growing interest in the development of SDEs controlled by neural networks, also known as neural SDEs, for sampling.
This covers methods based on \emph{Schrödinger (Half-)bridges}~\citep{zhang2021path}, diffusion models~\citep{vargas2023denoising,berner2022optimal}, and annealed flows~\citep{vargas2024transport}. These methods can be interpreted as special cases of stochastic bridges, aiming at finding a time-reversal between two SDEs starting at the prior and target distributions~\citep{vargas2024transport,richter2024improved}. In particular, this allows to consider general divergences between the associated measures on the SDE trajectories, such as the log-variance divergence~\citep{richter2020vargrad,nusken2021solving}. We note that there has also been some work on combining classical sampling methods with diffusion models~\citep{phillips2024particle,doucet2022score}.

\paragraph{GFlowNets.} GFlowNets are originally defined in discrete space \citep{bengio2021foundations}, but were generalized to general measure spaces in \citep{lahlou2023theory}, who proved the correctness of objectives in continuous time and experimented with using them to train diffusion models as samplers. However, the connection between GFlowNets and diffusion models had already been made informally by \citet{malkin2023gflownets} for samplers of Boltzmann distributions and by \citet{zhang2022unifying} for maximum-likelihood training, and the latter showed a connection between detailed balance and sliced score matching, which has a similar flavor to our \Cref{prop:db_implies_compatibility}. GFlowNets are, in principle, more general than diffusion models with Gaussian noising, as the state space may change between time steps and the transition density does not need to be Gaussian, which has been taken advantage of in some applications \citep{volokhova2024towards,phillips2024metagfn}.

\paragraph{Accelerated integrators for diffusion models.} We remark that there has been great interest in developing accelerated sampling methods for diffusion models and the related continuous normalizing flows \citep[\eg,][]{shaul2024bespoke,pandey2024efficient}. In particular, one can consider higher-order integrators for the associated \emph{probability flow ODE}~\citep{song2021score} or integrate parts of the SDE analytically~\citep{zhang2022fast}.
However, we note that this research is concerned with accelerating \emph{inference}, not training, of diffusion models and thus orthogonal to our research. While \citet{denker2024deft} and \citet{venkatraman2024amortizing} explored subsampling strategies for the gradient estimators to reduce memory costs, they did not analyze the effect of randomized time steps. For generative modeling, one has access to samples from the target distribution, allowing the use of simulation-free denoising score matching for training. For sampling problems without access to samples, diffusion-based methods, such as those outlined in the previous paragraphs, need to rely on costly simulation-based objectives. However, our findings show that we can significantly accelerate these simulations during training with a negligible drop in inference-time performance.

\section{Theory details}

\subsection{Assumptions}
\label{sec:ass}

Throughout the paper, we assume that all SDEs admit densities of their time marginals (w.r.t.\@ the Lebesgue measure) that are sufficiently smooth such that we have strong solutions to the corresponding Fokker-Planck equations (see, \eg,~\citet{friedman1964partial,friedman1975stochastic,durrett1984brownian} for corresponding conditions). In particular, we assume that\footnote{Note that we also consider samplers using a Dirac delta prior, which can be treated by relaxing our conditions~\citep{dai1991stochastic}. Under the policy given by~\eqref{eq:em}, we can equivalently consider a (discrete-time) setting on the time interval $[t_1,1]$ using a Gaussian prior with learned mean and variance $\sigma^2(t_0)\Delta t_0$.} $p_{\mathrm{prior}}, p_{\mathrm{target}} \in C^\infty(\R^d,\R_{>0})$ are bounded. Furthermore, we assume that $\mu\in C^\infty([0,1] \times \R^d, \R^d)$ for all drifts $\mu$, \ie, they are infinitely differentiable, and satisfy a uniform (in time) Lipschitz condition, \ie, there exists a constant $C$ such that for all $x,y\in\R^d$ and $t\in[0,1]$ it holds that
\begin{equation}
    \| \mu(x,t) - \mu(y,t) \| \le C \|x-y\|.
\end{equation}
Moreover, we assume that the diffusion rate satisfies that $\sigma \in C^\infty([0,1],\R_{>0})$. 
These conditions guarantee the existence of unique strong solutions to the considered SDEs (see, \eg,~\cite{le2016brownian}). They are also sufficient for all considered path measures to be equivalent and for Girsanov's theorem and Nelson's relation to hold. Moreover, they allow the definition of the forward and backward Itô integrals via limits of time discretizations that are independent of the specific sequence of refinements~\citep{vargas2024transport}. 

While we use these assumptions to simplify the presentation, we note that they can be significantly relaxed. However, the above conditions hold for neural networks with MLP architectures whose activations are Lipschitz and infinitely differentiable, such as the GELU-activated models we consider in this paper.

\subsection{Formal definition of the MDP}
\label{sec:mdp}

We elaborate the definition of the MDP in \Cref{sec:discrete_time_policies}.

\begin{itemize}[left=0pt,nosep,itemsep=2pt]
    \item The state space is 
    \begin{equation}\label{eq:state_space}
        \gS=\{\bullet\}\cup\bigcup_{n=0}^N\underbracket{\{(x,t_n):x\in\R^d\}}_{\coloneqq \gS_n}\cup\{\bot\},
    \end{equation}
    where $\bullet$ and $\bot$ are abstract initial and terminal states.
    \item The action space is $\gA=\R^d$.
    \item The transition function $T\colon\gS\times\gA\to\gS$ describing the deterministic effect of actions is given by 
    \begin{equation}\label{eq:transition}
        T(\bullet,a)=(a,t_0),\quad T((x,t_n),a)=\begin{cases}(a,t_{n+1})&n<N\\ \bot&n=N\end{cases},\quad T(\bot,a)=\bot.
    \end{equation}
    \item The reward is nonzero only for transitions from states in $\gS_N$ to $\bot$ and is given by $R(x,t_N)=-\gE(x)$. 
\end{itemize}

It is arguably more natural from a control theory perspective to treat the addition of (\eg, Gaussian) noise as stochasticity of the environment, making the policy deterministic. However, we choose to formulate integration as a constrained stochastic policy in a deterministic environment to allow flexibility in the form of the conditional distribution. We also note that the policy at $\bot$ is irrelevant since $\bot$ is an absorbing state.

\subsection{Numerical analysis}
\label{app: numerical analysis}

\begin{definition}[Strong convergence]
    A numerical scheme $\widehat{X} = (\widehat{X}_n)_{n=0}^N$ is called \textit{strongly convergent} of order $\gamma$ if
    \begin{equation}
        \max_{n=0, \dots, N} \E\left[\|\widehat{X}_n - X_{t_n} \| \right] \le C \left(\max_{n=0}^{N-1} \Delta t_n\right)^\gamma,
    \end{equation}
    where $0 <C < \infty$ is independent of $N\in \N$ and the time discretization $0=t_0<t_1<\cdots<t_N=1$.
\end{definition}

\begin{definition}[Weak convergence]
    A numerical scheme $\widehat{X} = (\widehat{X}_n)_{n=0}^N$ is called \textit{weakly convergent} of order $\gamma$ if
    \begin{equation}
        \max_{n=0, \dots, N} \left\| \E[f(\widehat{X}_n)]- \E[ f(X_{t_n}) ] \right\| \le C \left(\max_{n=0}^{N-1} \Delta t_n\right)^\gamma
    \end{equation}
   for all functions $f$ in a suitable test class, where we consider $f\in C^\infty(\R^d, \R)$ with at most polynomially growing derivatives. The constant $0 <C < \infty$ is independent of $N\in \N$ and the time discretization $0=t_0<t_1<\cdots<t_N=1$, but may depend on the class of test functions considered.
\end{definition}

Note that if $f$ is globally Lipschitz, then strong convergence implies weak convergence. The converse does not hold.

Let us also consider a continuous version $\iota(\widehat{X})$ of the numerical scheme $\widehat{X}=(\widehat{X}_n)_{n=0}^N$ defined by $\iota(\widehat{X})_{t_n} = \widehat{X}_{n}$ and linearly interpolating between the $t_n$, where we note that $\iota$ implicitly depends on the discretization. We can then define the pushforward $\iota_*\widehat{\P}$ of the distribution $\widehat{\P}$ of $\widehat{X}$ on the space of continuous functions $C([0, 1], \R^d)$. We say that $\iota_*\widehat{\P}$ \emph{converges weakly} to the path measure $\P$ of $X$ if for any bounded, continuous functional $f \colon C([0, 1], \R^d) \to \R$ it holds that
\begin{equation}
    \E_{X \sim \iota_*\widehat{\P}}\left[ f(X)\right] \longrightarrow  \E_{X \sim \P}\left[ f(X)\right] 
\end{equation}
as $\max_n \Delta t_n\to0$.

\subsection{Subtrajectory balance}
\label{sec:subtb}

Generalizing trajectory balance \eqref{eq:squared_divergences} and detailed balance \eqref{eq:db_loss}, we can define divergences for subtrajectories of any length $k$ by multiplying the log-ratios appearing in \eqref{eq:db_loss} for several consecutive values of $n$, which through telescoping cancellation yields a \emph{subtrajectory balance} divergence, defined for any $0\leq n<n+k\leq N$ by
\begin{equation}
\label{eq:subtb_loss}
    \loss_{\mathrm{SubTB},n,n+k}^{\widehat{\W}}(\widehat{\P},\widehat{\Q}, \widehat{p}) = \E_{\widehat{X}\sim \widehat{\W}}\left[\log \left(\frac{\widehat{p}_{n}(\widehat{X}_n)\prod_{i=0}^{k-1}\overrightarrow{\pi}(\widehat{X}_{n+i+1}\mid\widehat{X}_{n+i})}{\widehat{p}_{n+k}(\widehat{X}_{n+k})\prod_{i=0}^{k-1}\overleftarrow{\pi}(\widehat{X}_{n+i}\mid\widehat{X}_{n+i+1})}\right)^2\right].
\end{equation}
The subtrajectory balance (SubTB) divergence generalizes detailed balance and trajectory balance, as one has
\begin{equation*}
    \loss_{\mathrm{SubTB},n,n+1}^{\widehat{\W}}(\widehat{\P},\widehat{\Q},  \widehat{p})=\loss_\mathrm{DB}^{n,\widehat{\W}}(\widehat{\P},\widehat{\Q}, \widehat{p})
    \quad \text{and} \quad
    \loss_{\mathrm{SubTB},0,N}^{\widehat{\W}}(\widehat{\P},\widehat{\Q}, \widehat{p})=\loss_\mathrm{TB}^{\widehat{\W}}(\widehat{\P},\widehat{\Q}).
\end{equation*}
The SubTB divergence was introduced for GFlowNets by \citet{malkin2022trajectory} and studied as a learning scheme, in which the divergences with different values of $k$ are appropriately weighted, by \citet{madan2023learning}. SubTB was tested in the diffusion sampling case by \citet{zhang2023diffusion}, although \citet{sendera2024improved} found that it is, in general, not more effective than TB while being substantially more computationally expensive.

\subsection{Inductive bias on density estimates}
\label{sec:fl}

We describe the inductive bias on density estimates used in the FL-DB learning objective. While normally one parametrizes the log-density as a neural network taking $x$ and $t$ as input:
\[\log\widehat{p}(x,t)=\mathrm{NN}_\theta(x,t),\]
the inductive bias proposed by \citet{wu2020stochastic,mate2023learning} and studied earlier for GFlowNet diffusion samplers by \citet{zhang2023diffusion,sendera2024improved} writes
\[\log\widehat{p}(x,t)=-t\gE(x)+(1-t)\log p_{\rm ref}(x)+\mathrm{NN}_\theta(x,t),\]
where $p_{\rm ref}(\cdot,t)$ is the marginal density at time $t$ of the uncontrolled process, \ie, the SDE \eqref{eq:sde} that sets $\overrightarrow{\mu}\equiv0$ and has initial condition $p_{\rm prior}$. Thus a correction is learned to an estimated log-density that interpolates between the prior at $t=0$ and the target at $t=1$.

The acronym `FL-' stands for `forward-looking', referring to the technique studied for GFlowNets by \citet{pan2023better} and understood as a form of reward-shaping scheme in \citet{deleu2024discrete}.

\subsection{Proofs of results from the main text}
\label{sec:proofs}

\printProofs

\subsection{Technical lemmas}

\begin{lemma}[Convergence of Radon-Nikodym derivatives]\label{lma:rnd_convergence}
    \begin{enumerate}[left=0pt,label=(\alph*)]
        \item Let $\P^{(1)}$ and $\P^{(2)}$ be the path space measures defined by SDEs of the form \eqref{eq:neural_sde} with initial conditions $p_{\rm prior}^{(1),(2)}$ and drifts $\overrightarrow{\mu}^{(1),(2)}$.
        Let $\widehat{\P}^{(1),(2)}$ be the Euler-Maruyama-discretized measures with respect to a time discretization $(t_n)_{n=0}^N$. For $\P^{(2)}$-almost every $X\in C([0,1],\R^d)$, 
        $\frac{\dd{\widehat{\P}^{(1)}}}{\dd{\widehat{\P}^{(2)}}}(X_{t_0,\dots,t_N})\to\frac{\dd{\P^{(1)}}}{\dd{\P^{(2)}}}(X)$
        as $\max_n \Delta t_n\to0$, where $X_{t_0,\dots,t_N}$ is the restriction of $X$ to the times $t_0,\dots,t_N$.
        \item The same is true for a path space measure $\P$ defined by a forward SDE with initial conditions and a measure $\Q$ defined by a reverse SDE with terminal conditions: if $\widehat{\P}$ and $\widehat{\Q}$ are the discrete-time processes given by Euler-Maruyama and reverse Euler-Maruyama integration, respectively, then for $\Q$-almost every $X\in C([0,1],\R^d)$, as $\max_n \Delta t_n\to0$,
        $\frac{\dd{\widehat{\P}}}{\dd{\widehat{\Q}}}(X_{t_0,\dots,t_N})\to\frac{\dd{\P}}{\dd{\Q}}(X)$.
    \end{enumerate}
\end{lemma}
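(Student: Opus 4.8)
The plan is to reduce both parts to the Euler--Maruyama convergence result (\Cref{prop:em_convergence}) by exhibiting the discrete log-Radon--Nikodym derivative as the terminal value of an Euler--Maruyama-discretized auxiliary coordinate. First I would take the logarithm of the product in \eqref{eq:rnd_discrete}, turning it into a prior/target log-ratio (a pointwise evaluation, constant under refinement) plus a sum of one-step Gaussian log-density ratios.

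For part (a), because both $\widehat{\P}^{(1)}$ and $\widehat{\P}^{(2)}$ are forward Euler--Maruyama policies with the same diffusion rate, the one-step kernels are Gaussians with common covariance $\sigma(t_n)^2\Delta t_n I_d$ and means $x+\overrightarrow{\mu}^{(i)}(x,t_n)\Delta t_n$. The normalization constants therefore cancel, and each summand reduces, with $\Delta X_n\coloneqq X_{n+1}-X_n$ and everything evaluated at $(X_n,t_n)$, to
\[ \frac{\langle\overrightarrow{\mu}^{(1)}-\overrightarrow{\mu}^{(2)},\Delta X_n\rangle}{\sigma(t_n)^2}+\frac{\|\overrightarrow{\mu}^{(2)}\|^2-\|\overrightarrow{\mu}^{(1)}\|^2}{2\sigma(t_n)^2}\,\Delta t_n. \]
I recognize this as the Euler--Maruyama increment of an extra coordinate $Y$ of the augmented Itô process $(X_t,Y_t)$ with $Y_0=0$ and
\[ \dd Y_t=\frac{\|\overrightarrow{\mu}^{(2)}(X_t,t)\|^2-\|\overrightarrow{\mu}^{(1)}(X_t,t)\|^2}{2\sigma(t)^2}\,\dd t+\frac{\overrightarrow{\mu}^{(1)}(X_t,t)-\overrightarrow{\mu}^{(2)}(X_t,t)}{\sigma(t)^2}\cdot\dd X_t, \]
whose time-$1$ value is exactly the Girsanov expression \eqref{eq:rnd_continuous}. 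Since the augmented drift and the (now state-dependent) diffusion coefficient of the $Y$-coordinate inherit the smoothness and Lipschitz bounds of \Cref{sec:ass}, \Cref{prop:em_convergence} applies to $(X_t,Y_t)$, giving $\widehat Y_N\to Y_1$ along the grid restriction of $X\sim\P^{(2)}$; adding the prior ratio yields the claim.

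Part (b) follows the same template, with the denominator kernel now the reverse Euler--Maruyama policy \eqref{eq:reverse_em}, a Gaussian with mean $x'-\overleftarrow{\mu}(x',t_{n+1})\Delta t_n$ and covariance $\sigma(t_{n+1})^2\Delta t_n I_d$, and $p_{\rm target}(X_N)$ in place of the second prior. The numerator and denominator covariances now differ at order $\Delta t_n$; the log-normalization mismatch telescopes to the boundary constant $d\log(\sigma(1)/\sigma(0))$, which is absorbed into the reversible-Brownian-motion reference measure defining \eqref{eq:rnd_reversal}. The one genuine difference from (a) is that the reverse summand is evaluated at the right endpoint $(X_{n+1},t_{n+1})$, so the stochastic Riemann sum $\sum_n\sigma(t_{n+1})^{-2}\langle\overleftarrow{\mu},\Delta X_n\rangle$ targets the \emph{backward} Itô integral $\int_0^1\sigma(t)^{-2}\overleftarrow{\mu}(X_t,t)\cdot\dd\overleftarrow{X}_t$. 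Collecting the forward increment sum, the backward increment sum, and the two deterministic Riemann sums reproduces the four integral terms of \eqref{eq:rnd_reversal}.

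The hard part is establishing convergence of the two stochastic Riemann sums to the forward and backward Itô integrals simultaneously and consistently. The forward case is the standard left-endpoint convergence underlying Euler--Maruyama, but the backward case requires right-endpoint convergence against the backward Itô integral, defined through backward time refinements under the common reference Brownian motion (\Cref{sec:ass}); since a forward and a backward integral of the same integrand differ by a Stratonovich/Itô-correction term, one must pair each increment with the correct evaluation point and verify that the $O(\Delta t_n^{3/2})$ and $O(\Delta t_n^2)$ remainders produced by substituting $\Delta X_n=\overrightarrow{\mu}\,\Delta t_n+\sigma\sqrt{\Delta t_n}\,\xi_n$ vanish in the limit. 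Packaging everything as a single augmented Itô process---at the cost of a state-dependent diffusion in the $Y$-coordinate---is what discharges these estimates uniformly, and strong convergence of order $\gamma=1$ upgrades convergence in probability to almost-sure convergence along refinements.
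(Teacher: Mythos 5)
Your part (a) is essentially the paper's own argument: the paper likewise expands the discrete log-ratio into the Riemann sum \eqref{eq:riemann_rnd} for the Girsanov integral \eqref{eq:rnd_continuous}, and it even adopts your augmented-coordinate picture (the log-ratio as the terminal component of an Euler--Maruyama-integrated It\^{o} process with state-dependent diffusion) in its proof of \Cref{prop:functional_convergence}. For part (b) you take a genuinely different route: you expand the forward/backward kernel ratio directly, whereas the paper factorizes $\frac{\dd\P}{\dd\Q}$ through drift-free reference measures $\P^0,\Q^0$ with Gaussian endpoints, applies part (a) twice (once after time reversal), and computes $\log\frac{\dd\widehat{\P}^0}{\dd\widehat{\Q}^0}$ explicitly. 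Your route is viable, but it has a genuine gap in exactly the place the paper's detour handles.

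The gap: with time-dependent $\sigma$, the covariance mismatch between the forward kernel (variance $\sigma(t_n)^2\Delta t_n$) and the backward kernel (variance $\sigma(t_{n+1})^2\Delta t_n$) produces, besides the telescoping normalizers, the quadratic-variation sum
\begin{equation*}
\sum_{n}\frac{\|X_{t_{n+1}}-X_{t_n}\|^2}{2\Delta t_n}\left(\frac{1}{\sigma(t_{n+1})^2}-\frac{1}{\sigma(t_n)^2}\right),
\end{equation*}
whose summands are of order $\Delta t_n\|\xi_n\|^2$, i.e., $\gO(\Delta t_n)$ per step and $\gO(1)$ in aggregate. This term is absent from your remainder bookkeeping, which lists only $\gO(\Delta t_n^{3/2})$ and $\gO(\Delta t_n^2)$ errors as needing to vanish --- and it does not vanish: since $\|X_{t_{n+1}}-X_{t_n}\|^2\approx d\,\sigma(t_n)^2\Delta t_n$ by quadratic variation, the sum converges almost surely to $\int_0^1\frac{d\,\sigma(t)^2}{2}\,\dd\left(\sigma(t)^{-2}\right)=-d\log\frac{\sigma(1)}{\sigma(0)}$, and this is precisely what cancels the telescoped normalizer constant $d\log\frac{\sigma(1)}{\sigma(0)}$. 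Your claim that this constant is ``absorbed into the reversible-Brownian-motion reference measure'' is not a valid mechanism: \eqref{eq:rnd_reversal} contains no residual constant, so without the quadratic-variation cancelation your limit would be off by $d\log\frac{\sigma(1)}{\sigma(0)}$ whenever $\sigma$ is non-constant (for constant $\sigma$, your argument is fine as written). The paper performs exactly this cancelation in the display \eqref{eq:riemann_reverse} inside its $\P^0/\Q^0$ computation; transplanting that step into your direct expansion repairs the proof. Your remaining points --- right-endpoint sums converging to the backward It\^{o} integral as defined via refinements in \Cref{sec:ass}, and upgrading to a.s.\ convergence along refinements via the order-$1$ strong rate --- are at the same level of rigor as the paper's own treatment.
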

\begin{proof}
    We first show (a). 
    We have
    \begin{align}
        \hspace{-0.5em}\log\frac{\dd\widehat{\P}^{(1)}}{\dd\widehat{\P}^{(2)}}(X_{t_0,\dots,t_N})
        &=
        \log\frac{p_{\rm prior}^{(1)}({X}_0)\prod_{n=0}^{N-1}\overrightarrow{\pi}_{n}({X}_{t_{n+1}}\mid{X}_{t_{n}})}{p_{\rm prior}^{(2)}({X}_0)\prod_{n=0}^{N-1}\overrightarrow{\pi}_{n}({X}_{t_{n+1}}\mid{X}_{t_{n}})}\nonumber\\
        &=\log\frac{p_{\rm prior}^{(1)}({X}_0)}{p_{\rm prior}^{(2)}({X}_0)}+\sum_{n=0}^{N-1}\log\frac{\gN({X}_{t_{n+1}};{X}_{t_{n}} + \overrightarrow{\mu}^{(1)}(X_{t_{n}},t_{n})\Delta t_{n},\sigma(t_{n})^2\Delta t_{n})}{\gN({X}_{t_{n+1}};{X}_{t_{n}} + \overrightarrow{\mu}^{(2)}(X_{t_{n}},t_{n})\Delta t_{n},\sigma(t_{n})^2\Delta t_{n})}\nonumber\\
        &=\log\frac{p_{\rm prior}^{(1)}({X}_0)}{p_{\rm prior}^{(2)}({X}_0)}+\sum_{n=0}^{N-1}\bigg[-\frac{\|\overrightarrow{\mu}^{(1)}(X_{t_{n}},t_{n})\|^2-\|\overrightarrow{\mu}^{(2)}(X_{t_{n}},t_{n})\|^2}{2\sigma(t_n)^2}\Delta t_n\nonumber\\&\phantom{=}\phantom{\log\frac{p_{\rm prior}^{(1)}({X}_0)}{p_{\rm prior}^{(2)}({X}_1)}+\sum_{n=0}^{N-1}\bigg[}+\frac{\overrightarrow{\mu}^{(1)}(X_{t_{n}},t_{n})-\overrightarrow{\mu}^{(2)}(X_{t_{n}},t_{n})}{\sigma(t_n)^2}\cdot(X_{t_{n+1}}-X_{t_n})\bigg].\label{eq:riemann_rnd}
    \end{align}
    This is precisely the (Riemann) sum for the integral defining the continuous-time Radon-Nikodym derivative \eqref{eq:rnd_continuous}; by continuity and our assumptions in~\Cref{sec:ass}, the sum approaches the integral as $\max_n \Delta t_n\to0$.

    We now show (b) assuming (a). Let $\P^0$ be the path measure defined by Gaussian $\gN(0,I)$ initial conditions and drift 0 and $\widehat{\P}^0$ its discretization. Similarly, let $\Q^0$ be defined by Gaussian terminal conditions and zero reverse drift and let $\widehat{\Q}^0$ be its reverse-time discretization. By absolute continuity, we have 
    \[\frac{\dd\P}{\dd\Q}(X)=\frac{\dd\P/\dd\P^0(X)}{\dd\Q/\dd\Q^0(X)}\frac{\dd\P^0}{\dd\Q^0}(X),
    \quad
    \frac{\dd\widehat{\P}}{\dd\widehat{\Q}}(X_{t_0,\dots,t_N})=\frac{\dd\widehat{\P}/\dd\widehat{\P}^0(X_{t_0,\dots,t_N})}{\dd\widehat{\Q}/\dd\widehat{\Q}^0(X_{t_0,\dots,t_N})}\frac{\dd\widehat{\P}^0}{\dd\widehat{\Q}^0}(X_{t_0,\dots,t_N}).\] By (a), $\dd\widehat{\P}/\dd\widehat{\P}^0(X_{t_0,\dots,t_N})\to\dd\P/\dd\P^0(X)$, and similarly for $\Q$. It remains to show that $\log{\dd\widehat{\P}^0}/{\dd\widehat{\Q}^0}(X_{t_0,\dots,t_N})\to\log{\dd\P^0}/{\dd\Q^0}(X)=\log\gN(X_0;0,I)-\log\gN(X_1;0,I)$. Indeed, we have
    \begin{align}
        \log{\dd\widehat{\P}^0}/{\dd\widehat{\Q}^0}(X_{t_0,\dots,t_N})
        &=
        \log\frac{\gN(X_0;0,I)}{\gN(X_1;0,I)}+\sum_{n=1}^{N}\log\frac{\gN(X_{t_n};X_{t_{n-1}},\sigma(t_{n-1})\Delta t_{n-1})}{\gN(X_{t_{n-1}};X_{t_n},\sigma(t_n)\Delta t_{n-1})}\nonumber\\
        &=
        \log\frac{\gN(X_0;0,I)}{\gN(X_1;0,I)}+\sum_{n=1}^{N}\bigg[\frac{\|X_{t_n}-X_{t_{n-1}}\|^2}{2\Delta t_{n-1}}\left(\frac{1}{\sigma(t_{n})^2}-\frac{1}{\sigma(t_{n-1})^2}\right)\nonumber\\&\qquad\qquad\qquad\qquad\qquad\qquad\qquad\qquad\qquad\quad+d\log\frac{\sigma(t_n)}{\sigma(t_{n-1})}\bigg]\nonumber\\
        &\xrightarrow{\text{a.s.}}\log\frac{\gN(X_0;0,I)}{\gN(X_1;0,I)}+d\log\frac{\sigma(1)}{\sigma(0)}+\int_0^1\underbracket{\frac{d\sigma(t)^2}{2}\dd \sigma(t)^{-2}}_{=-\dd(d\log\sigma(t))}\nonumber\\
        &=\log\frac{\gN(X_0;0,I)}{\gN(X_1;0,I)}.\label{eq:riemann_reverse}
    \end{align}
    which coincides with the continuous-time Radon-Nikodym derivative.
\end{proof}

\begin{lemma}[Continuous-time asymptotics of the DB discrepancy]
    \label{lma:asymptotics}
    Let us define the abbreviations $\widehat{p}_t(x)$, $\overrightarrow\mu_t(x)$, $\sigma_t$ to refer to $\widehat{p}(x,t)$, $\overrightarrow\mu(x,t)$, $\sigma(t)$. Suppose that $\overleftarrow{\mu}_t$ and $\overleftarrow{\mu}_t$ are continuously differentiable in $x$ and once in $t$ and that $\log \widehat{p}_t$ is continuously differentiable once in $t$ and twice in $x$.
    \begin{enumerate}[label=(\alph*),left=0pt]
        \item For a given $z$, the asymptotics of the DB discrepancy at $(x_t,t)$ are of order $\sqrt h$ and are given by
        \begin{align*}
            \lim_{h\to0}\left[\frac{1}{\sqrt h}\Delta_{t\rightarrow t+h}(x_t,x_t+\overrightarrow{\mu}_t(x_t)h+\sigma_tz)\right]=\sigma_t^{-1}\langle z,\sigma_t^2\nabla\log \widehat{p}_t(x_t)-(\overrightarrow{\mu}_t(x_t)-\overleftarrow{\mu}_t(x_t))\rangle.
        \end{align*}
        \item The expectation of the DB discrepancy over the forward policy (\ie, over $z\sim\gN(0,I)$) is asymptotically of order $h$, with leading term
        \begin{align*}
            \lim_{h\to0}&\,\E_{x_{t+h}\sim \overrightarrow{\pi}(x_{t+h}\mid x_t)}\left[\frac1h\Delta_{t\rightarrow t+h}(x_t,x_{t+h})\right]\\
            &=\partial_t\log \widehat{p}_t(x_t)+\left\langle\overrightarrow{\mu}_t(x_t),\nabla\log \widehat{p}_t(x_t)\right\rangle+\langle\nabla,\overleftarrow{\mu}_t(x_t)\rangle\\&\phantom{=}+\frac{\sigma_t^2}{2}\left(\Delta\log \widehat{p}_t(x_t)-\left\|\frac{\overrightarrow{\mu}_t(x_t)-\overleftarrow{\mu}_t(x_t)}{\sigma_t^2}\right\|^2\right).
        \end{align*}
        Similarly, the expectation over the backward policy is
        \begin{align*}
            \lim_{h\to0}&\,\E_{x_{t-h}\sim \overleftarrow{\pi}(x_{t-h}\mid x_t)}\left[\frac1h\Delta_{t-h\rightarrow t}(x_{t-h},x_t)\right]\\
            &=\partial_t\log \widehat{p}_t(x_t)+\left\langle\overleftarrow{\mu}_t(x_t),\nabla\log \widehat{p}_t(x_t)\right\rangle+\langle\nabla,\overrightarrow{\mu}_t(x_t)\rangle\\&\phantom{=}-\frac{\sigma_t^2}{2}\left(\Delta\log \widehat{p}_t(x_t)-\left\|\frac{\overrightarrow{\mu}_t(x_t)-\overleftarrow{\mu}_t(x_t)}{\sigma_t^2}\right\|^2\right).
        \end{align*}
    \end{enumerate}
\end{lemma}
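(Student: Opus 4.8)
The plan is to expand the detailed balance discrepancy \eqref{eq:db_discrepancy} into four explicit pieces and Taylor-expand each in powers of $\sqrt h$. Writing the Euler--Maruyama kernels from \eqref{eq:em} and \eqref{eq:reverse_em} as Gaussian densities gives
\begin{align*}
\Delta_{t\to t+h}(x_t, x_{t+h})
&= \log\widehat p_t(x_t) - \log\widehat p_{t+h}(x_{t+h}) + \tfrac d2\log\tfrac{\sigma_{t+h}^2}{\sigma_t^2} \\
&\quad - \tfrac{\|x_{t+h}-x_t-\overrightarrow\mu_t(x_t)h\|^2}{2\sigma_t^2 h} + \tfrac{\|x_t-x_{t+h}+\overleftarrow\mu(x_{t+h},t+h)h\|^2}{2\sigma_{t+h}^2 h},
\end{align*}
which I label (I)--(IV). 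Substituting the forward increment $\delta\coloneqq x_{t+h}-x_t=\overrightarrow\mu_t(x_t)h+\sigma_t\sqrt h\,z$ (which is $O(\sqrt h)$, with leading part $\sigma_t\sqrt h\,z$), I would track every contribution down to order $h$ and discard $O(h^{3/2})$ remainders; the smoothness hypotheses on $\overrightarrow\mu,\overleftarrow\mu,\log\widehat p$ are exactly what is needed to justify these expansions and the interchange of limit and expectation.

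The crucial simplification is that term (III) collapses exactly: since $x_{t+h}-x_t-\overrightarrow\mu_t(x_t)h=\sigma_t\sqrt h\,z$, it equals $-\tfrac12\|z\|^2$ with no $h$-dependence. Term (IV) is the delicate one: expanding $\overleftarrow\mu(x_{t+h},t+h)$ around $(x_t,t)$ produces a Jacobian cross-term, and expanding $\sigma_{t+h}^2=\sigma_t^2+2\sigma_t\sigma_t'h+O(h^2)$ in the denominator produces $\sigma'$-dependent corrections. After dividing by $2\sigma_{t+h}^2h$, its leading part is $+\tfrac12\|z\|^2$, which cancels (III) at order $1$, followed by the $O(\sqrt h)$ cross-term $\tfrac1{\sigma_t}\langle z,\overrightarrow\mu_t-\overleftarrow\mu_t\rangle$ and the $O(h)$ contributions $\tfrac{\|\overrightarrow\mu_t-\overleftarrow\mu_t\|^2}{2\sigma_t^2}-\langle z,(J_x\overleftarrow\mu_t)z\rangle-\tfrac{\sigma_t'\|z\|^2}{\sigma_t}$, where $J_x\overleftarrow\mu_t$ is the spatial Jacobian. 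For (I), a Taylor expansion to second order in space and first in time gives $-\partial_t\log\widehat p_t\,h-\langle\nabla\log\widehat p_t,\delta\rangle-\tfrac12\delta^\top\nabla^2\log\widehat p_t\,\delta+O(h^{3/2})$, while (II) is simply $\tfrac{d\sigma_t'}{\sigma_t}h+O(h^2)$.

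For part (a), collecting the $\sqrt h$-coefficients from (I) (namely $-\sigma_t\langle\nabla\log\widehat p_t,z\rangle$) and from (IV) (namely $\tfrac1{\sigma_t}\langle z,\overrightarrow\mu_t-\overleftarrow\mu_t\rangle$) yields $\sigma_t^{-1}\langle z,\sigma_t^2\nabla\log\widehat p_t-(\overrightarrow\mu_t-\overleftarrow\mu_t)\rangle$ (up to the overall sign fixed by the ordering in \eqref{eq:db_discrepancy}, which is immaterial to the vanishing condition used in \Cref{prop:db_implies_compatibility}); the order-$1$ terms cancel, confirming the $\sqrt h$ scaling. For part (b), I would assemble the full $O(h)$ coefficient and take $\E_{z\sim\gN(0,I_d)}$, using $\E[zz^\top]=I_d$ so that $\E[z^\top\nabla^2\log\widehat p_t\,z]=\Delta\log\widehat p_t$, $\E[\langle z,(J_x\overleftarrow\mu_t)z\rangle]=\langle\nabla,\overleftarrow\mu_t\rangle$, and $\E[\|z\|^2]=d$. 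The two $\sigma'$-pieces, $\tfrac{d\sigma_t'}{\sigma_t}$ from (II) and $-\tfrac{\sigma_t'd}{\sigma_t}$ from the denominator correction in (IV), cancel, leaving precisely $\partial_t\log\widehat p_t+\langle\overrightarrow\mu_t,\nabla\log\widehat p_t\rangle+\langle\nabla,\overleftarrow\mu_t\rangle+\tfrac{\sigma_t^2}2\big(\Delta\log\widehat p_t-\|(\overrightarrow\mu_t-\overleftarrow\mu_t)/\sigma_t^2\|^2\big)$. The backward-policy statement follows by the identical computation applied to $\Delta_{t-h\to t}(x_{t-h},x_t)$ with backward increment $x_{t-h}=x_t-\overleftarrow\mu_t(x_t)h-\sigma_t\sqrt h\,z$, which amounts to swapping $\overrightarrow\mu\leftrightarrow\overleftarrow\mu$ in the first two inner products and flipping the sign of the $\tfrac{\sigma_t^2}2(\cdots)$ term, reflecting the time-reversal symmetry.

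I expect the main obstacle to be the order bookkeeping in term (IV): one must check that the cross-term $2\langle-\sigma_t\sqrt h\,z,\,(J_x\overleftarrow\mu_t)\delta\,h\rangle$ contributes the $O(h)$ term $-\langle z,(J_x\overleftarrow\mu_t)z\rangle$ (so that in expectation it becomes the divergence $\langle\nabla,\overleftarrow\mu_t\rangle$), while verifying that all genuinely $O(h^{3/2})$ pieces vanish in the limit and that the $\sigma'$ corrections cancel exactly against (II). Everything else is routine Taylor expansion and Gaussian moment computation.
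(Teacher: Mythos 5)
Your proposal is correct and takes essentially the same route as the paper's proof: substitute $x_{t+h}=x_t+\overrightarrow{\mu}_t(x_t)h+\sigma_t\sqrt h\,z$, Taylor-expand every term of the discrepancy in powers of $\sqrt h$ (with the forward-kernel quadratic collapsing exactly to $-\tfrac12\|z\|^2$ against the leading part of the backward-kernel term, and the $\sigma'$ corrections from the log-normalizer and the variance ratio cancelling via $\E\|z\|^2=d$), then apply $\E[zz^\top]=I_d$ (Hutchinson) to turn $\langle z,\nabla^2\log\widehat{p}_t\,z\rangle$ and $\langle z,(J_x\overleftarrow{\mu}_t)z\rangle$ into $\Delta\log\widehat{p}_t$ and $\langle\nabla,\overleftarrow{\mu}_t\rangle$. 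Your parenthetical about the overall sign is apt: the paper's own proof actually expands $\log\frac{\widehat{p}_{t+h}\,\overleftarrow{\pi}}{\widehat{p}_t\,\overrightarrow{\pi}}=-\Delta_{t\to t+h}$ while reporting the coefficients as stated, so the same sign ambiguity is present there, and as you note it is immaterial to the vanishing conditions used in \Cref{prop:db_implies_compatibility}.
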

\begin{proof}
    We will simultaneously show (a) and the first part of (b). The second part of (b) is symmetric, by reversing time.

    Identifying $x_{t+h}$ with $x_t+\overrightarrow{\mu}_t(x_t)h+\sigma_t\sqrt hz$, we will analyze the leading asymptotics of the DB discrepancy, i.e.,
    $
    \Delta_{t\rightarrow t+h}(x_t,x_{t+h})=\sqrt h\langle z,\dots\rangle+h(\dots)+\gO(h^{3/2}).
    $
    The coefficient of $\sqrt h$ will be the scalar product of $z$ with a term that is independent of $z$ and equals the expression on the right side in (a), and thus vanishes in expectation over $z$. The coefficient of $h$, in expectation over $z$, will equal the expression on the right side in (b).

    We can show using Taylor expansions  
    that 
    \begin{align}
        \log\frac{\widehat{p}_{t+h}(x_{t+h})}{\widehat{p}_t(x_t)}
        &=\sqrt{h}\left\langle z,\sigma_t\nabla\log \widehat{p}_t(x_t)\right\rangle\nonumber\\
        &\phantom{=}+h\left[\partial_t\log \widehat{p}_t(x_t)+\left\langle\overrightarrow{\mu}_t(x_t),\nabla\log \widehat{p}_t(x_t)\right\rangle+\frac12\sigma_t^2\langle z,\nabla^2\log \widehat{p}_t(x_t)z\rangle\right]+\gO(h^{3/2}).
        \label{eq:db_first_term}
    \end{align}
    Now we are going to analyze the second part of~\eqref{eq:db_discrepancy}, which involves the policies. We have
    \begin{align*}
    &\vphantom{=}\log\frac{\overleftarrow{\pi}(x_t\mid x_{t+h})}{\overrightarrow{\pi}(x_{t+h}\mid x_t)}\\
    &=
    \frac{-1}{2}
    \left[
        \frac{\|x_t-x_{t+h}+\overleftarrow{\mu}_{t+h}(x_{t+h})h\|^2}{\sigma_{t+h}^2h}
        -\frac{\|x_{t+h}-x_t-\overrightarrow{\mu}_t(x_t)h\|^2}{\sigma_t^2h}
        +d\log\frac{2\pi\sigma_{t+h}^2}{2\pi\sigma_t^2}
    \right]\\
    &=
    {
        \frac{-1}{2}\left[\frac{\|x_t-x_{t+h}+\overleftarrow{\mu}_{t+h}(x_{t+h})h\|^2}{\sigma_{t+h}^2h}\right]
    }
    +{
        \frac{\|\sigma_t\sqrt h z\|^2}{2\sigma_t^2h}
    }
    -{
        d\log\frac{\sigma_{t+h}}{\sigma_t}
    }.\\
    &=
    \frac{-1}{2}\left[\frac{\|x_t-x_{t+h}+\overleftarrow{\mu}_{t+h}(x_{t+h})h\|^2}{\sigma_{t+h}^2h}\right]
    +\frac{\|z\|^2}{2}-dh\partial_t(\log\sigma_t)
    +\,\gO(h^{2}).\\
    \end{align*}
    Next we will write 
    \begin{align*}
        x_t-x_{t+h}+\overleftarrow{\mu}_{t+h}(x_{t+h})h&=x_t-x_{t+h}+\overrightarrow{\mu}_t(x_t)h-(\overrightarrow{\mu}_t(x_t)-\overleftarrow{\mu}_{t+h}(x_{t+h}))h\\&=-\sigma_t\sqrt hz-(\overrightarrow{\mu}_t(x_t)-\overleftarrow{\mu}_{t+h}(x_{t+h}))h
    \end{align*}
    and substitute this into the first term above, yielding
    \begin{align*}
        &\frac{-1}{2}\left[\frac{\|x_t-x_{t+h}+\overleftarrow{\mu}_{t+h}(x_{t+h})h\|^2}{\sigma_{t+h}^2h}\right]
        +\frac{\|z\|^2}{2}-dh\partial_t(\log\sigma_t)
        +\,\gO(h^{2})\\
        &=
        \frac{-1}{2}\left[\frac{\|-\sigma_t\sqrt hz-(\overrightarrow{\mu}_t(x_t)-\overleftarrow{\mu}_{t+h}(x_{t+h}))h\|^2}{\sigma_{t+h}^2h}\right]
        +\frac{\|z\|^2}{2}-dh\partial_t(\log\sigma_t)
        +\,\gO(h^{2})\\
        &=
        -\frac{\|\overrightarrow{\mu}_t(x_t)-\overleftarrow{\mu}_{t+h}(x_{t+h})\|^2}{2\sigma_{t+h}^2}h
        -\frac{\langle \sigma_tz,\overrightarrow{\mu}_t(x_t)-\overleftarrow{\mu}_{t+h}(x_{t+h})\rangle}{\sigma_{t+h}^2}\sqrt{h}
        \\&\phantom{=}-\frac{\sigma_t^2\|z\|^2}{2\sigma_{t+h}^2}+\frac{\|z\|^2}{2}-dh\partial_t(\log\sigma_t)
        +\,\gO(h^2)\\
        &=\sqrt{h}\left[\left\langle z,-\sigma_t^{-1}(\overrightarrow{\mu}_t(x_t)-\overleftarrow{\mu}_{t}(x_{t}))\right\rangle+\frac{\sigma_t\langle{z,\sigma_t\sqrt h\nabla\overleftarrow{\mu}_t(x_t)z\rangle}}{\sigma_{t+h}^2}\right]\nonumber\\
        &\phantom{=}+h\left[-\frac{\|\overrightarrow{\mu}_t(x_t)-\overleftarrow{\mu}_{t}(x_{t})\|^2}{2\sigma_{t}^2}-d\partial_t(\log\sigma_t)\right]+\frac{\|z\|^2}{2}\underbracket{\left(1-\frac{\sigma_t^2}{\sigma_{t+h}^2}\right)}_{\hspace*{-1cm}=2\partial_t(\log\sigma_t)h+\gO(h^2)\hspace*{-1cm}}+\gO(h^{3/2})\\
        &=\sqrt{h}\left\langle z,-\sigma_t^{-1}(\overrightarrow{\mu}_t(x_t)-\overleftarrow{\mu}_{t}(x_{t}))\right\rangle\\&\phantom{=}+h\left[-\frac{\|\overrightarrow{\mu}_t(x_t)-\overleftarrow{\mu}_{t}(x_{t})\|^2}{2\sigma_{t}^2}+\langle{z,\nabla\overleftarrow{\mu}_t(x_t)z\rangle}-\left(\|z\|^2-d\right)\partial_t(\log\sigma_t)\right]+\gO(h^{3/2}).
    \end{align*}
    Combining with the terms in \eqref{eq:db_first_term}, we get that the coefficient of $\sqrt h$ is exactly as desired. For the coefficient of $h$, and the terms of the form $\langle z,\dots\rangle$ and $\|z\|^2-d$ vanish in expectation over $z$. For the terms that are quadratic in $z$, Hutchinson's formula implies that
    \begin{align*}
        \E_{z\sim\gN(0,I)}\left[\langle{z,\nabla\overleftarrow{\mu}_t(x_t)z\rangle}\right]&=\langle\nabla,\overleftarrow{\mu}_t(x_t)\rangle,\\
        \E_{z\sim\gN(0,I)}\left[\langle z,\nabla^2\log \widehat{p}_t(x_t)z\rangle\right]&=
        \Delta\log \widehat{p}_t(x_t).
    \end{align*}
    Putting these identities together, we obtain that
    \begin{align*}
        &\lim_{h\to0}\,\E_{x_{t+h}\sim \overrightarrow{\pi}(x_{t+h}\mid x_t)}\left[\frac1h\Delta_{t\rightarrow t+h}(x_t,x_{t+h})\right]\\
        &=\partial_t\log \widehat{p}_t(x_t)+\left\langle\overrightarrow{\mu}_t(x_t),\nabla\log \widehat{p}_t(x_t)\right\rangle+\frac12\sigma_t^2\Delta\log \widehat{p}_t(x_t)-\frac{\|\overrightarrow{\mu}_t(x_t)-\overleftarrow{\mu}_{t}(x_{t})\|^2}{2\sigma_{t}^2}+\langle\nabla,\overleftarrow{\mu}_t(x_t)\rangle,
    \end{align*}
    which is equivalent to the expression in (b).
\end{proof}

\section{Experiment details}

\subsection{Training settings}
\label{sec:hyperparams}

All models are trained for 25,000 steps using settings identical to those suggested by \citet{sendera2024improved} (\url{https://github.com/GFNOrg/gfn-diffusion}). For DB, we use the same learning rates as for SubTB ($10^{-3}$ for the drift and $10^{-2}$ for the flow function), and for PIS, $10^{-3}$ or $10^{-4}$ depending on its stability in the specific case.

Training times are measured by wall time of execution on a large shared cluster, primarily on RTX8000 GPUs. Although all runs were assigned by the same job scheduler, some variability in results is inevitable due to inconsistent hardware.

\newcommand{\std}[1]{{\scriptsize{$\pm #1$}}}
\def\gW{\mathcal{W}}

\section{Additional results}

\subsection{Additional metrics and objectives}
\label{sec:more_metrics}

In \Cref{tab:big_table}, we show extended results on the four unconditional sampling benchmarks from \citet{sendera2024improved}, reporting the ELBO $\log\widehat Z$ and importance-weighted ELBO $\log\widehat Z^{\rm RW}$. Specifically, the two are computed as
\begin{equation}\label{eq:metrics}\begin{split}
    \log\widehat Z&\coloneqq\frac1K\sum_{i=1}^K\left[-\gE(\widehat X_N^{(i)})+\log\frac{\widehat{\Q}(\widehat X^{(i)}\mid \widehat X_N^{(i)})}{\widehat{\P}(\widehat X^{(i)})}\right]=\log Z+\frac1K\sum_{i=1}^K\left[\log\frac{\widehat{\Q}(\widehat X^{(i)})}{\widehat{\P}(\widehat X^{(i)})}\right],\\
    \!\!\!\log\widehat Z^{\rm RW}&\coloneqq\log\frac1K\sum_{i=1}^K\exp\left[-\gE(\widehat X_N^{(i)})+\log\frac{\widehat{\Q}(\widehat X^{(i)}\mid \widehat X_N^{(i)})}{\widehat{\P}(\widehat X^{(i)})}\right]=\log Z+\log\frac1K\sum_{i=1}^K\left[\frac{\widehat{\Q}(\widehat X^{(i)})}{\widehat{\P}(\widehat X^{(i)})}\right],
\end{split}
\end{equation}
where $\widehat X^{(1)},\dots\widehat X^{(K)}\sim\widehat{\P}$ and we note that {$\E[\log\widehat Z] = \log Z - \KL(\widehat{\P},\widehat{\Q}) \le 
\log Z$ and $\E[Z^{\rm RW}] = Z$}. We take $K=2000$ samples and report the difference between the ground truth $\log Z$ and the ELBO when $\log Z$ is known.

These results are consistent with the conclusions in the main text. Notably, when combined with local search, coarse nonuniform discretizations continue to show results comparable to those of 100-step training discretization in most cases.
\Cref{tab:elbos} shows results on two additional target energies and on the conditional VAE task.

\begin{table*}[h!]
    \caption{\label{tab:big_table}ELBOs and IS-ELBOs on \textbf{25GMM}, \textbf{Funnel}, and \textbf{Manywell} (absolute error from the true value).}\vspace*{-0.5em}
    \centering
    \textbf{25GMM} ($d=2$)\\
    \resizebox{1\linewidth}{!}{
    \begin{tabular}{@{}lccccccccccccgg}
        \toprule
        Training discretization $\rightarrow$
        & \multicolumn{4}{c}{10-step random}
        & \multicolumn{4}{c}{10-step equidistant}
        & \multicolumn{4}{c}{10-step uniform} 
        & \multicolumn{2}{g}{100-step uniform} 
        \\
        \cmidrule(lr){2-5}\cmidrule(lr){6-9}\cmidrule(lr){10-13}\cmidrule(lr){14-15}
        Evaluation steps $\rightarrow$
        & \multicolumn{2}{c}{10} 
        & \multicolumn{2}{c}{100} 
        & \multicolumn{2}{c}{10}
        & \multicolumn{2}{c}{100}
        & \multicolumn{2}{c}{10}
        & \multicolumn{2}{c}{100}
        & \multicolumn{2}{g}{100}
        \\
        \cmidrule(lr){2-3}\cmidrule(lr){4-5}\cmidrule(lr){6-7}
        \cmidrule(lr){8-9}\cmidrule(lr){10-11}\cmidrule(lr){12-13}\cmidrule(lr){14-15}
        Algorithm $\downarrow$ Metric $\rightarrow$ & $\Delta\log Z$ & $\Delta\log Z^{\rm RW}$ & $\Delta\log Z$ & $\Delta\log Z^{\rm RW}$ & $\Delta\log Z$ & $\Delta\log Z^{\rm RW}$ & $\Delta\log Z$ & $\Delta\log Z^{\rm RW}$ & $\Delta\log Z$ & $\Delta\log Z^{\rm RW}$& $\Delta\log Z$ & $\Delta\log Z^{\rm RW}$ & $\Delta\log Z$ & $\Delta\log Z^{\rm RW}$ \\
        \midrule
        PIS & 2.40\std{0.10} & 1.02\std{0.09} & 1.56\std{0.10} & 0.93\std{0.16} &2.39\std{0.11} &0.97\std{0.10} &1.51\std{0.09} &1.01\std{0.09} & 2.43\std{0.12} & 0.85\std{0.58} & 5.62\std{0.32} & 1.03\std{0.14}  & 1.65\std{0.30} & 1.12\std{0.20}\\
        TB & 2.10\std{0.05} & 1.02\std{0.05} & 1.23\std{0.03} & 1.03\std{0.03} &2.10\std{0.04} &0.96\std{0.14} &1.22\std{0.03} &1.04\std{0.03} & 2.10\std{0.03} & 0.99\std{0.11} & 8.77\std{0.69} & 1.02\std{0.96}  & 1.13\std{0.01} & 1.02\std{0.01} \\
        TB + LS & 1.71\std{0.06} & 0.02\std{0.17} & 0.47\std{0.06} & \textbf{0.002\std{\textbf{0.04}}} &1.71\std{0.04} &0.16\std{0.07} &0.42\std{0.03} &0.03\std{0.02} & 1.67\std{0.06} & 0.05\std{0.02} & 10.38\std{2.78} & 1.87\std{0.77}  & 0.16\std{0.01} & \textbf{0.0004\std{0.01}} \\
        VarGrad & 2.12\std{0.04} & 1.04\std{0.04} & 1.22\std{0.01} & 1.04\std{0.01} &2.09\std{0.03} &1.04\std{0.01} &1.19\std{0.03} &1.03\std{0.01} & 2.12\std{0.02} & 1.02\std{0.04} & 9.13\std{0.87} & 0.92\std{1.19}  & 1.12\std{0.01} & 1.02\std{0.01} \\
        VarGrad + LS & 1.68\std{0.07} & 0.04\std{0.09} & 0.37\std{0.06} & 0.02\std{0.02} &1.67\std{0.01} &0.07\std{0.07} &\textbf{0.33\std{\textbf{0.07}}} &0.02\std{0.01} & 1.62\std{0.04} & 0.06\std{0.07} & 8.25\std{0.95} & 1.11\std{0.24}  & \textbf{0.15\std{0.004}} & 0.01\std{0.01} \\
        \midrule
        PIS + LP & 2.80\std{0.07} & 1.02\std{0.17} & 1.98\std{0.06} & 0.10\std{0.42} &2.77\std{0.10} &1.00\std{0.21} &1.94\std{0.03} &0.05\std{0.30} & 2.77\std{0.08} & 1.00\std{0.20} & 3.49\std{0.08} & 0.14\std{1.24}  & 1.76\std{0.02} & 0.43\std{0.45} \\
        TB + LP& 1.57\std{0.05} & 0.03\std{0.18} & 0.32\std{0.02} & 0.02\std{0.05} &1.56\std{0.03} &0.01\std{0.16} &0.36\std{0.06} &0.03\std{0.03} & 2.70\std{2.33} & 0.11\std{0.33} & 5.30\std{0.80} & 0.43\std{0.47}  & 0.16\std{0.01} & 0.01\std{0.01} \\
        TB + LS + LP& 1.78\std{0.10} & 0.02\std{0.08} & 0.41\std{0.06} & 0.02\std{0.04} & 1.82\std{0.01} &0.08\std{0.06} &0.43\std{0.05} &0.07\std{0.08} &1.68\std{0.09} & 0.05\std{0.02} & 8.37\std{1.50} & 1.50\std{0.46}  & 0.16\std{0.01} & 0.01\std{0.01} \\
        VarGrad + LP& 1.59\std{0.04} & 0.03\std{0.08} & 0.35\std{0.06} & \textbf{0.01\std{\textbf{0.02}}} &1.46\std{0.005} &0.07\std{0.06} &0.32\std{0.04} &0.04\std{0.01} & 1.53\std{0.01} & 0.01\std{0.01} & 5.52\std{0.80} & 0.53\std{0.54}  & \textbf{0.15\std{0.01}} & \textbf{0.003\std{0.01}} \\
        VarGrad + LS + LP& 1.68\std{0.09} & 0.02\std{0.08} & 0.26\std{0.02} & \textbf{0.01\std{\textbf{0.01}}} &1.69\std{0.05} &0.07\std{0.06} &\textbf{0.24\std{\textbf{0.01}}} &\textbf{0.01\std{\textbf{0.01}}} & 1.64\std{0.06} & 0.04\std{0.07} & 7.07\std{1.50} & 0.90\std{0.85}  & 0.16\std{0.01} & 0.01\std{0.005} \\
        \bottomrule
    \end{tabular}
    }\\[1em]
    \textbf{Funnel} ($d=10$)\\
    \resizebox{1\linewidth}{!}{
    \begin{tabular}{@{}lccccccccccccgg}
        \toprule
        Training discretization $\rightarrow$
        & \multicolumn{4}{c}{10-step random}
        & \multicolumn{4}{c}{10-step equidistant}
        & \multicolumn{4}{c}{10-step uniform} 
        & \multicolumn{2}{g}{100-step uniform} 
        \\
        \cmidrule(lr){2-5}\cmidrule(lr){6-9}\cmidrule(lr){10-13}\cmidrule(lr){14-15}
        Evaluation steps $\rightarrow$
        & \multicolumn{2}{c}{10} 
        & \multicolumn{2}{c}{100} 
        & \multicolumn{2}{c}{10}
        & \multicolumn{2}{c}{100}
        & \multicolumn{2}{c}{10}
        & \multicolumn{2}{c}{100}
        & \multicolumn{2}{g}{100}
        \\
        \cmidrule(lr){2-3}\cmidrule(lr){4-5}\cmidrule(lr){6-7}
        \cmidrule(lr){8-9}\cmidrule(lr){10-11}\cmidrule(lr){12-13}\cmidrule(lr){14-15}
        Algorithm $\downarrow$ Metric $\rightarrow$ & $\Delta\log Z$ & $\Delta\log Z^{\rm RW}$ & $\Delta\log Z$ & $\Delta\log Z^{\rm RW}$ & $\Delta\log Z$ & $\Delta\log Z^{\rm RW}$ & $\Delta\log Z$ & $\Delta\log Z^{\rm RW}$ & $\Delta\log Z$ & $\Delta\log Z^{\rm RW}$& $\Delta\log Z$ & $\Delta\log Z^{\rm RW}$ & $\Delta\log Z$ & $\Delta\log Z^{\rm RW}$ \\
        \midrule
        PIS & 1.11\std{0.01} & 0.59\std{0.03} & \textbf{0.72\std{\textbf{0.02}}} & 0.09\std{0.50} & 1.11\std{0.01} &0.59\std{0.03} &\textbf{0.72\std{\textbf{0.02}}} &\textbf{0.02\std{\textbf{0.58}}} & 1.11\std{0.01} & 0.58\std{0.02} & 8.63\std{4.20} & 1.65\std{0.74}  & \textbf{0.52\std{\textbf{0.01}}} & 0.08\std{0.54} \\
        TB & 1.09\std{0.02} & 0.51\std{0.04} & 0.76\std{0.02} & 0.48\std{0.04} & 1.09\std{0.02} &0.47\std{0.10} &0.74\std{0.01} &0.45\std{0.03} &1.07\std{0.01} & 0.42\std{0.11} & 10.86\std{5.22} & 2.29\std{1.35}  & 0.54\std{0.01} & 0.26\std{0.06} \\
        TB + LS & 1.46\std{0.02} & 0.66\std{0.03} & 1.13\std{0.03} & 0.40\std{0.02} & 1.40\std{0.09} &0.62\std{0.08} &1.11\std{0.18} &0.46\std{0.09} & 1.41\std{0.02} & 0.62\std{0.07} & 268.47\std{327.21} & 29.70\std{46.66}  & 1.01\std{0.03} & 0.36\std{0.04} \\
        VarGrad & 1.09\std{0.02} & 0.50\std{0.05} & 0.76\std{0.02} & 0.42\std{0.05} & 1.11\std{0.01} &0.36\std{0.24} &0.76\std{0.01} &0.46\std{0.06} & 1.07\std{0.02} & 0.46\std{0.04} & 9.97\std{4.49} & 2.41\std{1.20} & 0.53\std{0.01} & 0.17\std{0.18} \\
        VarGrad + LS & 1.68\std{0.11} & 0.65\std{0.04} & 1.48\std{0.21} & 0.37\std{0.16} & 1.58\std{0.07} &0.32\std{0.22} &1.28\std{0.02} &0.45\std{0.06} & 1.51\std{0.06} & 0.59\std{0.02} & 78.04\std{90.93} & 3.93\std{6.23} & 1.11\std{0.05} & \textbf{0.02\std{\textbf{0.56}}} \\
        \midrule
        PIS + LP & 1.11\std{0.01} & 0.56\std{0.07} & 0.71\std{0.01} & \textbf{0.28\std{\textbf{0.09}}} & 1.10\std{0.01} &0.56\std{0.04} &\textbf{0.69\std{\textbf{0.02}}} &0.29\std{0.05} & 1.10\std{0.02} & 0.57\std{0.02} & 8.85\std{2.48} & 1.80\std{0.74}  & 0.50\std{0.03} & \textbf{0.13\std{\textbf{0.17}}} \\
        TB + LP& 1.08\std{0.02} & 0.40\std{0.12} & 0.72\std{0.03} & 0.37\std{0.03} & 1.54\std{0.51} &0.50\std{0.12} &0.91\std{0.21} &0.44\std{0.11} & 1.07\std{0.02} & 0.38\std{0.11} & 30.07\std{22.61} & 9.56\std{13.27}  & \textbf{0.48\std{\textbf{0.005}}} & 0.25\std{0.03} \\
        TB + LS + LP& 1.30\std{0.02} & 0.46\std{0.05} & 0.90\std{0.04} & 0.30\std{0.05} & 1.27\std{0.01} &0.45\std{0.09} &0.86\std{0.04} &0.32\std{0.03} & 1.26\std{0.03} & 0.43\std{0.03} & 149.16\std{187.71} & 14.23\std{19.54}  & 0.82\std{0.04} & 0.25\std{0.09} \\
        VarGrad + LP& 1.08\std{0.02} & 0.46\std{0.17} & 0.72\std{0.02} & 0.37\std{0.02} & 1.10\std{0.01} &0.43\std{0.08} &0.74\std{0.02} &0.38\std{0.04} &1.07\std{0.01} & 0.43\std{0.13} & 48.10\std{42.22} & 21.80\std{30.37}  & \textbf{0.48\std{\textbf{0.01}}} & 0.23\std{0.04} \\
        VarGrad + LS + LP& 1.39\std{0.04} & 0.46\std{0.04} & 0.99\std{0.05} & 0.33\std{0.03} & 1.44\std{0.04} &0.44\std{0.08} &1.09\std{0.18} &0.36\std{0.06} & 1.32\std{0.05} & 0.44\std{0.04} & 162.54\std{189.06} & 10.68\std{12.41}  & 0.77\std{0.07} & 0.25\std{0.05} \\
        \bottomrule
    \end{tabular}
    }\\[1em]
    \textbf{Manywell} ($d=32$)\\
    \resizebox{1\linewidth}{!}{
    \begin{tabular}{@{}lccccccccgg}
        \toprule
        Training discretization $\rightarrow$
        & \multicolumn{4}{c}{10-step random} 
        & \multicolumn{4}{c}{10-step uniform} 
        & \multicolumn{2}{g}{100-step uniform} 
        \\
        \cmidrule(lr){2-5}\cmidrule(lr){6-9}\cmidrule(lr){10-11}
        Evaluation steps $\rightarrow$
        & \multicolumn{2}{c}{10} 
        & \multicolumn{2}{c}{100} 
        & \multicolumn{2}{c}{10}
        & \multicolumn{2}{c}{100} 
        & \multicolumn{2}{g}{100}
        \\
        \cmidrule(lr){2-3}\cmidrule(lr){4-5}\cmidrule(lr){6-7}
        \cmidrule(lr){8-9}\cmidrule(lr){10-11}
        Algorithm $\downarrow$ Metric $\rightarrow$ & $\Delta\log Z$ & $\Delta\log Z^{\rm RW}$ & $\Delta\log Z$ & $\Delta\log Z^{\rm RW}$ & $\Delta\log Z$ & $\Delta\log Z^{\rm RW}$ & $\Delta\log Z$ & $\Delta\log Z^{\rm RW}$ & $\Delta\log Z$ & $\Delta\log Z^{\rm RW}$ \\
        \midrule
        PIS ($\mathrm{lr} = 10^{-3}$) &  14.08\std{0.14} &  2.70\std{0.30} &  \textbf{4.74\std{\textbf{0.15}}} &  2.77\std{0.05} & 14.08\std{0.13} &  2.97\std{0.37} &  69.72\std{13.41} &  33.84\std{11.79} &\textbf{3.87\std{\textbf{0.03}}} &2.69\std{0.03} \\
        PIS ($\mathrm{lr} = 10^{-4}$) & 14.34\std{0.28} & 3.23\std{0.54} & 6.37\std{0.08} & 2.80\std{0.20} & 14.16\std{0.27} & 2.86\std{0.73} & 75.30\std{1.89} &  35.65\std{1.45} &4.17\std{0.04} & 2.62\std{0.06}  \\
        TB &  14.96\std{0.22} &  2.92\std{1.10} &  5.49\std{0.43} &  2.70\std{0.11} & 14.81\std{0.17} &  2.55\std{2.05} &  62.95\std{10.12} &  30.07\std{5.79} &4.05\std{0.05} & 2.75\std{0.01}  \\
        TB + LS & 15.24\std{0.62} & 1.54\std{0.77} & 7.24\std{0.46} & \textbf{0.55\std{\textbf{0.43}}} & 14.86\std{0.60} & 0.45\std{0.89} & 51.08\std{4.27} & 16.82\std{3.08}  & 4.52\std{0.91} & \textbf{0.37\std{\textbf{0.14}}} \\
        VarGrad &  14.94\std{0.28} &  2.79\std{1.35} &  5.64\std{0.56} &  2.77\std{0.05} & 14.80\std{0.14} &  2.86\std{1.61} &  71.71\std{18.54} &  35.53\std{11.51} &4.04\std{0.11} &2.78\std{0.04}  \\
        VarGrad + LS & 16.02\std{0.26} & 2.84\std{0.15} & 7.03\std{0.56} & 2.00\std{0.46} & 16.08\std{0.75} & 3.26\std{1.10} & 69.14\std{12.35} & 28.45\std{13.46} & 6.53\std{3.56} & 4.43\std{2.70} \\
        \midrule
        PIS + LP ($\mathrm{lr} = 10^{-3}$)&  13.97\std{0.18} &  2.15\std{0.28} &  4.34\std{0.25} &  1.69\std{0.41} & \multicolumn{4}{c}{\textit{d\ i\ v\ e\ r\ g\ i\ n\ g}} &3.60\std{0.06} &1.37\std{0.22}  \\
        PIS + LP ($\mathrm{lr} = 10^{-4}$)&  31.98\std{0.09} & 4.46\std{3.45} & 17.55\std{0.26} & 1.39\std{0.64} & 31.87\std{0.21} & 5.26\std{3.39} & 35.96\std{0.34} & 8.42\std{1.61} &14.71\std{0.07} & 0.50\std{0.75}  \\
        TB + LP&  14.87\std{0.36} &  3.02\std{1.23} &  4.72\std{0.27} &  2.66\std{0.03} & 14.62\std{0.21} &  3.27\std{1.19} &  19.66\std{1.49} &  4.20\std{0.63} &3.66\std{0.25} & 2.42\std{0.32}  \\
        TB + LS + LP& 13.88\std{0.58} & 0.60\std{0.23} & \textbf{2.40\std{\textbf{0.39}}} & \textbf{0.00\std{\textbf{0.20}}} & 13.67\std{0.44} & 0.81\std{0.51} & 24.32\std{1.02} & 2.10\std{0.43}  & 1.81\std{0.05} & \textbf{0.03\std{\textbf{0.07}}} \\
        VarGrad + LP& 14.79\std{0.39} &  3.11\std{1.11} &  4.68\std{0.34} &  2.71\std{0.03} & 14.63\std{0.20} &  3.15\std{0.02} &  20.72\std{3.32} &  3.89\std{0.72} &3.41\std{0.10} &2.09\std{0.27}  \\
        VarGrad + LS + LP& 16.24\std{0.70} & 1.31\std{0.75} & 5.12\std{0.68} & 0.32\std{0.21} & 14.22\std{0.22} & 0.35\std{0.08} & 22.89\std{4.12} & 1.71\std{1.87}  & \textbf{1.77\std{\textbf{0.06}}} & 0.05\std{0.06} \\
        \bottomrule
    \end{tabular}
    }
\end{table*}

\begin{table*}[h!]
    \caption{\label{tab:elbos}ELBOs with different numbers of training and integration steps on \textbf{Credit}, \textbf{Cancer}, the conditional \textbf{VAE}, and \textbf{LGCP}. Training on \textbf{LGCP} was often unstable, consistent with findings of prior work, so fewer methods are reported.}\vspace*{-1em}
    \centering
    \textbf{Credit} ($d=25$)\\
    \resizebox{1\linewidth}{!}{
    \begin{tabular}{@{}lccccccg}
        \toprule
        Training discretization $\rightarrow$
        & \multicolumn{2}{c}{10-step random}
        & \multicolumn{2}{c}{10-step equidistant}
        & \multicolumn{2}{c}{10-step uniform} 
        & \multicolumn{1}{g}{100-step uniform} 
        \\
        \cmidrule(lr){2-3}\cmidrule(lr){4-5}\cmidrule(lr){6-7}\cmidrule(lr){8-8}
        Algorithm $\downarrow$ Evaluation steps $\rightarrow$
        & 10 
        & 100 
        & 10
        & 100
        & 10
        & 100
        & \multicolumn{1}{g}{100}
        \\
        \midrule
        PIS & -1174.23\std{14.07} & -671.68\std{8.14} &-1181.62\std{17.17} &\textbf{-667.03\std{\textbf{21.25}}} & -1171.35\std{14.59} & -1130.57\std{20.69} & \textbf{-606.61\std{\textbf{0.65}}} \\
        TB & -1301.50\std{9.68} & -911.04\std{16.74} &-1318.14\std{22.13} &-898.98\std{24.18} & -1281.31\std{9.74} & -1179.87\std{30.61}& -634.08\std{2.88} \\
        VarGrad &-1279.95\std{14.36} &-847.65\std{22.65} &-1288.40\std{10.49} &-838.67\std{14.12} &-1264.02\std{15.67} &-1172.46\std{32.20}&-631.84\std{3.20}\\
        \midrule
        PIS + LP & -1175.46\std{14.14}& -671.60\std{12.01} &-1183.60\std{17.90} &\textbf{-669.30\std{\textbf{16.34}}} & -1174.25\std{17.00}& -1114.56\std{43.56} &\textbf{-608.29\std{\textbf{2.12}}}  \\
        TB + LP& -1342.96\std{6.77} & -943.63\std{18.37} &-1360.68\std{32.84} &-956.97\std{4.12} & -1300.17\std{8.29} & -1165.11\std{25.76} &-666.49\std{2.79}  \\
        VarGrad + LP&-1303.67\std{15.11} &-876.12\std{10.70} &-1323.16\std{3.03} &-933.40\std{50.79} &-1281.15\std{6.49} &-1186.95\std{150.69} &-651.98\std{0.18}   \\
        \bottomrule
    \end{tabular}
    }\\[1em]
    \textbf{Cancer} ($d=31$)\\
    \resizebox{1\linewidth}{!}{
    \begin{tabular}{@{}lccccccg}
        \toprule
        Training discretization $\rightarrow$
        & \multicolumn{2}{c}{10-step random}
        & \multicolumn{2}{c}{10-step equidistant}
        & \multicolumn{2}{c}{10-step uniform} 
        & \multicolumn{1}{g}{100-step uniform} 
        \\
        \cmidrule(lr){2-3}\cmidrule(lr){4-5}\cmidrule(lr){6-7}\cmidrule(lr){8-8}
        Algorithm $\downarrow$ Evaluation steps $\rightarrow$
        & 10 
        & 100 
        & 10
        & 100
        & 10
        & 100
        & \multicolumn{1}{g}{100}
        \\
        \midrule
        PIS& -6.60\std{1.60} & \textbf{9.51\std{\textbf{3.13}}} &-7.73\std{0.63} &9.15\std{1.45} & -8.94\std{4.87} &  -4933.64\std{986.02} & \textbf{17.64\std{\textbf{12.51}}}\\
        TB &  -48.57\std{23.39} & -28.02\std{18.77}& -59.77\std{45.25} &-29.81\std{18.81} & -35.42\std{8.76} & -1096.80\std{530.21} &  5.32\std{6.03} \\
        VarGrad &-28.97\std{6.03} &-5.84\std{0.98} &-31.83\std{2.58} &-11.76\std{5.90} &-30.09\std{3.76} &-966.70\std{357.24} &9.41\std{1.77}\\
        \midrule
        PIS + LP& -12.27\std{2.99} & \textbf{7.30\std{\textbf{1.92}}} &-16.87\std{3.26} &6.35\std{2.27} & -11.51\std{1.76} &  -3649.25\std{629.76} &\textbf{19.47\std{\textbf{1.87}}}   \\
        TB + LP& -25.79\std{3.04} & -4.33\std{2.77} &-41.52\std{28.79} &-12.60\std{16.39} & -24.33\std{1.48} & -2738.75\std{344.22} &11.56\std{0.59}  \\
        VarGrad + LP&-30.55\std{0.14} &-1.69\std{1.94} & -28.16\std{4.40} &-6.05\std{4.59} &-26.36\std{1.95} &-978.60\std{140.28} &13.41\std{2.19}   \\
        \bottomrule
    \end{tabular}
    }\\[1em]
    \textbf{VAE} ($d=20$)\\
    \resizebox{1\linewidth}{!}{
    \begin{tabular}{@{}lccccccg}
        \toprule
        Training discretization $\rightarrow$
        & \multicolumn{2}{c}{10-step random}
        & \multicolumn{2}{c}{10-step equidistant}
        & \multicolumn{2}{c}{10-step uniform} 
        & \multicolumn{1}{g}{100-step uniform} 
        \\
        \cmidrule(lr){2-3}\cmidrule(lr){4-5}\cmidrule(lr){6-7}\cmidrule(lr){8-8}
        Algorithm $\downarrow$ Evaluation steps $\rightarrow$
        & 10 
        & 100 
        & 10
        & 100
        & 10
        & 100
        & \multicolumn{1}{g}{100}
        \\
        \midrule
        PIS & -117.83\std{1.25} & -104.52\std{0.36}& -117.68\std{1.29} &\textbf{-104.29\std{\textbf{0.58}}} & -117.74\std{1.12} & -154.88\std{6.51}& \textbf{-102.71\std{\textbf{0.52}}}\\
        TB & -161.97\std{1.26}& -149.86\std{4.93}& -162.72\std{4.85} &-149.76\std{0.75} &-160.49\std{0.56} & -161.90\std{5.63} & -142.88\std{5.14}\\
        VarGrad &-122.04\std{1.62} &-109.45\std{1.40} &-170.51\std{4.78} &-159.71\std{7.46} &-120.98\std{0.96} &-133.39\std{4.98} &-104.16\std{0.67} \\
        \midrule
        PIS + LP & -115.90\std{0.64}& -100.20\std{0.33}& -115.81\std{0.31} &\textbf{-100.13\std{\textbf{0.06}}}& -115.83\std{0.82} & -120.61\std{1.41} & -99.34\std{0.40}\\
        TB + LP& -140.41\std{2.18} & -114.80\std{1.07} & -140.72\std{1.10} &-114.81\std{1.39} &-137.54\std{2.51} & -136.64\std{2.96} & -109.25\std{1.68}\\
        VarGrad + LP&-118.52\std{1.47} &-102.24\std{0.27} &-138.51\std{0.70} &-113.49\std{1.39} &-117.35\std{0.99} &-122.22\std{0.70}&\textbf{-99.01\std{\textbf{0.27}}}\\
        \bottomrule
    \end{tabular}
    }
\\[1em]
    \textbf{LGCP} ($d=1600$)\\
    \resizebox{1\linewidth}{!}{
    \begin{tabular}{@{}lccccg}
        \toprule
        Training discretization $\rightarrow$
        & \multicolumn{2}{c}{10-step random}
        & \multicolumn{2}{c}{10-step uniform} 
        & \multicolumn{1}{g}{100-step uniform} 
        \\
        \cmidrule(lr){2-3}\cmidrule(lr){4-5}\cmidrule(lr){6-6}
        Algorithm $\downarrow$ Evaluation steps $\rightarrow$
        & 10 
        & 100 
        & 10
        & 100
        & \multicolumn{1}{g}{100}
        \\
        \midrule
        PIS & -1471.16\std{6.83} & \textbf{-1467.85\std{\textbf{2.59}}} & -1471.49\std{11.66} & -1729.56\std{103.09} & \textbf{-1465.14\std{\textbf{20.76}}} \\
        TB & -1618.86\std{3.01} & -1617.35\std{1.34} & -1617.33\std{6.54} & -1666.37\std{13.78} & -1619.89\std{6.56} \\
        TB + LS & -1878.87\std{23.04} & -1880.52\std{13.07} & -1877.13\std{18.69} & -1705.60\std{36.86} &
        -1891.62\std{4.77} \\
        \midrule
        PIS + LP & 343.46\std{0.31} & 472.24\std{0.68} & 343.18\std{0.33} & -211.79\std{293.49} &  \textbf{473.74\std{\textbf{1.14}}} \\
        TB + LP& 332.16\std{0.42} & 461.53\std{1.16} &
        337.37\std{0.12} & -1931.42\std{2636.38} & 468.68\std{4.13} \\
        TB + LS + LP& 341.53\std{0.36} & \textbf{472.43\std{\textbf{0.42}}} & 341.65\std{0.16} & -77.64\std{77.72} & 451.89\std{3.28} \\
        \bottomrule
    \end{tabular}
    }
\end{table*}

\newpage\clearpage

\subsection{Additional figures}
\label{sec:more_figures}

\begin{figure}[h!]
    \includegraphics[width=0.49\linewidth]{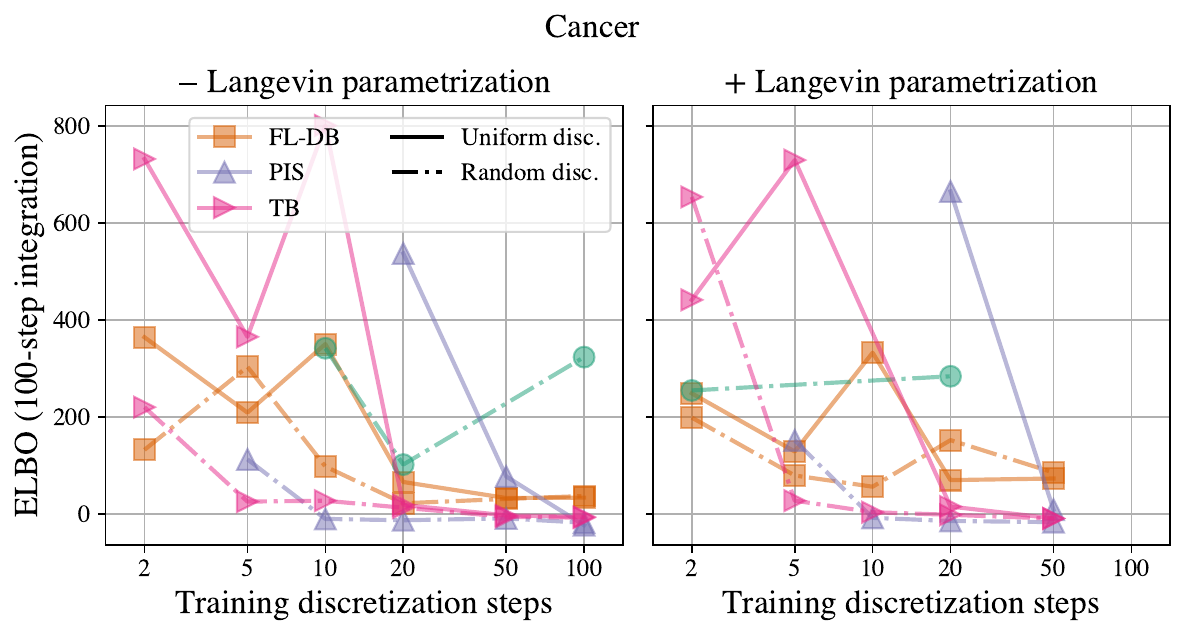}
    \hfill
    \includegraphics[width=0.49\linewidth]{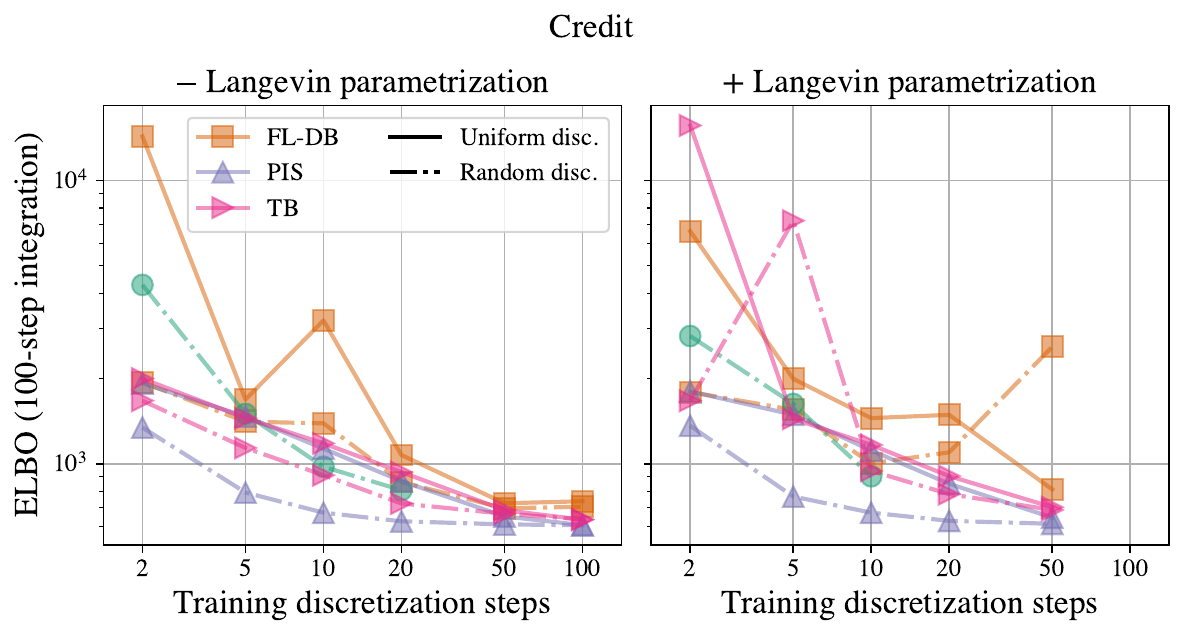}
    \vspace*{-0.5em}
    \caption{\label{fig:elbo_gaps_app}Results extending main text \Cref{fig:elbo_gaps}: Credit and Cancer densities.}
\end{figure}

\begin{figure}[h!]
    \centering
    \includegraphics[width=0.7\linewidth]{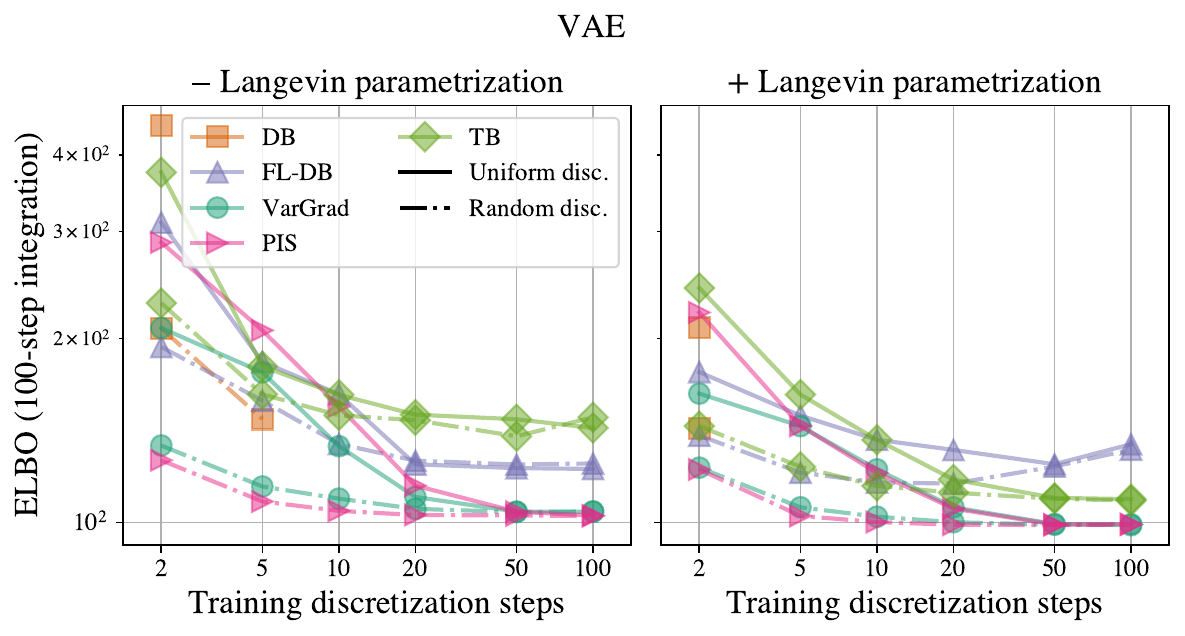}
    \vspace*{-0.5em}
    \caption{\label{fig:elbo_gaps_vae}Results extending main text \Cref{fig:elbo_gaps} on the conditional VAE target density.}
\end{figure}

\begin{figure}[h!]
    \includegraphics[width=0.49\linewidth]{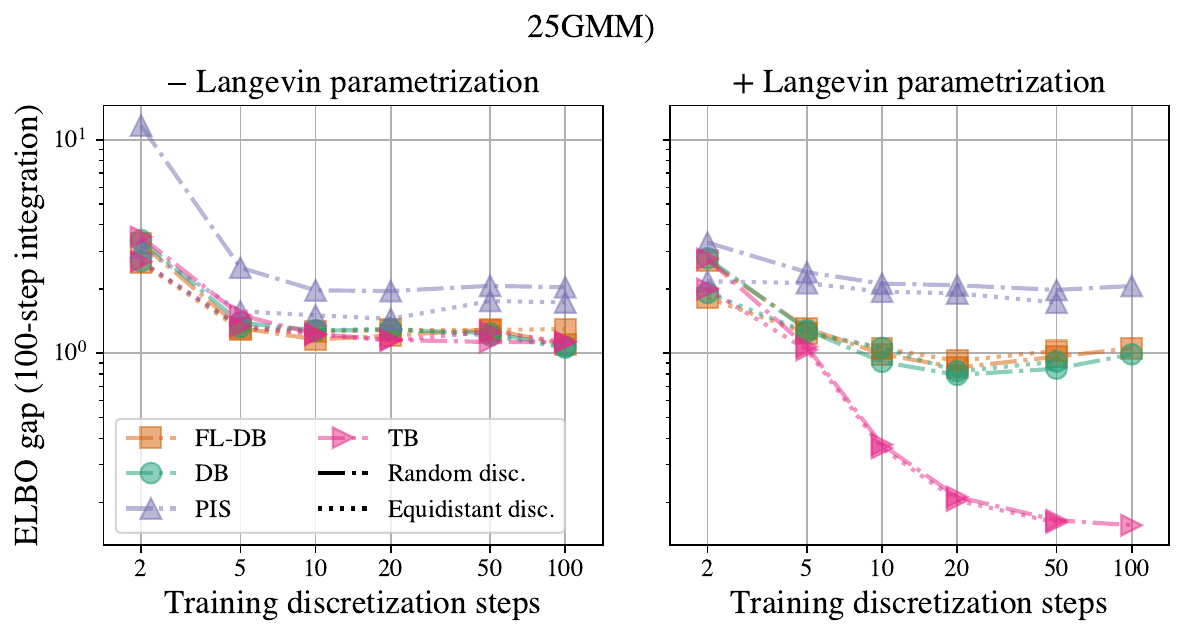}
    \hfill
    \includegraphics[width=0.49\linewidth]{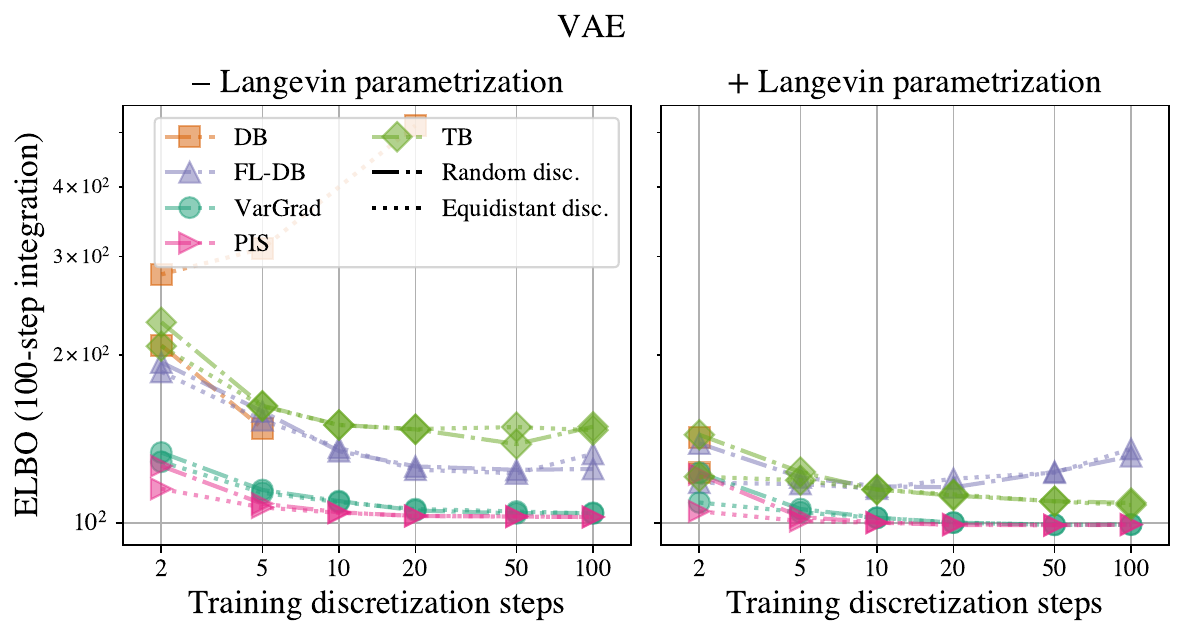}
    \vspace*{-0.25em}
    \caption{\label{fig:elbo_gaps_equidistant}Comparison of \textbf{Random} and \textbf{Equidistant} distretizations on the \textbf{25GMM} (unconditional) and \textbf{VAE} (conditional) targets.}
\end{figure}

\begin{figure}[h!]
    \includegraphics[width=0.49\linewidth]{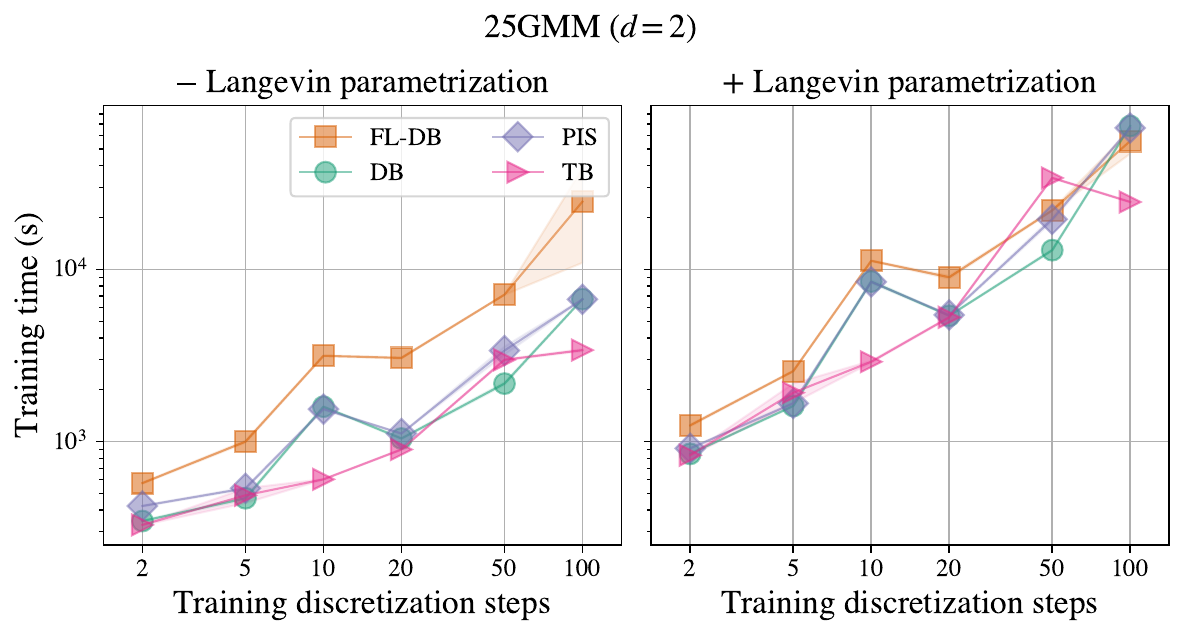}
    \hfill
    \includegraphics[width=0.49\linewidth]{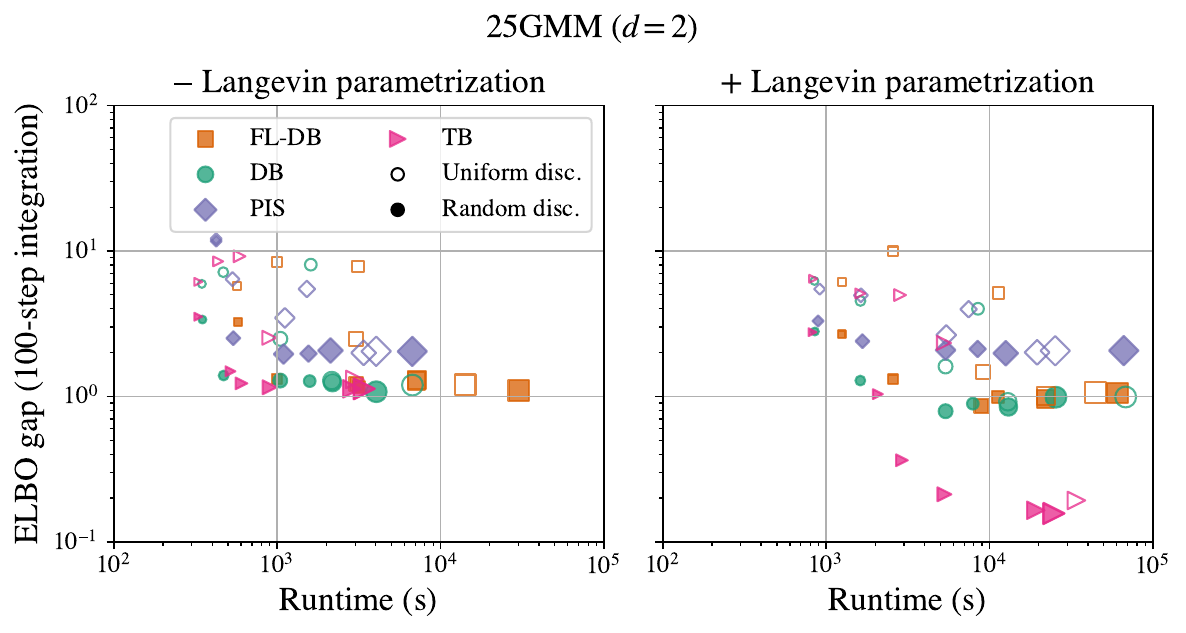}\\
    \includegraphics[width=0.49\linewidth]{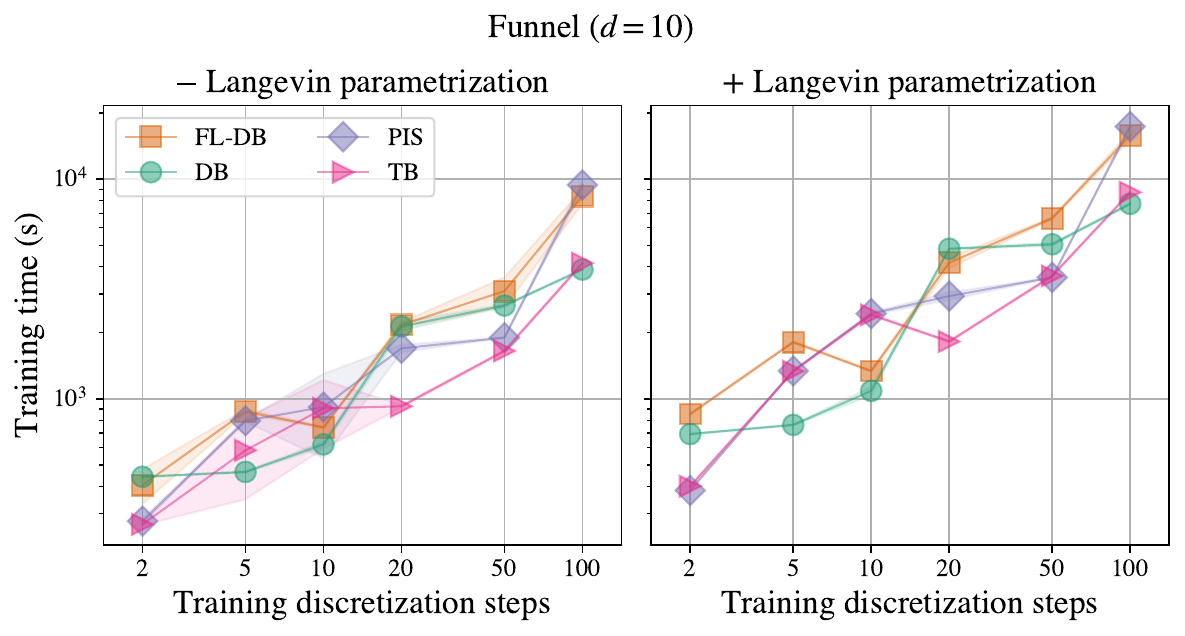}
    \hfill
    \includegraphics[width=0.49\linewidth]{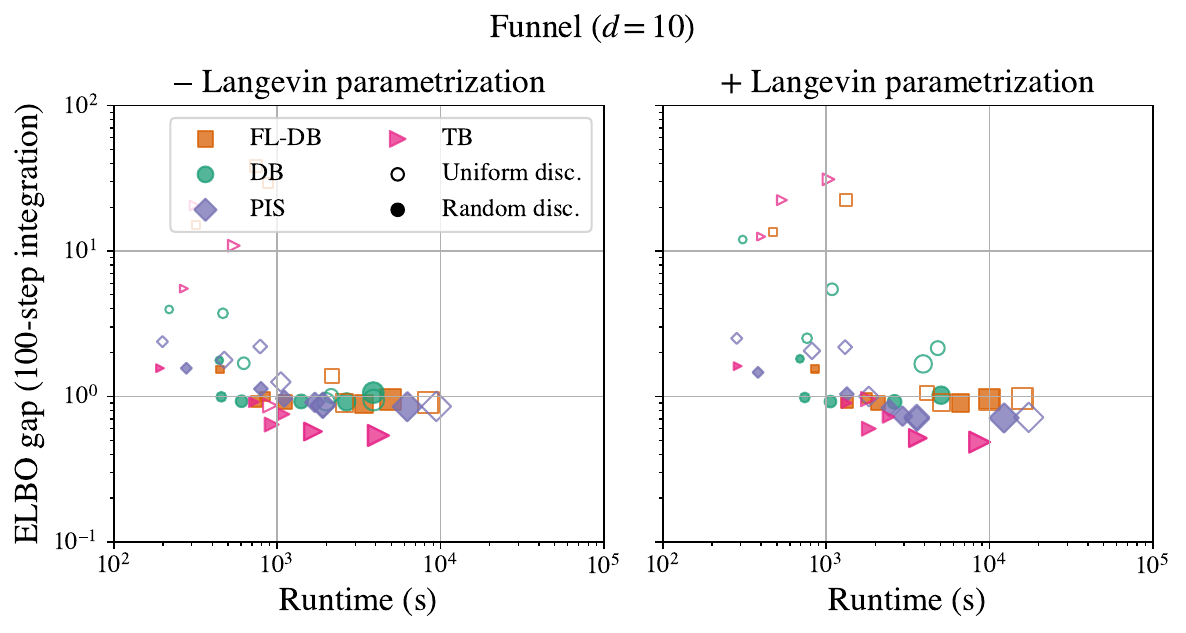}\\
    \vspace*{-1em}
    \caption{\label{fig:timing_app}Results extending main text \Cref{fig:timing_main}: Efficiency of nonuniform coarse discretizations on Funnel and 25GMM densities.}
\end{figure}

\begin{figure}[h!]
    \includegraphics[width=0.49\linewidth]{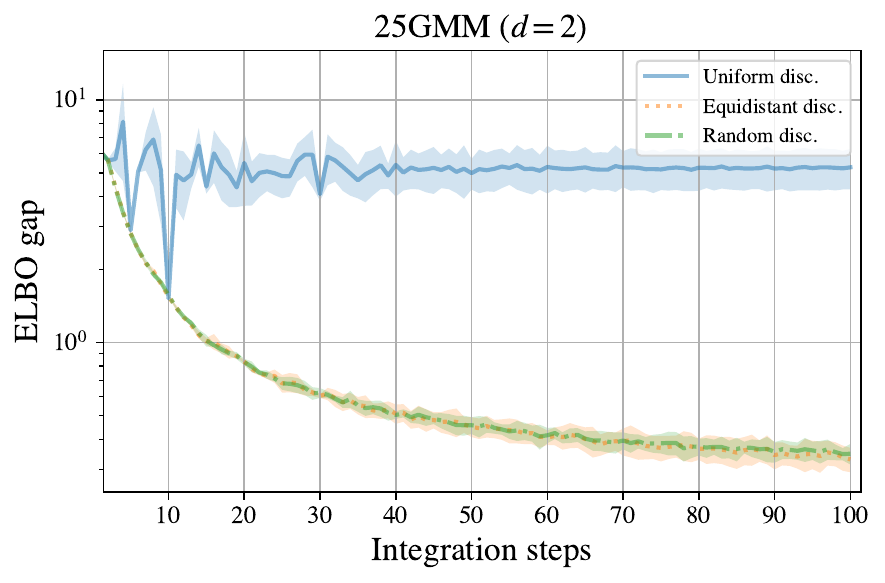}
    \hfill
    \includegraphics[width=0.49\linewidth]{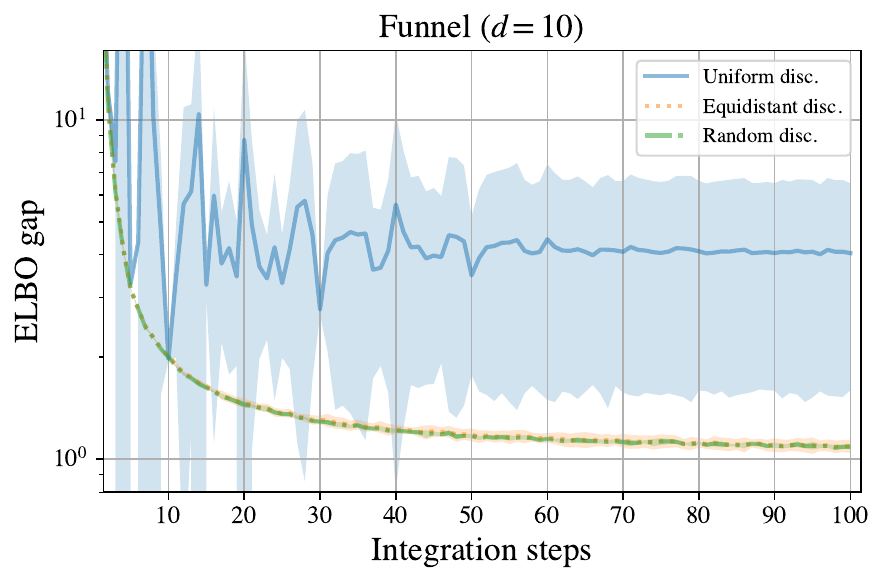}
    \caption{\label{fig:integration_app} Results extending main text \Cref{fig:integration_main}. Evaluation of models trained with $N_{\rm train}=10$ steps using varying numbers of integration steps.}
\end{figure}

\clearpage

\subsection{VAE reconstructions}
\label{sec:vae_reconstructions}

\begin{figure}[h]
    \begin{minipage}{0.49\linewidth}\centering
    MNIST data
    \end{minipage}
    \hfill
    \begin{minipage}{0.49\linewidth}\centering
    VAE reconstruction (pretrained encoder)
    \end{minipage}
    \\
    \includegraphics[width=0.49\linewidth,trim=60 170 60 170,clip]{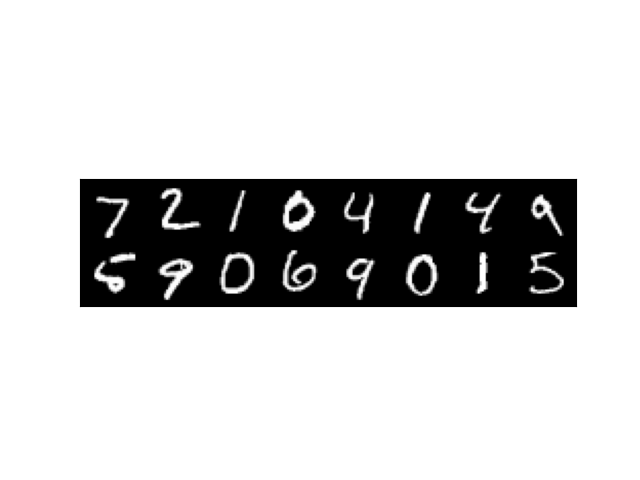}
    \hfill
    \includegraphics[width=0.49\linewidth,trim=60 170 60 170,clip]{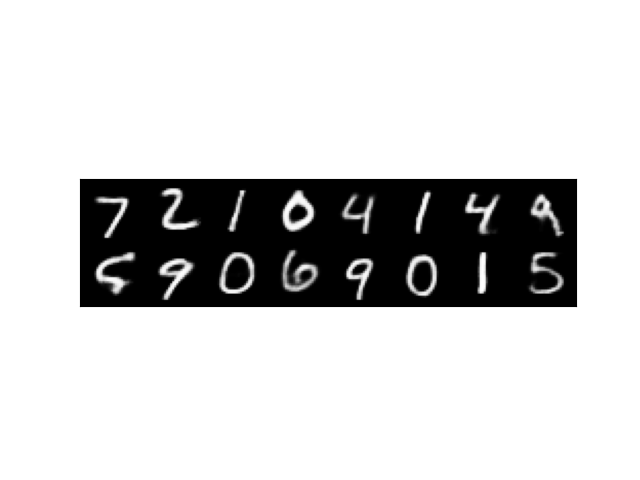}  
    \\\ \\
    \begin{minipage}{0.49\linewidth}\centering
    10-step \textbf{Uniform} training\\Reconstruction from 100-step integration
    \end{minipage}
    \hfill
    \begin{minipage}{0.49\linewidth}\centering
    10-step \textbf{Uniform} training\\Reconstruction from 10-step integration
    \end{minipage}
    \\
    \includegraphics[width=0.49\linewidth,trim=60 170 60 170,clip]{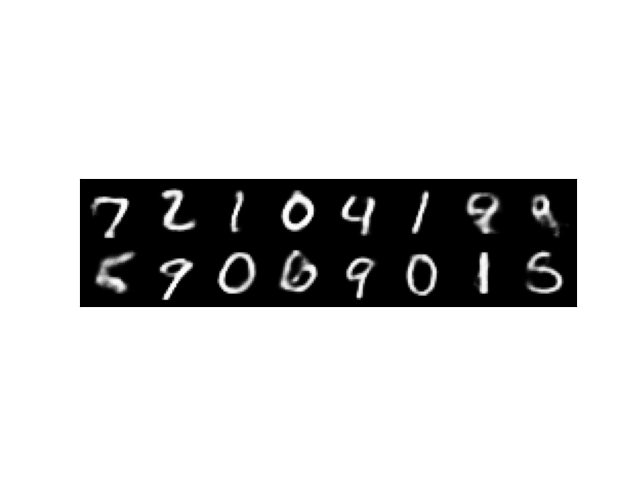}
    \hfill
    \includegraphics[width=0.49\linewidth,trim=60 170 60 170,clip]{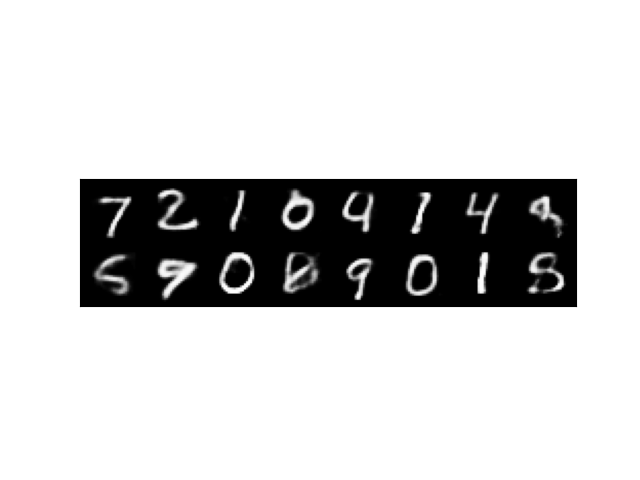}
    \\\ \\
    \begin{minipage}{0.49\linewidth}\centering
    10-step \textbf{Random} training\\Reconstruction from 100-step integration
    \end{minipage}
    \hfill
    \begin{minipage}{0.49\linewidth}\centering
    10-step \textbf{Random} training\\Reconstruction from 10-step integration
    \end{minipage}
    \\
    \includegraphics[width=0.49\linewidth,trim=60 170 60 170,clip]{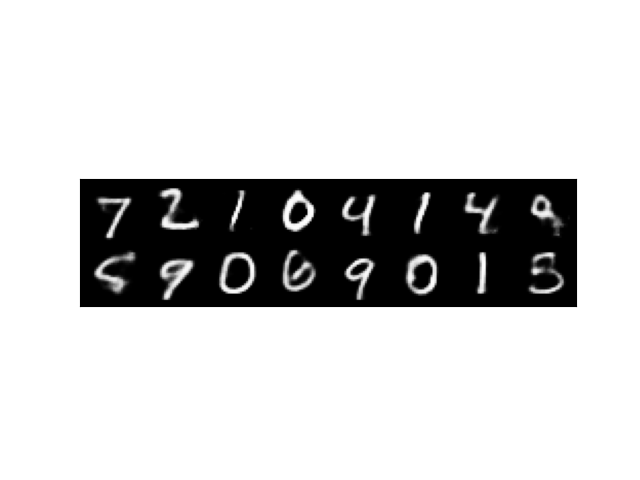}
    \hfill
    \includegraphics[width=0.49\linewidth,trim=60 170 60 170,clip]{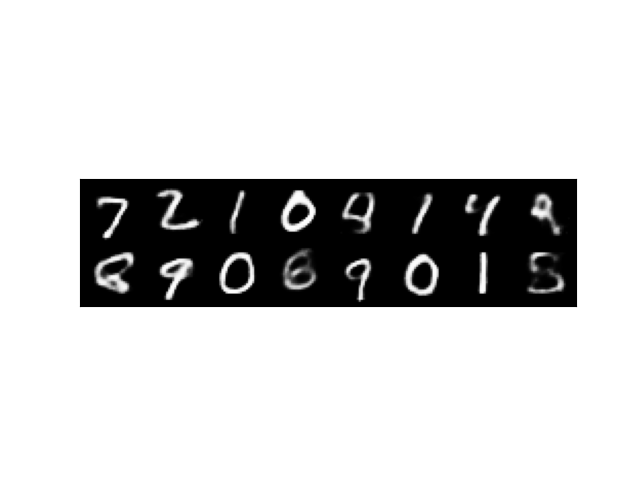}
    \\\ \\
    \begin{minipage}{0.49\linewidth}\centering
    10-step \textbf{Equidistant} training\\Reconstruction from 100-step integration
    \end{minipage}
    \hfill
    \begin{minipage}{0.49\linewidth}\centering
    10-step \textbf{Equidistant} training\\Reconstruction from 10-step integration
    \end{minipage}
    \\
    \includegraphics[width=0.49\linewidth,trim=60 170 60 170,clip]{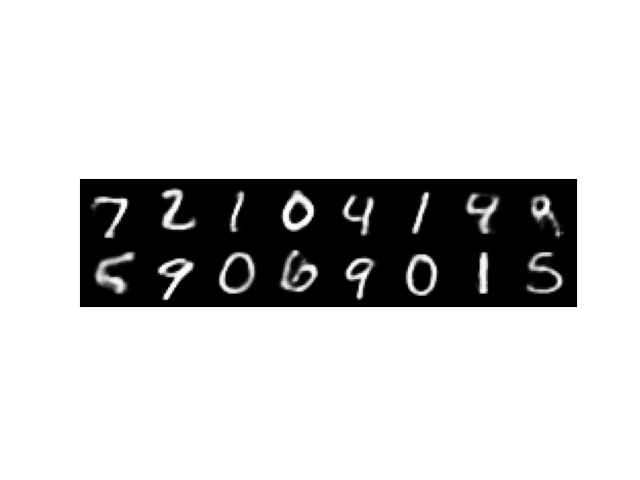}
    \hfill
    \includegraphics[width=0.49\linewidth,trim=60 170 60 170,clip]{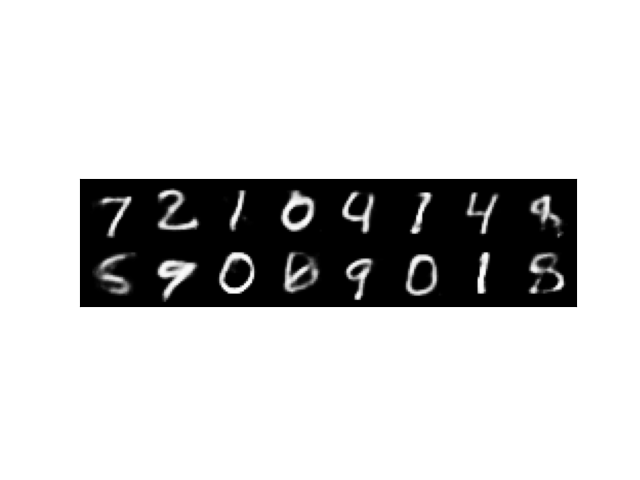}
    \vspace{0.5em}
    \caption{Mode of decoder $p(\cdot\mid z)$ evaluated on encoded latents $z$ for the VAE experiment: input data $x$ and reconstruction using $z$ sampled from the pretrained VAE encoder (top row) and reconstructions using $z$ sampled from diffusion encoders (next three rows).}
\end{figure}

% \newpage

% \begin{table*}
% \textbf{40GMM} ($d=50$)\\
%     \resizebox{1.2\linewidth}{!}{
%     \begin{tabular}{@{}lcccccggg}
%         \toprule
%         Training discretization $\rightarrow$
%         & \multicolumn{5}{c}{10-step random} 
%         & \multicolumn{3}{g}{100-step uniform} 
%         \\
%         \cmidrule(lr){2-6}\cmidrule(lr){7-9}
%         Evaluation steps $\rightarrow$
%         & \multicolumn{2}{c}{10} 
%         & \multicolumn{3}{c}{100} 
%         & \multicolumn{3}{g}{100}
%         \\
%         \cmidrule(lr){2-3}\cmidrule(lr){4-6}\cmidrule(lr){7-9}
%         Algorithm $\downarrow$ Metric $\rightarrow$ & $\Delta\log Z$ & $\Delta\log Z^{\rm RW}$ & $\Delta\log Z$ & $\Delta\log Z^{\rm RW}$ & $\gW_2^2$ & $\Delta\log Z$ & $\Delta\log Z^{\rm RW}$ & $\gW_2^2$\\
%         \midrule
%         PIS &  4.08\std{0.23} &  3.68\std{0.03} &  4.97\std{0.16} &  3.74\std{0.05} & 214.78\std{0.59} &  3.93\std{0.028} & 3.70\std{0.009} & 215.03\std{0.39}\\
%         TB &  4.25\std{0.66} &  3.69\std{0.06} &  4.69\std{0.75} &  3.73\std{0.04} & 215.29\std{0.36} &  3.87\std{0.18} & 3.70\std{0.009} & 216.13\std{0.73}\\
%         \midrule
%         PIS + LP&  4.18\std{0.22} &  3.68\std{0.03} &  5.70\std{0.16} &  3.82\std{0.08} & 214.84\std{0.67} &  3.99\std{0.09} & 3.69\std{0.02} & 214.90\std{0.30}\\
%         TB + LP&  5.37\std{2.08} &  3.71\std{0.06} & 6.33\std{2.82} &  3.53\std{0.23} & 215.03\std{0.14} &  3.86\std{0.24} & 3.70\std{0.01} & 215.48\std{0.82}\\
%         \bottomrule
%     \end{tabular}
%     }
% \end{table*}

\end{document}